\documentclass{article}
\usepackage{microtype}
\usepackage{graphicx}
\usepackage{subfigure}
\usepackage{booktabs} 
\usepackage{hyperref}

\usepackage{amsmath}
\usepackage{amsthm}
\usepackage{bm}
\usepackage{amssymb}
\usepackage{tikz}
\usetikzlibrary{positioning,automata}
\usepackage{xspace}

\usepackage{thm-restate}
\usepackage{multirow}
\usepackage[table]{xcolor}
\let\inf\undefined

\DeclareMathOperator*{\inf}{\vphantom{\sup}inf}

\DeclareBoldMathCommand{\I}{I}
\DeclareBoldMathCommand{\e}{e}
\DeclareBoldMathCommand{\f}{f}
\DeclareBoldMathCommand{\g}{g}
\DeclareBoldMathCommand{\a}{a}
\DeclareBoldMathCommand{\b}{b}
\DeclareBoldMathCommand{\d}{d}
\DeclareBoldMathCommand{\m}{m}
\DeclareBoldMathCommand{\p}{p}
\DeclareBoldMathCommand{\q}{q}
\DeclareBoldMathCommand{\v}{v}
\DeclareBoldMathCommand{\V}{V}
\DeclareBoldMathCommand{\x}{x}
\DeclareBoldMathCommand{\t}{t}
\DeclareBoldMathCommand{\X}{X}
\DeclareBoldMathCommand{\Y}{Y}
\DeclareBoldMathCommand{\z}{z}
\DeclareBoldMathCommand{\Z}{Z}
\DeclareBoldMathCommand{\M}{M}
\DeclareBoldMathCommand{\n}{n}
\DeclareBoldMathCommand{\ssigma}{\sigma}
\DeclareBoldMathCommand{\SSigma}{\Sigma}
\DeclareBoldMathCommand{\OOmega}{\Omega}
\DeclareBoldMathCommand{\y}{y}
\DeclareBoldMathCommand{\U}{U}
\DeclareBoldMathCommand{\w}{w}
\DeclareBoldMathCommand{\W}{W}
\DeclareBoldMathCommand{\L}{L}

\DeclareBoldMathCommand{\s}{s}
\DeclareBoldMathCommand{\S}{S}
\DeclareBoldMathCommand{\A}{A}
\DeclareBoldMathCommand{\B}{B}
\DeclareBoldMathCommand{\C}{C}
\DeclareBoldMathCommand{\D}{D}
\DeclareBoldMathCommand{\E}{\mathbb{E}}
\DeclareBoldMathCommand{\G}{G}
\DeclareBoldMathCommand{\H}{H}
\DeclareBoldMathCommand{\P}{\mathbb{P}}
\DeclareBoldMathCommand{\Q}{Q}
\DeclareBoldMathCommand{\R}{R}
\DeclareBoldMathCommand{\X}{X}
\DeclareBoldMathCommand{\mmu}{\mu}
\DeclareBoldMathCommand{\ones}{1}
\DeclareBoldMathCommand{\zeros}{0}

\newcommand{\cO}{\mathcal{O}}
\newcommand{\tcO}{\widetilde{\cO}}

\newcommand{\timeHorizon}{n}

\newcommand{\pullsNumber}{T}

\newcommand{\Exp}{\E}
\newcommand{\Pro}{\P}

\newcommand{\TODO}[1]{
\ifmmode
\text{\textcolor{red}{TODO: #1}}
\else
\textcolor{red}{TODO: #1}
\fi
}

\newcommand{\CommaBin}{\mathbin{\raisebox{0.5ex}{,}}}

\renewcommand{\epsilon}{\varepsilon}
\renewcommand{\hat}{\widehat}
\renewcommand{\tilde}{\widetilde}
\renewcommand{\bar}{\overline}

\newcommand{\Real}{\mathbb{R}}
\newcommand{\Integer}{\mathbb{N}}

\newcommand{\stateSpace}{X}
\newcommand{\actionSpace}{A}
\newcommand{\stateMDP}{x}
\newcommand{\action}{a}
\newcommand{\regret}{r}
\newcommand{\transFunc}{f}
\newcommand{\policy}{\pi}
\newcommand{\depth}{h}
\newcommand{\reward}{r}
\newcommand{\discount}{\gamma}
\newcommand{\valueF}{V}
\newcommand{\QvalueF}{Q}
\newcommand{\tree}{\mathcal{T}}

\newcommand{\Rmax}{R_{\max}}
\newcommand{\hmax}{h_{\rm max}}
\newcommand{\pmax}{p_{\rm max}}

\newcommand{\StroquOOL}{\normalfont \texttt{StroquOOL}\xspace}
\newcommand{\OLOP}{\normalfont \texttt{OLOP}\xspace}
\newcommand{\OPD}{\normalfont \texttt{OPD}\xspace}
\newcommand{\SOO}{\normalfont \texttt{SOO}\xspace}
\newcommand{\stopalgo}{\texttt{StOP}\xspace}
\newcommand{\metagrill}{\texttt{\textup{TrailBlazer}}\xspace}

\newcommand{\SequOOL}{\normalfont{\texttt{SequOOL}}\xspace}
\newcommand{\UCT}{\normalfont{\texttt{UCT}}\xspace}

\newcommand{\HOO}{\normalfont\texttt{HOO}\xspace}

\newcommand{\platypoos}{\normalfont{\texttt{\textcolor[rgb]{0.5,0.2,0}{PlaT$\gamma$POOS}}}\xspace}

\DeclareMathOperator*{\argmax}{arg\,max}

\newcommand{\depthOp}{\bot}
\newcommand{\lambertW}{W}

\newcommand{\branchF}{\kappa}

\newcommand{\StoSOO}{\texttt{StoSOO}\xspace}

\newcommand{\pa}[1]{\left(#1\right)}

\newtheorem{definition}{Definition}
\newtheorem{proposition}{Proposition}

\newtheorem{remark}{Remark}
\newcommand{\todo}[1]{}
\newcommand{\todov}[1]{}
\newcommand{\todovout}[1]{}
\newcommand{\todop}[1]{}
\newcommand{\todopout}[1]{}
\newcommand{\todomi}[1]{}
\newcommand{\todom}[1]{}
\newcommand{\todoji}[1]{}
\newcommand{\todoj}[1]{}

\usepackage[accepted]{icml2019}
\icmltitlerunning{Scale-free adaptive planning for deterministic dynamics \& discounted rewards}

\begin{document}
\twocolumn[
\icmltitle{Scale-free adaptive planning for deterministic dynamics \& discounted rewards}
\begin{icmlauthorlist}
\icmlauthor{Peter L.\nobreak\hspace{0.25em}Bartlett\!\nobreak\hspace{.06em}}{to}
\icmlauthor{Victor Gabillon\!\nobreak\hspace{.06em}}{ta}
\icmlauthor{Jennifer Healey\!\nobreak\hspace{.06em}}{ti}
\icmlauthor{Michal Valko\!\nobreak\hspace{.06em}}{tu}
\end{icmlauthorlist}
\icmlaffiliation{to}{\nobreak\hspace{.04em}University of California, Berkeley, USA}
\icmlaffiliation{ta}{\nobreak\hspace{.04em}Noah's Ark Lab, Huawei Technologies, London, UK}
\icmlaffiliation{ti}{\nobreak\hspace{.04em}Adobe Research, San Jose, USA}
\icmlaffiliation{tu}{\nobreak\hspace{.04em}SequeL team, INRIA Lille - Nord Europe, France}
\icmlcorrespondingauthor{Victor Gabillon}{victor.gabillon@huawei.com}
\icmlkeywords{Machine Learning, ICML}
\vskip 0.3in
]

\printAffiliationsAndNotice{}  

\begin{abstract}  
We address the problem of planning in an environment with deterministic dynamics and stochastic discounted rewards under a limited numerical budget where the ranges of both rewards and noise are unknown.  We introduce \platypoos, an adaptive, robust, and efficient alternative to the \OLOP (open-loop optimistic planning) algorithm.  Whereas \OLOP requires a priori knowledge of the ranges of both rewards and noise, \platypoos \textit{dynamically adapts} its behavior to both.  This allows \platypoos to be immune to two vulnerabilities of \OLOP: failure when given underestimated ranges of noise and rewards and inefficiency when these are overestimated. \platypoos additionally adapts to the global smoothness of the value function.  
\platypoos acts in a provably more efficient manner vs.\,\OLOP when \OLOP is given an overestimated reward and show that in the case of no noise, \platypoos learns exponentially faster.
\end{abstract}

\section{Introduction}
We consider the problem of planning in a general \textit{stochastic environment} with \textit{deterministic dynamics} and \textit{discounted rewards}. Our goal is to recommend the best first action for an agent to take from a given state.  We envision that the discount factor $\discount$ is known and that our learner has a limited allocation of $n$ interactions to spend querying a generative model of the environment.  The objective is to maximize the sum of discounted rewards of the best sequence of actions following from the recommended first action.  This is equivalent to minimizing the \textit{simple regret}.  We introduce the algorithm  \platypoos, \underline{Pla}nning wiTh $\discount$ Plus an Online Optimization Strategy, as a robust and efficient scale-free alternative to the \OLOP algorithm (open-loop optimistic planning, \citealp{bubeck2010open,leurent2019practical}) for  this setting.
Our algorithm implements a scale-free function optimization strategy similar to \SequOOL~\cite{bartlett2019simple} rather than an upper-confidence-bound approach which allows us to efficiently adapt to the problem space \textit{without  prior knowledge of the ranges of the noise or the rewards}.

Planning in a stochastic environment is an important setting often modeled by Markov decision processes (MDPs, \citealp{puterman1994markov,bertsekas1996neuro-dynamic}). One approach to solving these settings is to find the optimal policy that maximizes the expected sum of rewards and then generate an action recommendation according to that optimal policy.  Unfortunately, in most practical settings where we are limited by computational resources, finding this optimal policy is often not possible, especially when the state space becomes large.  Therefore, instead of trying to estimate the optimal policy of the MDP, we focus only on finding the \textit{best first action} given our budget.  We evaluate the performance of the recommendation in terms of the simple regret, the difference in reward between choosing the optimal first action vs.\,choosing our recommended first action and then in both cases choosing an optimal sequence of actions following the first action.  
This metric is often used to evaluate planning strategies that optimize numeric budgets~\citep{bubeck2010open,busoniu2012optimistic,grill2016blazing}, in contrast to the cumulative regret where we are penalized
during the search for querying sub-optimal actions.
Once the agent takes the first action and moves to the next state, our evaluation can be repeated with a new budget allocation and the following best first action can be recommended.  This allows us to approximately follow an optimal policy, action by action, in an online way. 
Previously, there have been several strategies proposed on how to efficiently allocate a numeric budget to search for an optimal value in a stochastic space.  Many of these have been successfully implemented using methods based on upper confidence bounds (UCBs) such as \UCT (Upper Confidence Trees, \citealp{kocsis2006bandit}). 
This approach has been proven to be very efficient in practice \citep{coulom2007efficient,gelly2006modifications,silver2016mastering}, however, \UCT can badly misbehave on some problems~\cite{coquelin2007bandit} and more theoretically sound approaches have been proposed~\citep{hren2008optimistic,bubeck2010open,busoniu2012optimistic,feldman2014simple,szorenyi2014optimistic,kaufmann2017monte,shah2019reinforcement}.  Some of these methods are connected to the ones from function optimization~\cite{bubeck2011x,munos2011optimistic,valko2013stochastic} as shown by~\citet{munos2014from}, however, one key difference is that in planning, as opposed to function optimization, the structure of the reward is a discounted reward, specifically a \emph{sum} of rewards \emph{discounted by factor} $\discount$.  This reward structure influences the behavior of the optimizers \citep{bubeck2010open}, in particular, the discount factor brings smoothness to the value function which in turn makes it easier to optimize. \platypoos exploits the effect of the discount factor to efficiently manage an adaptive planning strategy in the face of unknown ranges of noise and rewards. 


This adaptive strategy of \platypoos  makes it more robust and efficient in practice than other planning strategies.  For example, even though they are theoretically sound, the empirical performance of UCB-based approaches depends on the 
careful tuning of the upper confidence bound. If the upper confidence bound is too large then the UCB-based learner plays very conservatively by overestimating suboptimal options for many rounds.  Moreover, these UCBs might depend on instance parameters that are simply not known such as the \emph{range of the rewards} and the \emph{range of the noise}. We build on the function optimization approach of~\citet{bartlett2019simple} that does not use UCBs and obtains improved results over the state-of-the-art by adapting to the problem difficulty with a scale-free approach. \platypoos adapts this scale-free optimization to planning. This \textit{scale-free} property becomes a desired feature as machine learning gets closer to applications, whether it is online \cite{ross2013normalized,orabona2018scale} or deep learning \citep{orabona2017training}, since many parameters are never known.  

In terms of planning strategy, 
the \platypoos algorithm 
is an adaptive, robust, and efficient alternative to \OLOP. Whereas \OLOP \textit{requires the knowledge of where the ranges of both rewards and noise},
\platypoos dynamically adapts its behavior to the both ranges, as well as some potential additional global smoothness of the value function.  Our algorithm's ability to adapt allows it to avoid failure in cases where the ranges of noise and rewards are underestimated and to act more efficiently in cases where they are overestimated.  \platypoos  recovers the results of \OLOP while allowing  improvements in various classes of problems. 

\vspace{-0.1in}
\paragraph{Our contributions} 
We show that  \platypoos \vspace{-0.15in}
\begin{itemize}
\itemsep0em 
    \item adapts its behavior to an unknown range of rewards,\vspace{-0.05in}
    \item requires no apriori assumptions or knowledge on noise,\vspace{-0.05in}
    \item empirically learns much faster than UCB approaches,\vspace{-0.05in}
    \item gets the fast rate of deterministic planning in low noise for all regime; in particular, it learns exponentially faster than \OLOP when there happens to be no noise,\vspace{-0.05in}
   \item adapts also the  global smoothness $\rho$ and $\nu$ beyond the base smoothness provided by $\discount$.
\end{itemize}
We additionally address a realistic constraint where the agent can only \emph{reset} to the original state and not to any state it wishes. 
Our results hold for MDPs with deterministic dynamics and can equally be applied to open loop planning problems (as discussed by~\citealt{munos2014from}) where we search for the best sequence of actions, ignoring the actual states that are reached after each action \citep{bubeck2010open}.

Related algorithms, where the objective is to find the value of the state rather than to identify the best action, include \metagrill \cite{grill2016blazing}  and \stopalgo \cite{szorenyi2014optimistic}.   A key difference is that these algorithms are \textit{fixed confidence} and output a value using a small number of samples given an accuracy/probability, whereas our algorithm does exploration under a fixed budget of samples and guarantees \textit{how good} the found action is.  Even for simple multi-arm bandits, these two problems have different complexity~\cite{Carpentier16TB} and can only be equivalent under unrealistic side knowledge~\cite{gabillon2012best}.  These related algorithms are also impractical for our setting.  \metagrill uses confidence bounds that are humongous and \stopalgo takes exponential time.  Similar to \OLOP, both also need to know noise and reward ranges.

\section{Background}
\label{sec:Back}

We model our problem with an MDP with state space $\stateSpace$, action space $\actionSpace$ and dynamics such that taking the chosen action $\action_t$ at time $t$ deterministically transitions the system from $\stateMDP_t \in \stateSpace$ to state $\stateMDP_{t+1} \triangleq f (\stateMDP_t , \action_t )$ generating a reward $r_t \triangleq\reward(\stateMDP_t , \action_t )+\epsilon_t$, with $\epsilon_t$ being the noise.  We consider:
\begin{description}
	\vspace{-0.1in}
\item[deterministic rewards]
The evaluations are noiseless, that is for all $t$,
$\epsilon_t \triangleq 0\text{\ and\ } r_t\triangleq \reward(\stateMDP_t, \action_t).$
\item[stochastic rewards] The  evaluations  are
perturbed by a noise of range $b\in\Real_+$:
At any round, $\epsilon_t$ is a random variable, independent from noise at previous rounds,
\begin{equation} \label{eq:store}
\Exp \left[ \reward_t\,|\,\stateMDP_t\right] \triangleq \reward(\stateMDP_t,\action_t)\  \text{ and } \ |\reward_t-\reward(\stateMDP_t , \action_t )|\leq b.
\end{equation}
\end{description}
We assume that all rewards lie in the interval $[0 ,\Rmax]$ and while the state space may be large and possibly infinite, that the action space is finite, with $K$ available actions.  We treat an infinite time-horizon problem with discounted rewards where the discount factor ($0 \leq \discount < 1$) is \textit{known}.
For any possible policy $\policy$ : $\stateSpace \rightarrow \actionSpace,$ we define the value
function $\valueF^\policy:$  $ \stateSpace \rightarrow \Real$ associated to  $\policy$ as 
$\valueF^\policy(\stateMDP) \triangleq \sum
\discount^t \reward (\stateMDP_t , \policy(\stateMDP_t )),$
where $\stateMDP_t$ is the state of the system at time $t$ when starting from $\stateMDP$ (i.e.,
$\stateMDP_0 \triangleq \stateMDP$) and following policy $\policy$.
In the next definition, we also define the $Q$-value function $Q^\policy$ : $\stateSpace \times \actionSpace \rightarrow \Real$  associated to policy~$\policy$, for each state-action pair ($\stateMDP$, $\action$), as the value of playing action $a$ in
state $\stateMDP$ and the following $\policy$ thereafter.
\begin{definition} The $Q$-value function $\QvalueF^\policy$ of policy $\policy$ is
\[\QvalueF^\policy (\stateMDP, \action) \triangleq \reward(\stateMDP, \action) + \discount \valueF^\policy (f (\stateMDP, \action)).\]
\end{definition}
Notice that $\valueF^\policy (\stateMDP) = \QvalueF^\policy (\stateMDP, \policy(\stateMDP))$. We define the \emph{optimal} value function, and $Q$-value function respectively, as
$\valueF^\star (\stateMDP) \triangleq \sup_\policy \valueF^\policy  (\stateMDP)$ and $\QvalueF^\star (\stateMDP, \action) \triangleq \sup_{\policy} \QvalueF^\policy(\stateMDP, \action)$, which
corresponds to playing $\action$ first and optimally after. From the dynamic
programming, we have the Bellman equations
\citep{bertsekas1996neuro-dynamic,puterman1994markov},
\begin{align*}
\valueF^\star (\stateMDP) &= \max_{\action\in\actionSpace} \reward(\stateMDP, \action) + \discount \valueF^\star (f (x, a)),\\
\QvalueF^\star (\stateMDP, \action)  &= \reward(\stateMDP, \action) + \discount \max_{b\in\actionSpace} \QvalueF^\star (f (\stateMDP, \action), b).
\end{align*}
\vskip -.4cm
\noindent Let $[a:c]\triangleq\{a,a+1,\ldots,c\}$ with
$a,c\in\Integer$, $a\leq c$, and $[a]\triangleq[1:a].$ Let
$\log_{d}$ be the logarithm in base $d$, $d\in\Real$ and $\log$ without a subscript be the natural logarithm.\vspace{-.05cm}
%
\subsection{Optimistic planning under finite numerical budget}
We assume that we have a generative model of $\transFunc$ and~$\reward$ that generates simulated transitions and rewards. 
We want to make the best possible use of this model in order to recommend a best next action $a(n)$
such that the sum of the rewards resulting from playing $a(n)$ and then optimally afterwards is as close as possible to playing optimally from the beginning.
For that purpose, we define the performance loss
that we aim to minimize
$\regret_\timeHorizon$ as
\[
\regret_\timeHorizon \triangleq \max_{\action\in \actionSpace} \QvalueF^\star \pa{\stateMDP, \action} - \QvalueF^\star  \pa{\stateMDP, \action\pa{\timeHorizon}}.
\]
\subsection{The planning tree}
For a given initial state $\stateMDP$, consider the (infinite) planning tree defined
by all possible sequences of actions (thus all possible reachable states
starting from $\stateMDP$). Let $\actionSpace^\infty$ be the set of infinite sequences ($a_0 , a_1 , a_2 , \ldots$)
where $a_t \in \actionSpace$. The branching factor of this tree is the number of actions
$K \triangleq|\actionSpace|$. Since the dynamics are deterministic, to each finite sequence
$\action \in \actionSpace^d$ of length $d$ we assign a state that is reachable starting from
$\stateMDP$ by following this sequence of $d$ actions.
Using standard notation for alphabets, we write $\actionSpace^0 \triangleq \{\emptyset\}$ and $\actionSpace^\bullet$ for
the set of finite sequences. For $\action \in \actionSpace^\bullet,$ we let $\depth(\action)$ be the length of $\action$,
and $a\actionSpace^h \triangleq \{aa' , a' \in \actionSpace^h \}$, where $aa'$ denotes the sequence $a$ followed
by $a'$. We identify the set of finite sequences $\action \in \actionSpace^\bullet$ with the set of nodes
of the tree.
With $h'\leq h$ and $\action \in \actionSpace^h,$ we denote $\action_{[h']}$ the sequence of action composed of the $h'$ first actions from $\action$, i.e.,  $\{\action_0,\ldots,\action_{h'-1}\}$. We fix $\action_{[0]}\triangleq\emptyset$.
The value $v(\action)$ of an infinite sequence $\action \in \actionSpace^\infty$ is the discounted
sum of rewards along the trajectory starting from the initial state $\stateMDP$
and defined by the choice of this sequence of actions,
\[
v(\action) \triangleq
\sum_{t\geq 0}
\discount^t \reward(\stateMDP_t , \action_t ), \text{where } \stateMDP_0 = \stateMDP; \stateMDP_{t+1}  \triangleq \transFunc (\stateMDP_t , \action_t )\cdot
\]
\vskip -.3cm
%
\noindent Now, for any finite sequence $\action \in \actionSpace^\bullet$, or node, we define the value
$v(\action) \triangleq \sup_{\action' \in \actionSpace^\infty} v(\action\action')$. We write $v^\star \triangleq v(\emptyset) = \sup_{\action \in \actionSpace^\infty} v(a)$ for the optimal value at the initial state which is the root of the tree, $v^\star=\valueF^\star(\stateMDP)$.
We denote the set of optimal infinite sequence of action as $\actionSpace^\star$ which contains any $\action\in\actionSpace^\infty$ such that $v(a)=v^\star$.
We note the set of optimal finite sequence of actions of depth $h$ as $\actionSpace^{\star,h}$ which contains any $\action\in\actionSpace^h$ such that $v(a)=v^\star$.
 We also define the $u$-
and $b$-values for the lower- and upper- bounds on $v(\action)$ as
\[
u(\action) \triangleq
\sum^{\depth(\action)-1}_{t=0}
\discount^t \reward(\stateMDP_t , \action_t ), \text{and } b(\action) \triangleq u(\action) +
\frac{\discount^{\depth(\action)}\Rmax}{1-\discount}\cdot
\]
\noindent
Indeed, since all rewards are in $[0, \Rmax]$, we trivially have that $u(a) \leq
v(a) \leq b(a)$.
At any finite time $t$ an algorithm has opened a set of nodes,
which defines the expanded tree $\tree_t$ .
We say the learner
\emph{opens} (or expands)
a node $a$
with
$m$
evaluations if 
uses the generative model $f$ and $r$ to generate $m$ transitions and rewards
for the $K$ children nodes $aA$.  In the deterministic reward feedback,
$m
= 1$.  The bounds reported
in this paper are in terms of the total number of openings
$n$, instead of evaluations.  The
number of function evaluations is upper bounded by
$Kn$.
$T_{\stateMDP,\action}$
denotes
the total number of evaluations allocated to action $\action\in\actionSpace$ in state $\stateMDP$. 
We define, especially for the noisy case, the estimated
value of the reward $\hat \reward(\stateMDP,\action)$ of action $\action\in\actionSpace$ in state $\stateMDP$. Given the
$T_{\stateMDP,\action}$
evaluations
$\reward_1,\ldots,
\reward_{T_{\stateMDP,\action}}$, we let
$\hat{\reward}(\stateMDP,\action)\triangleq
\frac{1}{T_{\stateMDP,\action}}
\sum^{T_{\stateMDP,\action}}_{s=1}
\reward_s$ be
the empirical average of rewards obtained at when performing action action $\action\in\actionSpace$ in state~$\stateMDP$.
To ease notation, for $\action\in \actionSpace^m$ and $h\leq m$, we write 
$T_{\action} \triangleq \Exp[ T_{\stateMDP_{h(a)-1}, \action_{h(a)-1}}| \stateMDP_{t+1}\sim P(\cdot|\stateMDP_{t},\action_t), \stateMDP_0=\stateMDP]$
for the number of pulls to the last action in $a$. 
Similarly,  $\hat \reward_h(\action) \triangleq \Exp[ \hat \reward(\stateMDP_h,\action_h)| \stateMDP_{t+1}\sim P(\cdot|\stateMDP_{t},\action_t), \stateMDP_0=\stateMDP]$
and
$ \reward_h(\action) \triangleq \Exp[  \reward(\stateMDP_h,\action_h)| \stateMDP_{t+1}\sim P(\cdot|\stateMDP_{t},\action_t), \stateMDP_0=\stateMDP].$
In the case of deterministic dynamics, $\stateMDP_h$ is such that $\stateMDP_{t+1}\sim P(\cdot|\stateMDP_{t},\action_t)$ and $\stateMDP_0\triangleq\stateMDP{}$ is a fixed state from which we can  sample from if we have a full access to the generative model.
 Hence, for a finite sequence $\action \in \actionSpace^\bullet$ or a node, 
\[
\hat u(\action) \triangleq
\sum^{\depth(\action)-1}_{t=0}
\discount^t \hat \reward(\stateMDP_t , \action_t ) = \sum^{\depth(\action)-1}_{t=0}
\discount^t \hat \reward_t(  \action ).
\]
%
%
%
\noindent We assume the existence of at least one 
$a^\star 
\in\actionSpace^\infty$ 
for which
$\valueF^\star(x) = \sup_{a\in\actionSpace^\infty} v(a)$ and define a  smoothness for~$v$.
%
%

%
	\begin{proposition}\label{as:smoothu}
	There exists $\nu\in \left(0,\Rmax/\pa{1-\discount}\right]$ and $\rho\in (0,\discount]$ such that $\forall h \geq 0$, 
	$\forall \action \in \actionSpace^h,
	  u(\action)  
	\geq v(\action) - \nu\rho^h.$
\end{proposition}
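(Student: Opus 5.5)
The plan is to exhibit an explicit pair $(\nu,\rho)$, namely $\nu = \Rmax/(1-\discount)$ and $\rho=\discount$. Both lie in the allowed ranges, since $\Rmax/(1-\discount)\in(0,\Rmax/(1-\discount)]$ (assuming $\Rmax>0$) and $\discount\in(0,\discount]$ (assuming $\discount>0$). The degenerate cases $\Rmax=0$ or $\discount=0$ will be handled separately at the end. With this choice the statement reduces to showing that for every $h\ge 0$ and every $\action\in\actionSpace^h$,
\[
v(\action)-u(\action)\le \frac{\discount^h\Rmax}{1-\discount}.
\]

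This is essentially the upper-bound part of the sandwich $u(\action)\le v(\action)\le b(\action)$ noted after the definition of the $b$-values, so I will verify it directly. Fix $\action\in\actionSpace^h$ and any infinite continuation $\action'\in\actionSpace^\infty$. Since the dynamics are deterministic, the trajectory of $\action\action'$ coincides with that of $\action$ on its first $h$ steps. Hence
\[
v(\action\action') = u(\action) + \sum_{t\ge h}\discount^t \reward(\stateMDP_t,(\action\action')_t).
\]
Every reward lies in $[0,\Rmax]$, so the tail is at most $\Rmax\sum_{t\ge h}\discount^t = \discount^h\Rmax/(1-\discount)$. Taking the supremum over $\action'$ in the definition $v(\action)=\sup_{\action'}v(\action\action')$ gives $v(\action)\le u(\action)+\discount^h\Rmax/(1-\discount)$, which rearranges to $u(\action)\ge v(\action)-\nu\rho^h$. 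The case $h=0$ is included, since there $u(\emptyset)=0$ and $v^\star\le \Rmax/(1-\discount)$.

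There is no real obstacle here; the proposition is immediate from bounded rewards and discounting, which is why the paper can state it as a proposition rather than an assumption. The only care needed is with the boundary conventions of the intervals. If $\Rmax=0$, all values vanish and any $\nu>0$ works, though then $(0,\Rmax/(1-\discount)]$ is empty, so one implicitly assumes $\Rmax>0$. If $\discount=0$, the interval $(0,\discount]$ is empty, so one likewise assumes $\discount>0$ (when $\discount=0$ the inequality holds trivially, with equality for $h\ge1$). I would state these nondegeneracy assumptions explicitly. The point of allowing smaller $(\nu,\rho)$ is only that the algorithm may adapt to them; existence needs nothing beyond the trivial choice above.
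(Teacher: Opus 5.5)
Your proof is correct and matches the paper's justification: the paper simply remarks that the property holds automatically with $\nu=\Rmax/(1-\discount)$ and $\rho=\discount$. That is the upper half of the sandwich $u(\action)\le v(\action)\le b(\action)$, which follows from rewards lying in $[0,\Rmax]$ and the geometric tail bound you wrote out. Your caveat that the intervals $(0,\Rmax/(1-\discount)]$ and $(0,\discount]$ are empty when $\Rmax=0$ or $\discount=0$ is a fair nondegeneracy point, which the paper leaves implicit (it allows $0\le\discount<1$).
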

\vspace{-.04cm}
Note that this holds automatically for $\nu=\Rmax/\pa{1-\discount}$ and $\rho=\discount$. For problems with an extra regularity this may also hold for some $\nu<\Rmax/\pa{1-\discount}$ and $\rho<\discount$. Note that our results automatically adapt to $\rho$ without knowing its value.
Moreover, note that while having a smoothness~$\rho$ means having rewards diminishing geometrically with depth with a ratio of~$\rho$, the constant $\nu$ is linked to the scale of variation of the $V$ which can often be realistically  smaller than  $\nu<\Rmax/\pa{1-\discount}$.
%
We now define a measure of the quantity of near-optimal sequences for the smoothness $\nu,\rho$.
\begin{definition}\label{def:neardim}
	For any $\nu > 0$ and $\rho \in (0,1)$, the \textbf{branching factor}
	$\branchF^v(\nu,\rho)$  with associated constant $C$,
	is defined as
	%
	%
	%
	%
	\[
	\branchF^v(\nu,\rho) \! \triangleq   \! \inf \!\left\{\!\branchF\geq 1\!:\! \exists C \!>\!    1, \forall h \geq 0, \mathcal N^v_h(3\nu\rho^h) \!\leq\!     C \branchF^{h} \!\right\}\!,
	\]
	where
	$\mathcal N^v_h(\epsilon)$ is the number of nodes
	$\action\in\actionSpace^h$    of depth $h$ such that
	$v(\action) \geq  v^\star   -  \epsilon$.
\end{definition}


We also define a related quantity but different  from 	$\branchF^v(\nu,\rho)$. In particular, we define 	$\branchF^u(\nu,\rho)$ that uses
	$\mathcal N^u_h(\epsilon)$ instead of $\mathcal N^v_h(\epsilon)$ where  $\mathcal N^u_h(\epsilon)$  is the number of nodes
$\action\in\actionSpace^h$    of depth $h$ such that
$u(\action) \geq  v^\star   -  \epsilon$.
Our results use  the new quantity $\branchF^u(\nu,\rho)$, recover the previous results using  $\branchF^v(\nu,\discount)$ in the method of~\citet{bubeck2010open} and go beyond them.
Indeed, theirs are only formulated for $\rho = \discount$ while we prove the following claim in Appendix~\ref{otbf}.
%
%
\begin{restatable}{proposition}{propo}{}
\label{pro:equiv}
$\branchF^u(\nu/2,\discount) \leq \branchF^v(\nu,\discount)\leq \branchF^u(2\nu,\discount).$ 	
\end{restatable}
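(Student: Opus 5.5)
The plan is to compare the counting functions $\mathcal N^u_h$ and $\mathcal N^v_h$ depth by depth, and then pass the comparison through the infimum in Definition~\ref{def:neardim}. Two facts drive everything. First, $u(\action)\le v(\action)$ for every node. Second, Proposition~\ref{as:smoothu} with $\rho=\discount$ gives $u(\action)\ge v(\action)-\nu\discount^h$ for $\action\in\actionSpace^h$. We read $\nu$ as a valid smoothness constant for $\rho=\discount$; this is always possible, for instance with $\nu=\Rmax/(1-\discount)$, since $v(\action)\le b(\action)$.

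We also record that $\branchF^u(\nu,\discount)$ and $\branchF^v(\nu,\discount)$ are nondecreasing in $\nu$. Indeed, $\mathcal N_h(\epsilon)$ is nondecreasing in $\epsilon$. Hence any pair $(\branchF,C)$ that works for a larger $\nu$ also works for a smaller one, so the feasible set in the infimum only shrinks as $\nu$ grows.

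\emph{Left inequality.} Suppose $u(\action)\ge v^\star-3\nu\discount^h$. Then, since $v(\action)\ge u(\action)$, also $v(\action)\ge v^\star-3\nu\discount^h$. This gives $\mathcal N^u_h(3\nu\discount^h)\le\mathcal N^v_h(3\nu\discount^h)$ for all $h$. Consequently every $(\branchF,C)$ feasible for $\branchF^v(\nu,\discount)$ is feasible for $\branchF^u(\nu,\discount)$, so $\branchF^u(\nu,\discount)\le\branchF^v(\nu,\discount)$. By monotonicity, $\branchF^u(\nu/2,\discount)\le\branchF^u(\nu,\discount)$, which gives the stated bound.

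\emph{Right inequality.} Take $\action\in\actionSpace^h$ with $v(\action)\ge v^\star-3\nu\discount^h$. Proposition~\ref{as:smoothu} then gives
\[
u(\action)\ge v(\action)-\nu\discount^h\ge v^\star-4\nu\discount^h\ge v^\star-3(2\nu)\discount^h .
\]
Hence $\mathcal N^v_h(3\nu\discount^h)\le\mathcal N^u_h(3\cdot 2\nu\,\discount^h)$ for every $h$. Any $(\branchF,C)$ certifying $\mathcal N^u_h(6\nu\discount^h)\le C\branchF^h$ for all $h$ therefore also certifies the $v$-condition at $\nu$. Taking infima yields $\branchF^v(\nu,\discount)\le\branchF^u(2\nu,\discount)$.

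The only delicate point is the one flagged at the start: the right inequality needs $\nu$ to satisfy Proposition~\ref{as:smoothu} with $\rho=\discount$. I would state this explicitly, noting that $\nu=\Rmax/(1-\discount)$ always qualifies. Everything else is a direct inclusion of level sets, with no depth shift and no change in the constant $C$.
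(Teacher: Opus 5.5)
Your proof is correct. It uses the same template as the paper's proof: compare the level sets defining $\mathcal N^u_h$ and $\mathcal N^v_h$ depth by depth, then push the inclusion through the infimum of Definition~\ref{def:neardim}. The ingredients differ, and yours are sharper.

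The paper relies only on the universal gap $0\le v(\action)-u(\action)\le \Rmax\discount^h/(1-\discount)$, applied symmetrically, to get $\mathcal N^v_h(\epsilon)\le \mathcal N^u_h(\epsilon+\Rmax\discount^h/(1-\discount))$ and the mirror bound. This needs no smoothness hypothesis. But to reach the thresholds $6\nu\discount^h$ and $3\nu\discount^h$, it silently requires $3\nu\ge \Rmax/(1-\discount)$ for the right inequality and $\tfrac32\nu\ge\Rmax/(1-\discount)$ for the left. So it only covers $\nu$ close to the trivial value.

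You instead get the left inequality for every $\nu>0$, from $u\le v$ plus monotonicity of $\branchF^u(\cdot,\discount)$. You get the right inequality from the smoothness gap $\nu\discount^h$, which covers exactly the extra-smoothness regime $\nu<\Rmax/(1-\discount)$ that the paper advertises.

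Your caveat is genuinely needed, not merely cautious. Take $K\ge 2$ and constant rewards $R$. Then $\branchF^v(\nu,\discount)=K$ for every $\nu$. Yet $\mathcal N^u_h(6\nu\discount^h)=0$ for all $h$ once $\nu<R/(6(1-\discount))$, so $\branchF^u(2\nu,\discount)=1$ and the right inequality fails.

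Finally, your chain only uses $v(\action)-u(\action)\le 3\nu\discount^h$. If you state the hypothesis as Proposition~\ref{as:smoothu} holding with $(3\nu,\discount)$, your argument also contains the paper's regime $3\nu\ge\Rmax/(1-\discount)$ as a special case.
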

%
%
\section{Optimization vs.\,planning}
%
%

Our  \platypoos approach is a very close sibling to the flat optimization algorithm, \StroquOOL~\cite{bartlett2019simple}, however, we explain how the structure of the planning setting and the discount factor $\gamma$ shaped our approach.  The optimal application of an optimization algorithm to the planning setting is not straightforward as discussed by~\citet{bubeck2010open}.  In their Section 2.2, \citet{bubeck2010open} show that optimization can be applied to the planning problem either in a na\"ive way or a \textit{good} way.  The authors take as an example the uniform planning problem.  The na\"ive and good strategies are evaluated by comparing the uncertainty $|u(a)-\hat u(a)|$ of their estimates $\hat u(a)$.  Both strategies collect rewards identically, evaluating $u(\action)$ for all the $K^H$ nodes $a\in\actionSpace^H$ at a fixed depth $H\triangleq h(a)$ by allocating one episode (of length $H$) for each $a$ and receiving $\reward_{h,a}$, $1\leq h\leq H$.  In the na\"ive  version, for all sequences $a$, the estimation of $u(a)$ uses only the~$H$ samples collected in the one episode related to $a$.  Here $\hat u(a) \triangleq \sum^{\depth(\action)-1}_{h=0} \discount^t \reward_{h,a}$ and $T_{a_{[h]}}=1$.  In contrast, the good planning strategy \textit{reuses} estimates.  For two distinct sequences $a$ and $a'$, the good strategy reuses any sample of $r_{h,a}$ it gets for the estimation of both $u(a) $ and $u(a') $ if  $a_{[h]}=a'_{[h]}$.  In this case, $\hat u(a) \triangleq \sum^{\depth(\action)-1}_{h=0} \discount^t \frac{1}{K^{H-h}} \sum_{a':a_{[h]}=a'_{[h]}} \reward_{h,a'}$ and $T_{a_{[h]}}=K^{H-h}$.  This comparison is used to demonstrate the advantage of the use of the cross-sequence information to concentrate the estimate of the mean reward associated with each action more efficiently.  It is by using this type of cross-sequence information that \OLOP is able to obtain a reduced regret over a na\"ive application of \HOO for optimization~\cite{bubeck2011x} to the planning problem.

The previous discussion on cross-sequence information is tied to the case of uniform exploration strategies. 
 Good uniform strategies guarantee
 $T_{a_{[h]}}=K^{H_u-h}$ by exploring until a reasonably shallow depth $H_u$ but sampling all $K^{H_u}$ nodes and sharing cross sequence information.
 In the case of \OLOP, \HOO, or, particularly \StroquOOL, only a subset~$S$ of size $|S|\ll K^{H_u}$ of the most promising nodes are explored but at a deeper depth $H_s\gg H_u$. Therefore, obtaining a lower bound on the number of sequence of actions at depth $ H_s$ that contains $a$ for a given sub-sequence of actions $a$ at depth $h< H_s$ is complex in general. 
 Actually one can design a problem with two actions that would drastically limit the amount of cross-sequence information for the  optimal sequence of actions $a^\star$ in \StroquOOL. 
 As a result, applying \StroquOOL with information sharing may not be enough and we chose to algorithmically ensure  hat a node at depth $h+1$ will be pulled $\discount^2$ times less than  a node at depth $h$.
 Indeed, using Chernoff-Hoeffding inequalities, we have $|u(a)-\hat u(a)|\leq \sum_{h=0}^{h(a)-1} \discount^h / \sqrt{T_{a_{[h]}}}$. 
However, \platypoos, through some simple reparametrization, remains very close to \StroquOOL and we leave as an open question whether equivalent theoretical guarantees could be proved directly for \StroquOOL applied to planning.

\section{Deterministic dynamics and rewards}
\label{sec:deterdeter}

In this section, we consider a simpler case of deterministic rewards in order to introduce our new ideas. 
The  evaluations are  noiseless, that is $\forall t$, $\epsilon_t \triangleq 0$ and $r_t\triangleq\reward(\stateMDP_t , \action_t )$.

\begin{figure}
	\centering
	\framebox{
		~    \begin{minipage}{.45\textwidth}
			\textbf{Input:} 
			$\timeHorizon$, $\actionSpace$\\[.05cm] 
			\textbf{Initialization:}
			open $t_{0,1};$  $\hmax \gets \left\lfloor\timeHorizon / \bar\log(\timeHorizon)\right\rfloor\!$ \\[.1cm]
			\textbf{For} $h=1$ to  $\hmax$	\vspace{.05cm}
			
			$\quad$ open $\left\lfloor \hmax / {h}\right\rfloor $ nodes $a_{h,i}$ of depth $h$ \\ \vspace{.05cm}$\quad$ with largest values  $ u(a_{h,i})$
			
			\textbf{Output} $ x(\timeHorizon)\gets\argmax\limits_{a_{h,i}: \in \tree}u(a_{h,i})$
		\end{minipage}    
	}
	\caption{Algorithm for free planning with no reset condition}\label{fig1}
\end{figure}

In Figure~\ref{fig1}, we provide the \SequOOL~\cite{bartlett2019simple} algorithm  applied to planning.
In this case, it is straightforward to follow the analysis of \SequOOL in order to obtain the same rates of simple regret
 as the state of the art algorithm \OPD for the doubly deterministic case~\cite{hren2008optimistic,munos2014from}, up to logarithmic factors; 
 and get the result\footnote{$\bar\log(n)$ is the $n$-th harmonic number} of Theorem~\ref{th:sequool}.
 This direct usage was already  discussed by \citet[Section 5.1]{munos2014from}.
Using \SequOOL for planning already permits to have an algorithm that does not use the parameter $\Rmax$ and that adapts to extra smoothness in the value function $\nu$, $\rho$ as discussed Section~\ref{sec:Back}. Note that to obtain similar adaptations to $\Rmax$, $\nu,$ and $\rho,$ we could have already used \SOO~\cite{munos2014from}. However, \SOO \textit{does not} come with optimal simple regret~\cite{bartlett2019simple}.

\clearpage



	\begin{restatable}{theorem}{restathsequool}\label{th:sequool}
	For any planning problem with associated $(\nu,\rho)$, and branching factor $\branchF \triangleq\branchF^u(\nu,\rho)$, 
   the simple regret of {\SequOOL}  is 	after $\timeHorizon$ rounds bounded as follows.
	\begin{enumerate}
		\vspace{-0.1in}
		\itemsep0em
	    \item[\textbullet] If $\branchF=1$, then$\: r_\timeHorizon\leq\	\nu\rho^{\frac{1}{C}\left\lfloor\frac{\timeHorizon}{\bar\log\,\timeHorizon}\right\rfloor}.$
	
	    \item[\textbullet] If $\branchF>1$, then $\: r_\timeHorizon = \cO\left( \nu\left(\frac{\log{\branchF}}{C}\left\lfloor\frac{\timeHorizon}{\bar\log\,\timeHorizon}\right\rfloor \right)^{-\frac{\log{1/\rho}}{\log{\branchF}}}
	    \right)\!.$
	    \vspace{-0.1in}
	\end{enumerate}
	
	%
	%
\end{restatable}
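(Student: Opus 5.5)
We adapt the standard \SequOOL analysis of \citet{bartlett2019simple} to the planning tree, using $u$ in place of the function values and Proposition~\ref{as:smoothu} as the local smoothness. Fix an optimal infinite sequence $a^\star\in\actionSpace^\star$ and write $a^\star_{[h]}$ for its prefix of depth $h$. Let $\bot_h$ be the deepest depth $h'\le h$ such that $a^\star_{[h']}$ has been opened by the end of the round that opens depth $h$.

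\textbf{Step 1: reduce the regret to the depth reached by $a^\star$.} The algorithm outputs the opened node with the largest $u$-value and recommends its first action. Write $\hat a$ for that node. Since $u(\hat a)\le v(\hat a)\le \QvalueF^\star(\stateMDP,\hat a_0)$ and the first action of $a^\star$ is optimal, we get $r_\timeHorizon\le v^\star - u(\hat a)$. If $a^\star_{[h]}$ was opened, its children are evaluated, so in particular $a^\star_{[h+1]}$ is known and $u(\hat a)\ge u(a^\star_{[h+1]})$. Proposition~\ref{as:smoothu} then gives $u(a^\star_{[h+1]})\ge v^\star-\nu\rho^{h+1}$. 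Hence $r_\timeHorizon\le \nu\rho^{\bot_{\hmax}+1}\le\nu\rho^{\bot_{\hmax}}$.

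\textbf{Step 2: lower-bound $\bot_h$ by induction on $h$.} The claim is that if $\lfloor \hmax/h\rfloor \ge C\branchF^{h}$ and $\bot_{h-1}= h-1$, then $\bot_h=h$. Every node $a$ of depth $h$ with $u(a)\ge v^\star-\nu\rho^h$ is counted in $\mathcal N^u_h(3\nu\rho^h)$, and there are at most $C\branchF^h$ of them. The node $a^\star_{[h]}$ is a child of the opened node $a^\star_{[h-1]}$, so it is available, and by Proposition~\ref{as:smoothu} its $u$-value is at least $v^\star-\nu\rho^h$. Any node ranked above it therefore also has $u\ge v^\star-\nu\rho^h$. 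So it ranks among the top $C\branchF^h$ available nodes and is opened. In general, if $\lfloor \hmax/h\rfloor< C\branchF^h$, we still have $\bot_h\ge\bot_{h-1}$. This gives the standard \SequOOL conclusion: $\bot_{\hmax}\ge \bar h$ for the largest $\bar h$ with $\hmax/\bar h\ge C\branchF^{\bar h}$, since $\hmax/h\ge C\branchF^h$ holds for all $h\le \bar h$ by monotonicity.

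\textbf{Step 3: solve for $\bar h$, which I expect to be the main obstacle.} The budget accounting is routine: $\sum_{h\le \hmax}\lfloor\hmax/h\rfloor\le \hmax\bar\log(\hmax)\le \timeHorizon$. The work is in solving the constraint.
\begin{itemize}
\item If $\branchF=1$, the constraint is $h\le \hmax/C$. So $\bar h=\lfloor \hmax/C\rfloor$, and $r_\timeHorizon\le\nu\rho^{\frac1C\lfloor \timeHorizon/\bar\log \timeHorizon\rfloor}$, with the floor handled as in the statement.
\item If $\branchF>1$, the constraint $h\branchF^h\le \hmax/C$ is solved with the Lambert function. Its solution is $\bar h\approx \lambertW(\hmax\log\branchF/C)/\log\branchF$, and $\lambertW(x)\ge \log x-\log\log x$. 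This yields $\rho^{\bar h}=\cO\big((\tfrac{\log\branchF}{C}\hmax)^{-\log(1/\rho)/\log\branchF}\big)$, up to the $\log\log$ slack absorbed in the $\cO$, following exactly the computation in \citet{bartlett2019simple}.
\end{itemize}
The delicate points are handling the floors and the case where $\bar h<1$, where the bound holds trivially because $\nu\ge r_\timeHorizon$ since $\nu\rho^0$ already bounds the regret.
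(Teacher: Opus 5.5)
Your argument is the same as the paper's. The paper gives no standalone proof of this theorem and defers to the \SequOOL analysis of \citet{bartlett2019simple}; its planning version is visible in the proof of Lemma~\ref{lem:hstarSto} and in Step~1 of Appendix~\ref{app:proofALLmastro}. There the regret is bounded by $\nu\rho^{\depthOp+1}$ through the evaluated child of the deepest opened optimal prefix. An induction then shows that $a^\star_{[h]}$ is opened whenever the number of openings at depth $h$ is at least $C\branchF^h$, and the resulting constraint is solved with the Lambert function.

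Two bookkeeping details need fixing.
\begin{itemize}
\item Your induction assumes $\lfloor\hmax/h\rfloor\ge C\branchF^h$, but you conclude with the unfloored condition. This works only because $\mathcal N^u_h$ is an integer: $\mathcal N^u_h\le C\branchF^h\le\hmax/h$ already forces $\mathcal N^u_h\le\lfloor\hmax/h\rfloor$. This is exactly the remark closing the paper's proof of Lemma~\ref{lem:hstarSto}, and you should say it.
\item For $\branchF=1$ you must keep the $+1$ you discarded in Step~1. The bound $\nu\rho^{\lfloor\hmax/C\rfloor+1}\le\nu\rho^{\hmax/C}$ gives the stated exponent, whereas $\nu\rho^{\lfloor\hmax/C\rfloor}$ does not. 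Keeping the $+1$ in turn requires the output argmax to range over evaluated nodes, not only opened ones as you first say.
\end{itemize}

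One claim in your proposal is actually false: that for $\branchF>1$ the $\log\log$ slack is absorbed in the $\cO$. Set $x\triangleq\frac{\log\branchF}{C}\hmax$ and $\alpha\triangleq\log(1/\rho)/\log\branchF$. Your depth $\bar h=\lambertW(x)/\log\branchF$ then gives exactly $\rho^{\bar h}=e^{-\alpha\lambertW(x)}=\big(\lambertW(x)/x\big)^{\alpha}$. So the bound you obtain is $\nu\big(\lambertW(x)/x\big)^{\alpha}$, which is asymptotically $\nu\,(x/\log x)^{-\alpha}$. This exceeds $\nu x^{-\alpha}$ by the unbounded factor $(\log x)^{\alpha}$. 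Subtracting $\log\log x$ inside the exponent of $\rho$ costs a polylogarithmic multiplicative factor, not a constant.

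Nothing in this route removes that factor. The paper's own Lambert-$W$ computations in Appendix~\ref{app:proofALLmastro} also keep the $\log$ in the denominator. What is proved, by you and by the paper's approach, is $r_\timeHorizon\le\nu\big(\frac{x}{\log x}\big)^{-\alpha}$ up to constants, i.e., a $\tcO$ bound. You should state the result in that form, or say explicitly that the $\cO$ in the theorem hides this factor, rather than assert that it is absorbed.
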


\paragraph{On the reset condition}
In practice, physical constraints can force the exploration to interact with the unknown environment under a reset condition. This means that there exists a starting state $\stateMDP$ and  a trajectory needs to start from this state and following a given policy.  Therefore, if we wish to collect a sample from an arbitrary state~$\stateMDP'$ after a reset, we must first reach that state from the starting state $\stateMDP$. 

\SequOOL is a strategy that explores the MDP deeper and deeper from an initial starting state $\stateMDP$.
The original \SequOOL strategy did not consider any reset condition, so to make a comparable analysis  
we will say that to `open' a node $a$ you first need to reach $a$ at a budget cost of 
$h(a)$ which is equal to the depth of $a$.  This additional cost has the consequence that \SequOOL with reset will not be able to explore as deeply as without.
A~na\"ive extension is shown in Figure~\ref{fig2}.  Under a total limited budget of $n$, the number of nodes now open at depth $h$ is $\cO(n/h^2)$ instead of $\cO(n/h)$
and the maximal depth is now of order of $\sqrt{n}$ instead of $n$. 
However, this does not influence the simple regret when $\kappa>1$ by more than numerical constants as shown in Theorem~\ref{th:sequool2}.  When $\kappa>1$, the exponentially diminishing simple regret remains but is changed from $\rho^{n}$ to  $\rho^{\sqrt{n}}$.
\begin{figure}
\vspace{-0.05in}
	\centering
	\framebox{
		~    \begin{minipage}{.45\textwidth}
			
			\textbf{Input:} 
			$\timeHorizon$, $\actionSpace$\\[.05cm] 
			\textbf{Initialization:}
			open $t_{0,1};$         $\hmax \gets \left\lfloor \frac{\timeHorizon} {\bar\log\,\timeHorizon}\right\rfloor$ \\[.1cm]
			\textbf{For} $h=1$\ to\ $\hmax$
			
			\vspace{.05cm}
			
			$\quad$ open  $\left\lfloor \hmax / {h^2}\right\rfloor $ nodes $a_{h,i}$ of depth $h$ \\ \vspace{.05cm}$\quad$ with largest values  $ u(a_{h,i})$
			
			\textbf{Output} $ x(\timeHorizon)\gets \argmax\limits_{a_{h,i}: \in \tree}u(a_{h,i})$
		\end{minipage}    
	}
	\caption{Algorithm for constraint planning (restart)}\label{fig2}
\end{figure}


	\begin{restatable}{theorem}{restathsequool}\label{th:sequool2}
	For a planning problem with associated $(\nu,\rho)$, and branching factor $\branchF\triangleq\branchF^u(\nu,\rho)$, 
	after $\timeHorizon$ rounds,    the simple regret of {\SequOOL}  with reset condition verifies:
	\vspace{-0.2in}
	\begin{enumerate}
		\itemsep0em
	    \item[\textbullet] If $\branchF=1$, then$\: r_\timeHorizon\leq\	\nu\rho^{\sqrt{\frac{1}{C} \left\lfloor\frac{\timeHorizon}{\bar\log\,\timeHorizon}\right\rfloor}}.$
	
	    \item[\textbullet] If $\branchF>1$, then $\: r_\timeHorizon\leq  \cO\left( \nu\left(\frac{\log{\branchF}}{C}\left\lfloor\frac{\timeHorizon}{\bar\log\,\timeHorizon}\right\rfloor \right)^{-\frac{\log{1/\rho}}{\log{\branchF}}}
	    \right)$.
	\end{enumerate}
	%
\end{restatable}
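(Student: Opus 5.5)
We follow the \SequOOL analysis of \citet{bartlett2019simple}, as for Theorem~\ref{th:sequool}, and only change the per-depth opening schedule from $\lfloor \hmax/h\rfloor$ to $\lfloor \hmax/h^2\rfloor$.

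\textbf{Budget.} Opening a node of depth $h$ costs $h$ (to reach it) plus the opening, so depth $h$ costs at most $h\lfloor\hmax/h^2\rfloor\le \hmax/h$. Summing over $h\le\hmax$ gives a total of at most $\hmax\,\bar\log(\hmax)\le n$ with $\hmax=\lfloor n/\bar\log n\rfloor$. Hence the algorithm is feasible. Only depths $h\le\sqrt{\hmax}$ open at least one node, and the effective maximal depth is $\sqrt{\hmax}$.

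\textbf{Key lemma.} Let $\bot_h$ be the depth of the deepest opened node that lies on some optimal path $a^\star$ (all its ancestors are optimal) after the loop over depth $h$ is done. We show by induction: if $\lfloor\hmax/h^2\rfloor\ge C\branchF^h$, then $\bot_h\ge h$ whenever $\bot_{h-1}\ge h-1$.

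For the inductive step, the optimal child $a^\star_{[h]}$ at depth $h$ has been evaluated. By Proposition~\ref{as:smoothu}, $u(a^\star_{[h]})\ge v^\star-\nu\rho^h$. Every node with a larger $u$-value therefore lies in the set counted by $\mathcal N^u_h(\nu\rho^h)\le\mathcal N^u_h(3\nu\rho^h)\le C\branchF^h$. So $a^\star_{[h]}$ is among the top $\lfloor\hmax/h^2\rfloor$ nodes and is opened.

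Conversely, if $\bot_h<h$, the optimal path stalls, and $\bot_{h'}$ remains $\ge$ the largest $h'$ satisfying the condition. Let $\bar h$ be the largest depth such that $C\branchF^{h}\le \hmax/h^2$ holds for all $h\le\bar h$. Then $\bot_{\bar h}\ge\bar h$.

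\textbf{Regret.} The output maximizes $u$ over opened nodes. Since $u\le v$ and $u(a^\star_{[\bar h]})\ge v^\star-\nu\rho^{\bar h}$, the recommended node $a$ satisfies $v(a)\ge u(a)\ge v^\star-\nu\rho^{\bar h}$. Hence $r_n\le\nu\rho^{\bar h}$.

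\textbf{Bounding $\bar h$ in the two cases.}
\begin{itemize}
\item If $\branchF=1$, the condition reads $Ch^2\le\hmax$, so $\bar h\ge\lfloor\sqrt{\hmax/C}\rfloor$. This gives the first bullet up to flooring, which we absorb as in Theorem~\ref{th:sequool}.
\item If $\branchF>1$, we need $C h^2\branchF^h\le\hmax$. Taking logarithms, $h\log\branchF+2\log h\le\log(\hmax/C)$. This is satisfied by $h\approx \log(\hmax\log\branchF/C)/\log\branchF - O(\log\log\hmax/\log\branchF)$, for instance by solving via the Lambert function exactly as in \citet{bartlett2019simple}. Then $\rho^{\bar h}=\exp(-\bar h\log(1/\rho))$, which gives $(\hmax\log\branchF/C)^{-\log(1/\rho)/\log\branchF}$ times a correction factor $\exp(O(\log\log\hmax)\log(1/\rho)/\log\branchF)$.
\end{itemize}

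\textbf{Main obstacle.} The hard step is showing that this polylogarithmic correction, which comes from the $h^2$ versus $h$ factor, only affects constants, as the statement claims via $\cO$. Note that $\log(1/\rho)/\log\branchF$ is a fixed exponent. We first try to absorb the $h^2$ term into the exponential by comparing $C\branchF^h h^2$ against $C'\branchF'^h$ with $\branchF'$ slightly larger than $\branchF$. If that fails, we accept a bound that is $\cO$ up to $\log$ factors, mirroring how $\bar\log$ terms are handled in Theorem~\ref{th:sequool}.
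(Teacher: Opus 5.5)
Your argument is the one the paper intends. The paper gives no separate proof of Theorem~\ref{th:sequool2}; it only remarks that the \SequOOL{} analysis carries over once depth $h$ receives $\lfloor \hmax/h^2\rfloor$ openings. Your budget count, your induction on the deepest opened optimal node, and your $\branchF=1$ case are exactly that analysis.

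One small fix in the $\branchF=1$ case removes the floor from the exponent. Once $a^\star_{[\bar h]}$ is opened, its child $a^\star_{[\bar h+1]}$ is evaluated and satisfies $u(a^\star_{[\bar h+1]})\ge v^\star-\nu\rho^{\bar h+1}$. Hence $r_n\le\nu\rho^{\bar h+1}$ with $\bar h+1>\sqrt{\hmax/C}$. This is the same ``$+1$'' step the paper uses in its \platypoos{} regret bound, and it is also where the floor-free exponent of Theorem~\ref{th:sequool} comes from.

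For $\branchF>1$, the obstacle you flag is real. Your first idea, absorbing $h^2$ into $C'\branchF'^h$ with $\branchF'>\branchF$, cannot remove it. For any fixed $\branchF'>\branchF$ it replaces the exponent $\alpha\triangleq\log(1/\rho)/\log\branchF$ by the strictly smaller $\log(1/\rho)/\log\branchF'$, which is a polynomially slower rate. Moreover $C'\to\infty$ as $\branchF'\downarrow\branchF$.

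What the argument actually yields is the following. Let $h^*$ be the real solution of $Ch^2\branchF^h=\hmax$, so that $\bar h=\lfloor h^*\rfloor$. Then
\[
r_n\le\nu\rho^{h^*}=\nu\Bigl(\frac{Ch^{*2}}{\hmax}\Bigr)^{\alpha}=\nu\Bigl(\frac{\log\branchF}{C}\hmax\Bigr)^{-\alpha}\bigl(h^{*2}\log\branchF\bigr)^{\alpha},\qquad h^*\le\frac{\log(\hmax/C)}{\log\branchF}.
\]
This is the stated expression times a factor of order $\bigl(\log^2(\hmax/C)/\log\branchF\bigr)^{\alpha}$, which grows with $n$.

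This is not a weakness of your proof relative to the paper. The same argument for Theorem~\ref{th:sequool} already leaves a factor $\lambertW(x)^{\alpha}\approx\log^{\alpha}x$, with $x=\frac{\log\branchF}{C}\hmax$, and the paper silently places it inside $\cO$. The reset contributes a further factor of order $(\log_{\branchF}\hmax)^{\alpha}$. So the paper's remark that the reset costs only numerical constants when $\branchF>1$ does not hold for the bound as derived. The $\cO$ in the statement has to be read as hiding polylogarithmic factors.

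Take your fallback as the conclusion rather than as a last resort. State it in the explicit form above, or equivalently with $h^*=\frac{2}{\log\branchF}\lambertW\bigl(\frac{\log\branchF}{2}\sqrt{\hmax/C}\bigr)$, which matches the paper's own Lambert-$W$ computations.
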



\section{Deterministic dynamics, stochastic rewards}

\begin{figure}
\vspace{-0.05in}
	\centering
	\framebox{
		~    \begin{minipage}{.45\textwidth}
			
			\textbf{Input:} 
	$\timeHorizon$, $\actionSpace$\\[.05cm]
			\textbf{Initialization:}
	open the root node $\emptyset$, $ \hmax$ times\\            $\hmax \gets \left\lfloor\frac{\timeHorizon}{2(\log_2\timeHorizon+1)^2}\right\rfloor\!\!\CommaBin \pmax \gets \left\lfloor\log_2\left( \hmax\right)\right\rfloor$ \\[.45cm]
		\textbf{For} $h=1$ to $\hmax$\vspace{.05cm}  \textit{\textbf{\textcolor{gray}{$\hfill\blacktriangleleft$  exploration $\blacktriangleright$}}}
		
		~~\textbf{For} $p=\left\lfloor\log_2(\hmax/\left\lceil h^2\discount^{2h}\right\rceil)\right\rfloor$ down to $0$
		\\
		\phantom{aaa}open  $\left\lceil h 2^p\discount^{2h}\right\rceil$ times the at most $\left\lfloor \frac{\hmax}{h \left\lceil h 2^p\discount^{2h}\right\rceil}\right\rfloor $  \\       \phantom{aaa}non-opened nodes $\action^{h,i}\in \actionSpace^{h}$ 
		with highest values 
		\\  \phantom{aaa}$\hat u(\action^{h,i})$ and given   $\pullsNumber_{\action^{h,i}}\geq \left\lceil (h-1) 2^p\discount^{2(h-1)}\right\rceil$\\[.45cm]
		\textbf{For} $p\in[0:\pmax]$ \textit{\textbf{\textcolor{gray}{$\hfill\blacktriangleleft$  cross-validation $\blacktriangleright$}}} \vspace{.05cm} \\ 
		\phantom{aaa} \textbf{evaluate} $(t+1)\discount^{2t}\hmax(1-\discount^2)^2$ times \\ \phantom{aaa} the
		actions at round $t$, $a_t^p$, of the \textit{candidates:}\\
		\phantom{aai} $a^p \gets \argmax\limits_{\action\in \actionSpace^\bullet:\forall t\in[2:\depth(\action)],\pullsNumber_{\action_{[t]}}\geq \left\lceil (t-1) 2^p\discount^{2(t-1)}\right\rceil} \hat u(\action)$ 
		
		
		\textbf{Output} $a^\timeHorizon \gets \argmax\limits_{\{a^p ,p\in[0:\pmax]\}} \hat u(a^p)$

		\end{minipage}    
	}
	\caption{The \platypoos  algorithm}\label{fig:plati}
\end{figure}



 We now describe the \platypoos algorithm detailed in Figure~\ref{fig:plati}.
In the presence of noise, it is natural to evaluate the cells multiple times, not just one time as in the deterministic case.
The amount of times a cell should be evaluated to differentiate its value from the optimal value of the function depends on the gap between these two values as well as the range of noise. As we do not want to make \emph{any} assumptions on knowing these quantities, our algorithm tries to be robust to any potential values by not making a fixed choice on the number of evaluations.  Intuitively, we do this following a path similar to \StroquOOL~\cite{bartlett2019simple} by using a modified version of   
\SequOOL, denoted \SequOOL{}$(m),$ that allows us to evaluate cells $m$ times, whereas for \SequOOL, $m=1$.  Evaluating cells more times ($m$ large) leads to a better quality of the mean estimates in each cell, however, as a trade-off, it uses more evaluations per depth.  This would normally limit us from exploring deep depths of the partition, 
however, \platypoos takes advantage of the knowledge of~$\discount$ which gives less weight to reward collected deeper in the tree. In order to obtain the concentration results for a node $a$ on $\hat u(a)- u(a)$ in Lemma~\ref{l:event}, \platypoos uses a Chernoff-Hoeffding result that gives with high probability, $\hat u(a)- u(a) \leq \sqrt{\sum_{h=0}^{h(a)-1} \discount^{2h}/T_{a_{[h]}}}$ and balances the range of confidence intervals at different depths.  Therefore, \platypoos tends to pull less with deeper depth as the number of pulls for a fixed $m$ is $\left\lceil hm\discount^{2h} \right\rceil$ where the additional $h$ factor ensures that the sum of confidence interval until depth $h$ is bounded for  any $h$.
\platypoos then \emph{implicitly} performs $\log n$ instances of \SequOOL{}$(m)$ each with a number of evaluations of $m=2^p$, where $p\in[0:\log n]$. 
In Figure~\ref{fig:plati},
remember that `opening' a node means `evaluating' its children actions.
The algorithm opens nodes by sequentially diving them deeper and deeper from the root node $h=0$ to a maximal depth of $\hmax$.
At depth~$h$, we allocate, in an almost decreasing fashion, different number of evaluations $\left\lceil h2^p\discount^{2h}\right\rceil$ to the  nodes with highest value of that depth, with $p$ starting at $\left\lfloor\log_2(\hmax/h)\right\rfloor$ down to $0$. The best node that has been evaluated at least $\cO(\hmax/h)$ times is opened with $\cO(\hmax/h)$ evaluations, the two next best cells that have been evaluated at least $\cO(\hmax/(2h)$ times are opened with $\cO(\hmax/(2h))$ evaluations, the four next best cells that have been evaluated at least $\cO(\hmax/(4h))$ times are opened with $\cO(\hmax/(4h))$ evaluations and so on, until some $\cO(\hmax/h)$ next best cells that have been evaluated at least once are opened with one evaluation.  
More precisely,   given, $p$ and $h$, we open, with $\left\lceil h 2^p \discount^{2h} \right\rceil$ evaluations, the $\left\lfloor \hmax/(h\left\lceil h 2^p \discount^{2h} \right\rceil)\right\rfloor$ non-previously-opened nodes $a^{h,i}\in\actionSpace^h$ with highest values  $\hat u(a^{h,i})$ and given that  $\pullsNumber_{\action^{h,i}}\geq \left\lceil (h-1) 2^p\discount^{2(h-1)}\right\rceil$. 
The maximum number of evaluations of any node 
is $2^{\pmax}$, with $2^{\pmax}=\cO(\hmax)$ as $\pmax \triangleq \left\lfloor\log_2\left( \hmax\right)\right\rfloor$. 
For each $p\in[0:\pmax]$, the candidate output  $a^p$ is the node~$a$ with the highest estimated value such that all actions leading to that node have been evaluated in the following way $\forall t\in[2:\depth(\action)],\pullsNumber_{\action_{[t]}}\geq \left\lceil (t-1) 2^p\discount^{2(t-1)}\right\rceil$.
We set $\hmax\triangleq\left\lfloor \timeHorizon/(2(\log_2\timeHorizon+1)^2)\right\rfloor\!.$ 
%
\subsection{Analysis of \platypoos}
%
%
$\depthOp_{h,p}$ is
the depth of the deepest opened node, $\action$ with at least $\left\lceil h2^p\discount ^h\right\rceil$ evaluations such that there is a~$a^\star\in \actionSpace^\star$ with $a^\star\triangleq ab$, with $b\in\actionSpace^\infty$, 
at the end of the opening of depth~$h$.
\begin{restatable}{lemma}{restahstarStoTwo}\label{lem:hstarSto2}
	For any planning problem with associated $(\nu,\rho)$ as in Property~\ref{as:smoothu}), on event $\xi$ defined in Appendix~\ref{lemproofs}, for any depth $h \in \left[\hmax\right],$ for any $p\in[0:\left\lfloor\log_2(\hmax/(h^2\discount^{2h}))\right\rfloor]$, we have 	$\depthOp_{h,p}= h$ if (1) and (2) simultaneously hold:\\
	(1)	 $b \sqrt{\log(4n/\delta) / 2^{p+1}} \leq \nu\rho^h$
	\\
	(2)	
	 We distinguish cases and express the condition in each:\vspace{-.2cm}
	 \begin{align*}  \hspace{-.5cm}
     &\textbf{Case 1)
	\quad $ h2^p\discount ^{2h}\leq 1$}:\\
 &  \frac{\hmax}{h}=  \frac{\hmax}{h\left\lceil h 2^{p}\discount^{2h}\right\rceil}\geq  C\branchF(\nu,\rho)^{h}\\
  &  \text{and for all } h'\in[h],  \frac{\hmax}{h'^2 2^{p+1}\discount^{2h'}}\geq  C\branchF(\nu,\rho)^{h'}
	 \\ \
	&\textbf{Case 2) 
$ h2^p\discount ^{2h}\geq 1$}:\\ \vspace{-.1cm}
&\quad \textbf{Case 2.1) $\discount^{2}\branchF^u\geq 1$}: \frac{\hmax}{h^2 2^{p+1}\discount^{2h}}\geq  C\branchF(\nu,\rho)^{h}\\
&\quad\textbf{Case 2.2) $\discount^{2}\branchF^u\leq 1$}:\frac{\hmax}{h^2 2^{p+1}}\geq  C
	 \end{align*}
\end{restatable}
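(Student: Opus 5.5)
We argue by induction on the depth $h$, in the style of the \StroquOOL analysis. The induction hypothesis, for fixed $p$, is: if conditions (1) and (2) hold at depth $h$, then $\depthOp_{h',p}=h'$ for every $h'\le h$. That is, at the end of the openings at depth $h'$, some prefix $a^\star_{[h']}$ of an optimal sequence has been opened with at least $\lceil h'2^p\discount^{2h'}\rceil$ evaluations.

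Before the induction, we check that the conditions propagate downwards. For Case 1 this is built into the statement through the clause ``for all $h'\in[h]$''. For Case 2.1 we use $\discount^2\branchF\ge1$: the quantity $C\branchF^{h'}h'^2 2^{p+1}\discount^{2h'}$ is nondecreasing in $h'$ up to the polynomial factor $h'^2$, which is itself monotone. For Case 2.2 the bound $\hmax/(h'^2 2^{p+1})\ge C$ is monotone in $h'$. Condition (1) is monotone because $\rho^{h'}\ge\rho^h$.

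The base case $h=0$ is trivial, since the root is opened $\hmax$ times. For the inductive step, assume $a^\star_{[h-1]}$ has been opened with at least $\lceil (h-1)2^p\discount^{2(h-1)}\rceil$ evaluations. Then every child of $a^\star_{[h-1]}$, and in particular $a^\star_{[h]}$, satisfies the eligibility constraint $\pullsNumber\ge\lceil (h-1)2^p\discount^{2(h-1)}\rceil$ in the loop indexed by $p$ at depth $h$. It remains to show that $a^\star_{[h]}$ is among the $\lfloor \hmax/(h\lceil h2^p\discount^{2h}\rceil)\rfloor$ eligible nodes with the highest $\hat u$. It may also have been opened earlier, in a loop with larger $p'>p$, which gives even more evaluations; that case is fine.

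The key comparison works as follows. On event $\xi$, Lemma~\ref{l:event} gives, for every eligible node $a$ at depth $h$ and for $a^\star_{[h]}$,
\[
|\hat u(a)-u(a)| \le b\sqrt{\log(4n/\delta)\textstyle\sum_{t<h}\discount^{2t}/\pullsNumber_{a_{[t]}}} .
\]
Eligibility enforces $\pullsNumber_{a_{[t]}}\gtrsim t2^p\discount^{2t}$ along the path, so each term $\discount^{2t}/\pullsNumber_{a_{[t]}}$ is at most $1/(t2^p)$. Summing these terms with the extra $t$ factors yields a bound of order $b\sqrt{\log(4n/\delta)/2^{p+1}}$, which is at most $\nu\rho^h$ by (1). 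Pinning down this constant carefully (the $\lceil\cdot\rceil$, the bookkeeping of the $1/t$ weights, and the harmonic-type sum that motivates the $h$ factor) is the step I expect to be most delicate.

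With this bound in hand, Property~\ref{as:smoothu} gives
\[
\hat u(a^\star_{[h]})\ge u(a^\star_{[h]})-\nu\rho^h\ge v^\star-2\nu\rho^h .
\]
Any eligible node $a$ with $\hat u(a)\ge\hat u(a^\star_{[h]})$ therefore satisfies $u(a)\ge v^\star-3\nu\rho^h$. The number of such nodes is at most $\mathcal N^u_h(3\nu\rho^h)\le C\branchF^h$. Hence $a^\star_{[h]}$ is selected once the number of nodes opened, $\lfloor\hmax/(h\lceil h2^p\discount^{2h}\rceil)\rfloor$, is at least $C\branchF^h$.

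We then translate this into the case conditions. In Case 1 the ceiling equals $1$, which gives exactly the stated condition. In Case 2 we use $\lceil x\rceil\le 2x$ for $x\ge1$, which turns the requirement into $\hmax/(h^22^{p+1}\discount^{2h})\ge C\branchF^h$; this is precisely Case 2.1. In Case 2.2, $\discount^{2h}\branchF^h\le1$, so it suffices that $\hmax/(h^22^{p+1})\ge C$.

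One subtlety remains. Nodes opened in previous loops $p'>p$ are excluded as ``non-opened''. Either such a node is $a^\star_{[h]}$ itself, which is already opened with more evaluations, or it only removes competitors, which helps. So the counting argument is unaffected.
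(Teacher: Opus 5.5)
Your skeleton is the paper's. It combines the induction and counting argument of Lemma~\ref{lem:hstarSto} (children of $a^\star_{[h-1]}$ are eligible, both deviations are at most $\nu\rho^h$, so every node beating $a^\star_{[h]}$ lies in $\mathcal N^u_h(3\nu\rho^h)$) with the case analysis of the present lemma (monotonicity of $C(\gamma^2\kappa)^{h'}h'^2 2^{p+1}$, and $\lceil x\rceil\le 2x$).

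The step you flagged fails as written; it is not a constant left to pin down. Eligibility gives $\gamma^{2t}/T_{a_{[t+1]}}\le 1/(t2^p)$ for $t\ge 1$, and $1/\hmax\le 2^{-p}$ at the root. Hence $\sum_{t<h}\gamma^{2t}/T_{a_{[t+1]}}\le 2^{-p}\bigl(1+\sum_{t=1}^{h-1}1/t\bigr)$, which is a harmonic sum of size about $\ln h$. The extra factor $t$ makes each term $1/(t2^p)$, but it does not keep the sum bounded. The deviation is therefore of order $b\sqrt{(1+\ln h)\log(4n/\delta)/2^{p+1}}$, and condition (1) as printed cannot control it for large $h$.

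The repair is the one built into event $\xi$ of Lemma~\ref{l:event]}: absorb the harmonic sum into $\pmax$ (up to a constant factor). Then on $\xi$ you have $|\hat u(a)-u(a)|\le b\sqrt{\pmax\log(4n/\delta)/2^{p+1}}$, and condition (1) must be read with $\pmax\log(4n/\delta)$ in place of $\log(4n/\delta)$. This is the quantity $g^{\delta,\Rmax}_{n,b}$ the theorems actually use. The paper's proof simply invokes $\xi$ at this point. The $\pmax$ missing from the printed condition (1) is an inconsistency in the statement, and no choice of weights in your argument can remove it.

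A second, smaller gap concerns the translation into $\hmax/(h'\lceil h'2^p\gamma^{2h'}\rceil)\ge C\kappa^{h'}$. You need it at every level $h'\le h$ of your induction, not only at $h$, and the regime can change with $h'$ because $h'\gamma^{2h'}$ is unimodal. For example, with $p=0$, $h=2$, $\gamma^2=0.9$ you get $h2^p\gamma^{2h}=1.62$ but $1\cdot 2^0\gamma^{2}=0.9$. The paper's claim that $h2^p\gamma^{2h}\ge 1$ propagates to $h-1$ fails on this example too.

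At levels where $\lceil h'2^p\gamma^{2h'}\rceil=1$ you need $\hmax/h'\ge C\kappa^{h'}$ instead of $\lceil x\rceil\le 2x$. It does hold in each case:
\begin{itemize}
\item In Case 2.1, $\hmax\ge 2hC\kappa^h\cdot h2^p\gamma^{2h}\ge 2hC\kappa^h$.
\item In Case 2.2, $\kappa^{h'}\le\gamma^{-2h'}\le\gamma^{-2h}\le h2^p$, while $\hmax/h\ge Ch2^{p+1}$.
\item In Case 1, use the first clause, $\hmax/h'\ge\hmax/h\ge C\kappa^{h}$. At levels with $h'2^p\gamma^{2h'}>1$, use the ``for all $h'$'' clause together with $\lceil x\rceil\le 2x$.
\end{itemize}

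Finally, you must check that the loop $(h',p)$ actually runs, i.e.\ $p\le\lfloor\log_2(\hmax/\lceil h'^2\gamma^{2h'}\rceil)\rfloor$ for every $h'\le h$. This range is not monotone in $h'$. The paper obtains it from $\hmax/(h'^2 2^{p+1}\gamma^{2h'})\ge C\kappa^{h'}\ge 1$.
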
  \vspace{-.2cm}
\noindent
Lemma~\ref{lem:hstarSto2} gives two conditions so that the cell containing a $a^\star \in\actionSpace^\star$ is opened at depth $h$. This holds if (1) \platypoos opens, with $\left\lceil h2^p \discount^{2h}\right\rceil$ evaluations, more cells at depth $h$ than the number of near-optimal cells at depth $h$ ($\hmax / \pa{h^2 2^p \discount^{2h}} \geq  C\branchF(\nu,\rho)^{h}$ if $\discount^{2}\branchF^u\geq 1$ and 
$\hmax / {h2^p} \geq  C$ if $\discount^{2}\branchF^u\leq 1$)
 and (2) the $\left\lceil h2^p \discount^{2h}\right\rceil$ evaluations are sufficient to discriminate the empirical average of near-optimal cells from the empirical average of sub-optimal cells ($b \sqrt{\log(4n/\delta)/2^{p+1}} \leq \nu\rho^h$). 
%
%
To state the next theorems, we introduce $\tilde h$, $\tilde h_1,$  and $\tilde h_2$ three positive real numbers satisfying respectively the equations:
\begin{enumerate}
    \item[]$\hmax\nu^2\rho^{2\tilde h_1} /\pa{\tilde h_1b^2g_{n,b}^{\delta,\Rmax}\discount^{2\tilde h_1}}=C\branchF^{\tilde h_1} $ and
    \item[]${\hmax\nu^2\rho^{2\tilde h_2}} / \pa{\tilde h_2b^2g_{n,b}^{\delta,\Rmax}}=C,$ where \\ \vspace{-.2cm}
    \item[] $\tilde h_1 =    \frac{1}{\log(\discount^2\branchF/\rho^2)}\log\left(\frac{\bar n_1}{\log \bar n_1 } \right)+o(1)$ and
    \item[]$\tilde h_2 =    \frac{1}{\log(1/\rho^2)}\log\left(\frac{\bar n_2}{\log \bar n_2 } \right)+o(1)$ with
     \item[]$\bar n_1 \triangleq \frac{\nu^2\hmax \log(\discount^2\branchF/\rho^2)}{Cb^2g_{n,b}^{\delta,\Rmax}}\CommaBin$ $\bar n_2 \triangleq \frac{\nu^2\hmax \log(1/\rho^2)}{Cb^2g_{n,b}^{\delta,\Rmax}}\CommaBin$
\end{enumerate}
where $g_{n,b}^{\delta,\Rmax}\triangleq\pmax\log(\Rmax n^{3/2}/b(1-\delta))$. $\tilde h$ is defined similarly in Equation~\ref{eq:brokenbells} in Appendix~\ref{app:proofALLmastro}. The quantities $\tilde h_1$ and  $\tilde h_2$ give the respective depths of deepest cell opened by \platypoos that contains a $a^{\star}$ with high probability in the cases $\discount^2\branchF \geq 1$ and  
$\discount^2\branchF \leq 1$. Additionally,  $\tilde h_1$ and $\tilde h_2$ also let us characterize for which regime of the noise range~$b$ we recover results similar to the loss of the deterministic case.
Discriminating on the noise regimes, we now state two of our results, Theorem~\ref{th:highnoise} for a high noise and 
Theorem~\ref{th:lownoise} for a low one. A more exhaustive list of results is in the Appendix~\ref{app:proofALLmastro} or in the Table~\ref{tab:1}.
%

\begin{table*}[t]
	\centering
	\bgroup
\def\arraystretch{2}
	\begin{tabular}{|c|c|c|c|c|}
		\hline
&		\multicolumn{2}{|c|}{$\discount^2\kappa \leq 1$} &  \multicolumn{2}{|c|}{$\discount^2\kappa \geq 1$} \\ \hline
&		 \textit{High noise} (ii) & \textit{Low noise} (ii)& \textit{High noise} (iii) & \textit{Low noise} (iii)\\ \hline
\textit{High noise}\textit{} (i) 
& \cellcolor{gray!20}$m_{\nu,\discount}\left(\frac{n}{b^2}\right)^{-\frac{1}{2} }$ 
& $\nu\rho^{\sqrt{n}}$
& 	\cellcolor{gray!20} $m_{\nu,\discount}\left(\frac{n}{b^2}\right)^{-\frac{\log(1/\rho)}{\log(\discount^2\kappa/\rho^2)} }$ 
&  	$\nu\left(\frac{n}{b^2}\right)^{-\frac{\log(1/\rho)}{\log(\kappa)} }$\\ \hline
	\multirow{2}{*}[-.5em]{\textit{Low noise }(i) }&		\multirow{2}{*}{$m_{\nu,\discount}\left(\frac{n}{b^2}\right)^{-\frac{\log(1/\rho)}{\log(\kappa)} }$}
	&  $\kappa =1: \nu\rho^{n}$
	& \multirow{2}{*}{$m_{\nu,\discount}\left(\frac{n}{b^2}\right)^{-\frac{\log\pa{1/\rho}}{\log(\kappa)} }$ }
	& \multirow{2}{*}{$\nu\left(\frac{n}{b^2}\right)^{-\frac{\log\pa{1/\rho}}{\log(\kappa)} }$} \\ \cline{3-3}
& &  $\kappa >1: \nu\left(\frac{n}{b^2}\right)^{-\frac{\log(1/\rho)}{\log(\kappa)} }$ & & \\ \hline
	\end{tabular}
	\egroup
	\vspace{0.01in}
	\caption{Rates of the our upper bounds on the simple regret of \platypoos for various classes.  The condition on noise (i) is whether the noise $b$ verifies	${\nu^2\rho^{2\tilde h}} / (\discount^{2\tilde h}\tilde h b^2g_{n,b}^{\delta,\Rmax}) \leq 1$
	 The condition on noise (ii) is whether	${\nu^2\rho^{2\tilde h_2}} / (b^2g_{n,b}^{\delta,\Rmax}) \leq 1.$	
	 The condition on noise (iii) is whether	${\nu^2\rho^{2\tilde h_1}} /  (b^2g_{n,b}^{\delta,\Rmax}) \leq 1$. Moreover, $m_{\nu,\discount} \triangleq \max(1 / {1-\gamma^2},\nu)$.	 }
	\label{tab:1}
\end{table*}


\begin{restatable}{theorem}{restathhighnoise}{\textcolor{gray}{\textit{\textbf{High-noise regime}}}}\label{th:highnoise}
If the noise $b$ is high enough to verify both high-noise conditions as defined in the caption of Table~\ref{tab:1}, then
	after $\timeHorizon$ rounds, for any problem with associated $(\nu,\rho)$, and branching factor   $\branchF\triangleq\branchF^u(\nu,\rho)$, the simple regret of \platypoos obeys
	%
	%
\begin{equation*}
\Exp r_\timeHorizon~=~
\begin{cases} 
\tcO\left(\left(\frac{n}{b^2}\right)^{-\frac{1}{2} }\right) & if~ \discount^2\kappa\leq 1,\\
\tcO\left(\left(\frac{n}{b^2}\right)^{-\frac{\log(1/\rho)}{\log(\discount^2\kappa/\rho^2)} }\right)  & if~ \discount^2\kappa>1.
	\end{cases}
\end{equation*}
\end{restatable}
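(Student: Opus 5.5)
The plan follows the \StroquOOL template: fix a high-probability event, reduce the regret of the output to the regret of one well-chosen candidate $a^p$, and pick $(h,p)$ so that Lemma~\ref{lem:hstarSto2} guarantees an optimal node is opened at depth $h$.

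\textbf{Step 1 (event and expectation).} We work on the event $\xi$ of Appendix~\ref{lemproofs}. On $\xi$, every node $a$ whose prefixes satisfy $\pullsNumber_{a_{[t]}}\geq \lceil (t-1)2^p\discount^{2(t-1)}\rceil$ obeys
\[
|\hat u(a)-u(a)|\leq b\sqrt{\log(4n/\delta)\sum_t \discount^{2t}/(t2^p\discount^{2t})}.
\]
This quantity is of order $b\sqrt{\log(n/\delta)\,\bar\log(\hmax)/2^{p}}$. The $t$ factor in the allocation makes the sum harmonic rather than linear in depth. The same bound holds for the cross-validation estimates, since those are evaluated $(t+1)\discount^{2t}\hmax(1-\discount^2)^2$ times at round $t$, and it gives a uniform error of order $b/\sqrt{\hmax}$ up to $m_{\nu,\discount}$ and log factors. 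We then choose $\delta$ polynomially small, e.g.\ $\delta=b/(\Rmax\sqrt n)$. Off $\xi$ the regret is at most $\Rmax/(1-\discount)$, so $\Exp r_n\leq \Pro(\xi^c)\Rmax/(1-\discount)+\Exp[r_n\mathbf 1_\xi]$. This is exactly where the factor $g^{\delta,\Rmax}_{n,b}$ comes from.

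\textbf{Step 2 (reduction to one candidate).} Fix any $p$. On $\xi$, we decompose
\[
v^\star-v(a^n)\leq \bigl(v^\star-v(a^p)\bigr)+2\cdot\mathrm{err}_{cv},
\]
where $\mathrm{err}_{cv}$ is the cross-validation error of Step 1. Next, if $\depthOp_{h,p}\geq h$, then $a^p$ beats in $\hat u$ the opened ancestor $a^\star_{[h]}$ of an optimal sequence. Using $u\leq v$, Property~\ref{as:smoothu}, and the concentration bound, this gives
\[
v^\star-v(a^p)\leq \nu\rho^h+2b\sqrt{\log(4n/\delta)/2^{p+1}}.
\]

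\textbf{Step 3 (choice of $h,p$ and the two regimes).} We take $p$ with $2^{p}\approx b^2 g/(\nu^2\rho^{2h})$ so that condition (1) of Lemma~\ref{lem:hstarSto2} just holds. We then take the largest $h$ for which condition (2) holds. The high-noise hypothesis is precisely what places us in Case 2, $h2^p\discount^{2h}\geq1$, because the required number of pulls exceeds one.

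If $\discount^2\kappa>1$ (Case 2.1), condition (2) becomes
\[
\hmax\nu^2\rho^{2h}/(h b^2 g\,\discount^{2h})\geq C\kappa^h,
\]
so $h=\tilde h_1$. Then $\nu\rho^{\tilde h_1}=\nu\bigl(\bar n_1/\log\bar n_1\bigr)^{-\log(1/\rho)/\log(\discount^2\kappa/\rho^2)}$. Since $\hmax=\tilde\Theta(n)$ and $\bar n_1\propto \hmax/b^2$, this is the second rate.

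If $\discount^2\kappa\leq1$ (Case 2.2), condition (2) becomes $\hmax/(h^22^{p+1})\geq C$. Plugging in $p$ gives $\nu\rho^{\tilde h_2}\approx b\sqrt{g\tilde h_2 C/\hmax}$, i.e.\ a rate $\tilde\cO((n/b^2)^{-1/2})$. In this case the cross-validation term is of the same order, which is consistent with the $m_{\nu,\discount}$ constant.

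Finally we check that $p\leq\lfloor\log_2(\hmax/(h^2\discount^{2h}))\rfloor$, which is required for the lemma to apply. We also check that $p\leq \pmax$, using the high-noise conditions (ii)/(iii) to ensure $\nu^2\rho^{2\tilde h}/(b^2g)\leq1$. If $p$ would otherwise be negative, we would be in the low-noise regime instead.

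\textbf{Main obstacle.} The delicate part is the middle of Step 3. We must verify that the ceiling effects $\lceil h2^p\discount^{2h}\rceil$ and the requirement that condition (2) hold for all $h'\leq h$ are compatible with the chosen $p$. In Case 2.1 the factor $(\discount^2\kappa)^{h'}$ increases with $h'$, so the $h'=h$ instance dominates. In Case 2.2, $h'^2\discount^{2h'}$ must be controlled, and there the $h'=h$ instance again dominates up to constants, which absorbs the difficulty. I would first attempt to show monotonicity in $h'$ of the ratio in each case, and then solve the defining equations for $\tilde h_1,\tilde h_2$ via Lambert-$W$ asymptotics to get the stated $o(1)$ expansions.
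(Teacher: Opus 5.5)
Your proposal follows the paper's proof essentially step for step. You restrict to $\xi$ with $\delta$ of order $b/(\Rmax\sqrt n)$, reduce the output's regret through the cross-validation to $\nu\rho^{\depthOp_{\hmax,p}+1}$ plus a term of order $\frac{b}{1-\discount^2}\sqrt{g_{n,b}^{\delta,\Rmax}/\hmax}$, pick $(h,p)$ where conditions (1) and (2) of Case 2 of Lemma~\ref{lem:hstarSto2} become equalities (this is what defines $\tilde h_1,\tilde h_2$, with the high-noise conditions ensuring $h2^p\discount^{2h}\geq 1$ and $p\geq 0$), and finish with Lambert-$W$ asymptotics.

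One small fix is needed. The cross-validation only gives $u(a^n)\geq u(a^p)-2\,\mathrm{err}_{cv}$, so your decomposition $v^\star-v(a^n)\leq (v^\star-v(a^p))+2\,\mathrm{err}_{cv}$ is not true in general. Instead, state Step 2 as a bound on $v^\star-u(a^p)$, which your chain through $a^\star_{[h]}$ already yields before you invoke $u\leq v$, and then use $v(a^n)\geq u(a^n)$.
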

The proofs are in appendix~\ref{app:proofALLmastro}. They  are quite technical but they are simply based on checking the conditions of Lemma~\ref{lem:hstarSto2} under different  $b,\rho,\discount,\kappa$ regimes.
\begin{restatable}{theorem}{restathlownoise}{\textcolor{gray}{\textit{\textbf{Low-noise regime}}}}\label{th:lownoise}
 If the noise $b$ is low enough to verify both high-noise conditions as defined in the caption of
	 Table~\ref{tab:1}, then
	after $\timeHorizon$ rounds, for any problem with associated $(\nu,\rho)$, and branching factor  $\branchF\triangleq\branchF^u(\nu,\rho)$,   the simple regret of \platypoos obeys 
	%
	%
%
	\begin{equation*}
\Exp r_\timeHorizon~=~
\begin{cases} 
\tcO\left(\nu\rho^{n}\right) & if~ \kappa=1,\\
\tcO\left(\nu\left(\frac{n}{b^2}\right)^{-\frac{\log(1/\rho)}{\log(\kappa)} }\right)& if~ \kappa>1.
	\end{cases}
\end{equation*}
%
	%
\end{restatable}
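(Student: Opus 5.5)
The plan is to reduce the low-noise bound to the deterministic analysis of \SequOOL (Theorem~\ref{th:sequool}) by working with the single instance $p=\pmax$ of \platypoos, the one using the largest number of evaluations, $2^{\pmax}=\Theta(\hmax)$. Throughout we work on the event $\xi$ of Lemma~\ref{l:event}. Its complement has probability at most $\delta$, so with $\delta$ of order $1/n$ it adds only $\tcO(\Rmax/n)$ to the expectation, which is absorbed in the stated rates.

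First we check condition (1) of Lemma~\ref{lem:hstarSto2}. For $p=\pmax$ it reads $b\sqrt{\log(4n/\delta)/\hmax}\lesssim\nu\rho^h$. The low-noise conditions (i)--(iii) state precisely that $\nu^2\rho^{2\tilde h}/(b^2 g_{n,b}^{\delta,\Rmax})$ exceeds the relevant threshold at the depth $\tilde h$ where the cardinality condition (2) saturates. Since $\rho^h$ is decreasing in $h$, condition (1) therefore holds at every depth $h$ up to the largest depth $h^\star$ allowed by condition (2). In other words, in this regime noise is never the binding constraint.

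Next we identify $h^\star$ from condition (2). We split on $\discount^2\branchF\le 1$ versus $>1$, and on whether $h2^{\pmax}\discount^{2h}\le1$ (Case 1) or $\ge1$ (Case 2). Because $\discount^{2h}$ decays geometrically, Case 1 covers all but the first $O(\log\hmax/\log(1/\discount))$ depths. At those depths each opened node costs a single evaluation and at least $\hmax/h$ nodes are opened, which is exactly the \SequOOL budget; condition (2) then reduces to $\hmax/h\ge C\branchF^h$. The additional requirement for $h'\le h$ needs a separate check at the shallow Case-2 depths; there $\discount^{2h'}$ compensates for the $2^{p+1}h'^2$ factor up to logarithmic terms. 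The outcome is that $\depthOp_{h,\pmax}\ge h$ for all $h$ with $C\branchF^h\lesssim \hmax/(h\log\hmax)$. For $\branchF=1$ this gives $h^\star$ of order $\hmax/C$, up to log factors, and for $\branchF>1$ it gives $h^\star\approx\log(\hmax/C)/\log\branchF$.

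Then we pass from depth to regret. Let $a^{\star}_{[h^\star]}$ be the optimal node opened. The candidate $a^{\pmax}$ maximizes $\hat u$ among sufficiently sampled nodes, so on $\xi$ we get $u(a^{\pmax})\ge u(a^\star_{[h^\star]})-2\epsilon_{\pmax}$, where $\epsilon_{\pmax}$ is the confidence width. By Property~\ref{as:smoothu}, $u(a^\star_{[h^\star]})\ge v^\star-\nu\rho^{h^\star}$. The cross-validation step and the output rule choose among the $a^p$ using fresh evaluations, and this costs only another width of the same order. Since $\epsilon_{\pmax}\le\nu\rho^{h^\star}$ by condition (1), we obtain $r_n\lesssim\nu\rho^{h^\star}$. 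Substituting $h^\star$ gives $\nu\rho^{\tilde\Omega(n)}$ when $\branchF=1$, and $\nu(\hmax/C)^{-\log(1/\rho)/\log\branchF}$ when $\branchF>1$. Because $\hmax=\tilde\Theta(n)$ and $b$ enters only through logarithmic terms here, this is the claimed bound.

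The main obstacle is the bookkeeping in the second step. We must verify that the extra factors $h$ and $\lceil h2^p\discount^{2h}\rceil$ in \platypoos's allocation, together with the requirement $\pullsNumber_{a_{[t]}}\ge\lceil(t-1)2^p\discount^{2(t-1)}\rceil$ along the whole path, do not shrink $h^\star$ by more than logarithmic factors. This is most delicate when $\branchF=1$, where any polynomial loss in $\hmax$ would destroy the exponential rate $\rho^{n}$. I would attack it by explicitly summing the budget over the shallow Case-2 depths, using $\sum_h h^2\discount^{2h}\le 2/(1-\discount^2)^3$, and showing this is a constant fraction of $\hmax$.
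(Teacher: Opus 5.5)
\textbf{There is a genuine gap.} It lies in the step you call bookkeeping: $p=\pmax$ is the one level of \platypoos that cannot go deep.

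Since $2^{\pmax}>\hmax/2$, the factor $\hmax$ in condition (2) cancels:
$\hmax/(h'\lceil h'2^{\pmax}\discount^{2h'}\rceil)<2/(h'^2\discount^{2h'})$.
This is at most $1<C\branchF^{h'}$ whenever $h'^2\discount^{2h'}\ge 2$, which already happens at $h'=2$ when $\discount^4\ge 1/2$. The loss is not logarithmic. Up to the floor, this quantity is the number of nodes level $\pmax$ opens at depth $h'$, and it is zero. So the induction behind $\depthOp_{h,\pmax}=h$ stops after a depth or two.

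The candidate is stuck as well. The node $a^{\pmax}$ must satisfy $\pullsNumber_{a_{[t]}}\ge\lceil (t-1)2^{\pmax}\discount^{2(t-1)}\rceil$ along its whole path. For $\discount^4\ge1/2$, no depth-$2$ node is opened with the required $\lceil 2\cdot 2^{\pmax}\discount^{4}\rceil$ evaluations: level $\pmax$ opens none there, and every lower level uses at most $\lceil 2^{\pmax}\discount^4\rceil$. Hence $a^{\pmax}$ has depth at most $2$, and your chain of inequalities yields only $r_\timeHorizon\lesssim\nu\rho^{2}$.

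Your proposed repair does not apply either. \platypoos never pools budget across depths: each pair $(h,p)$ has a fixed allotment of at most $\hmax/h$ evaluations. So a bound like $\sum_h h^2\discount^{2h}=O((1-\discount^2)^{-3})$ says nothing about what level $\pmax$ can afford at a given depth. Note also that at $p=\pmax$ condition (1) is the weakest it can be, so your argument barely uses the low-noise hypotheses. That is the sign you picked the wrong end of the samples-versus-depth trade-off.

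\textbf{How the paper handles it.} The paper works at the opposite end, with the cheapest level the noise allows (see the low-noise cases in Appendix~\ref{app:proofALLmastro}). It takes $p=0$, or $p=\max(0,\tilde p)$. In the latter case the low-noise condition (i) says exactly $\tilde h\,2^{\tilde p}\discount^{2\tilde h}\le 1$, so nodes at the relevant depths receive a single evaluation and this level really is \SequOOL with $m=1$.

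At that level condition (2) keeps its factor $\hmax$. Case 2 of Lemma~\ref{lem:hstarSto2} reads $\hmax/(2h'^2\discount^{2h'})\ge C\branchF^{h'}$, and here your ``$\discount^{2h'}$ compensates'' reasoning is legitimate. This produces depths such as:
\begin{itemize}
\item $\bar h$, defined by $\hmax/(2\bar h^2\discount^{2\bar h})=C\branchF^{\bar h}$;
\item $\hat h$, defined by $\hmax/\hat h=C\branchF^{\hat h}$, so $\hat h=\hmax/C$ when $\branchF=1$.
\end{itemize}
What is no longer free at such a small $p$ is condition (1). That is exactly what the low-noise inequalities $b^2g^{\delta,\Rmax}_{n,b}\le\nu^2\rho^{2\tilde h}$ supply, after checking $\bar h,\hat h\le\tilde h$. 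The same inequality bounds the cross-validation width $\frac{b}{1-\discount^2}\sqrt{g^{\delta,\Rmax}_{n,b}/\hmax}$ by $\nu\rho^{\tilde h}$, which gives $r_\timeHorizon\le 7\nu\rho^{\hat h}$.

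\textbf{A second, separate problem: the choice of $\delta$.} Taking $\delta\asymp 1/n$ leaves an additive $\Rmax/((1-\discount)n)$ in $\mathbb{E}\, r_\timeHorizon$. That term is not $\tcO(\nu\rho^{n})$, nor $\tcO(n^{-\log(1/\rho)/\log\branchF})$ when $\rho<1/\branchF$. The paper instead takes $\delta\propto b/(\Rmax\sqrt n)$, so the failure term is $O(b/\sqrt n)$ and is absorbed by the same low-noise inequality. The price is the $\log(\Rmax n^{3/2}/b)$ factor inside $g^{\delta,\Rmax}_{n,b}$.
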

%
%
%
\paragraph{Worst-case comparison with \OLOP \textcolor{gray}{\textbf{when $b$ is large and known}} }
In Table~\ref{tab:1}, we give our results for various classes of problems depending on the whether $\discount^2\kappa \geq 1$, whether $\kappa =1$ or $\kappa >1$, and several conditions for the range of the noise $b$. 
 The results for \OLOP were  distinguishing the results based  on $\discount^2\kappa$ being greater or smaller than $1$. For these two cases, we recover the same rate in term of $n$ for the simple regret, as displayed in Table~\ref{tab:1} with a grey background color, for instance taking $b=1$ like in \OLOP and having $\rho=\discount.$
However, we provide more specific treatment for sub-cases with associated improvements that we list and detail next.

\paragraph{Adaptation to the range of the noise $b$ without a prior knowledge}
Our analysis shows that \platypoos adapts favorably to the unknown range of noise. Already, in the standard cases discussed above, where the noise is large, our bound already adapts to the amount of noise as it scales with $n/b^2$. \OLOP requires an estimate  $\tilde b$ of $b$ and has a regret scaling with $n/\tilde b^2$ which is problematic in case of a wrong estimate  $\tilde b\gg b$. Moreover, we give technical conditions on the range of noise that shows when \platypoos gets improved rates.
When $\discount^2\kappa\geq 1$, \OLOP was already obtaining rates that were the same as the rates of deterministic reward case. Therefore, beyond the  $n/b^2$ improvement and the adaptation to extra smoothness that will be discussed latter, no more rate improvement should be expected. 
In the case $\discount^2\kappa\leq 1$, the improvement are even more striking. When the noise is very low, then contrary to the \OLOP rate of $\cO(1/\sqrt{n})$, we obtain the \textit{deterministic} rate of OPD~\cite{hren2008optimistic,munos2014from} which is either $n^{-\log(1/\rho)/\log\kappa}$ or $ \rho^n$.
These improved rates \textit{could not}  be obtained by \OLOP. Indeed,  \OLOP relies on 
 upper confidence bound (UCB) that uses a range of the noise $\tilde b$ as input,
 \[
 \bar u(a) =  \cO\left(\hat u(a) + \sum_{h=0}^{h(a)-1} \discount^h \tilde b \sqrt{\frac{1}{T_{a_{[a]}}}} + \frac{\Rmax\discount^{h(a)}}{1-\discount}\right)\!\cdot
 \]
This works if $\tilde b = b$. However, the true $b$ is unknown. If $b=0$, using any $\tilde b> 0$ will not result in an improved rate.

\paragraph{Adaptation to additional smoothness $\nu$ and $\rho$ beyond~$\discount$} 
As defined in Section~\ref{sec:Back}, we aim to adapt to the true smoothness $\nu,\rho$ of $V$ which can go beyond $ \discount$. We show that \platypoos is able to take advantage of  $\nu,\rho$ in a large portion of  cases. 
In most cases, the rate $\discount$ in \OLOP is replaced by $ \rho$ in \platypoos.
In the case where $\discount^2\kappa\geq 1$ we have \[r^{\OLOP}=\cO(n^{-\frac{\log(1/\discount)}{\log(\kappa)}}) \leq \cO(n^{-\frac{\log(1/\rho)}{\log(\discount^2\kappa/\rho^2)}})=r^{\platypoos}.\]
%
\begin{figure*}[t]
	\center
	\includegraphics[width=0.325\textwidth]{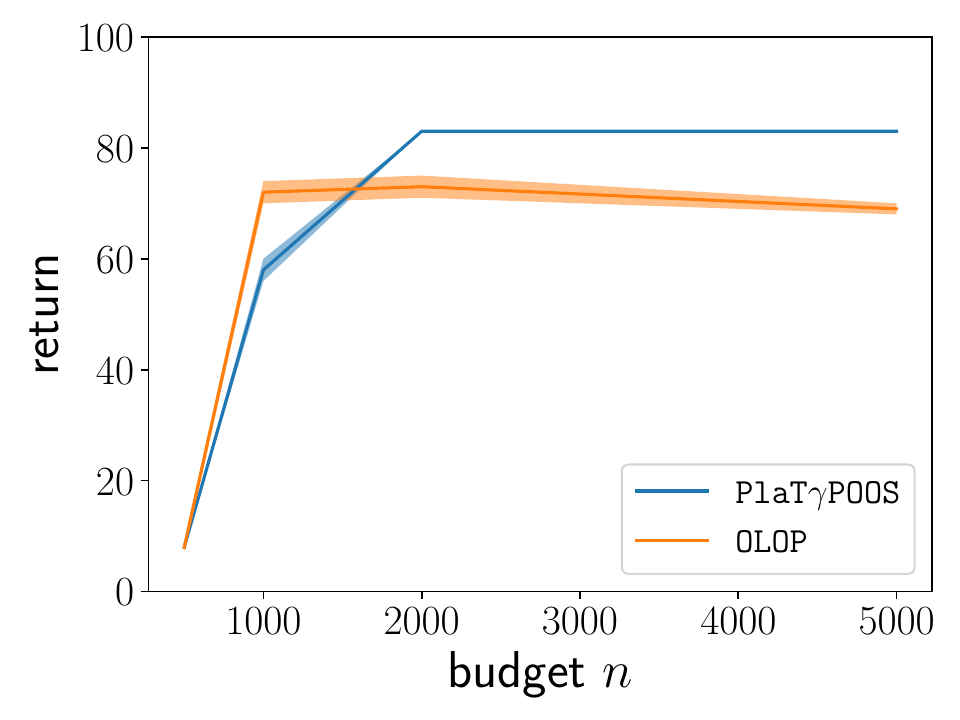}
	\includegraphics[width=0.325\textwidth]{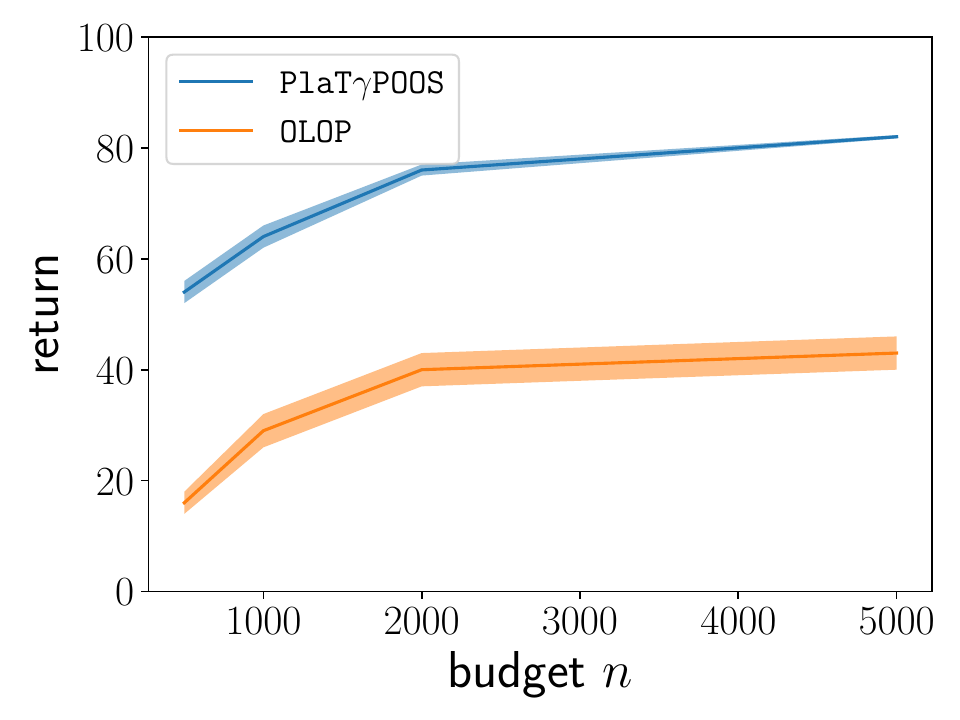}
		\includegraphics[width=0.325\textwidth]{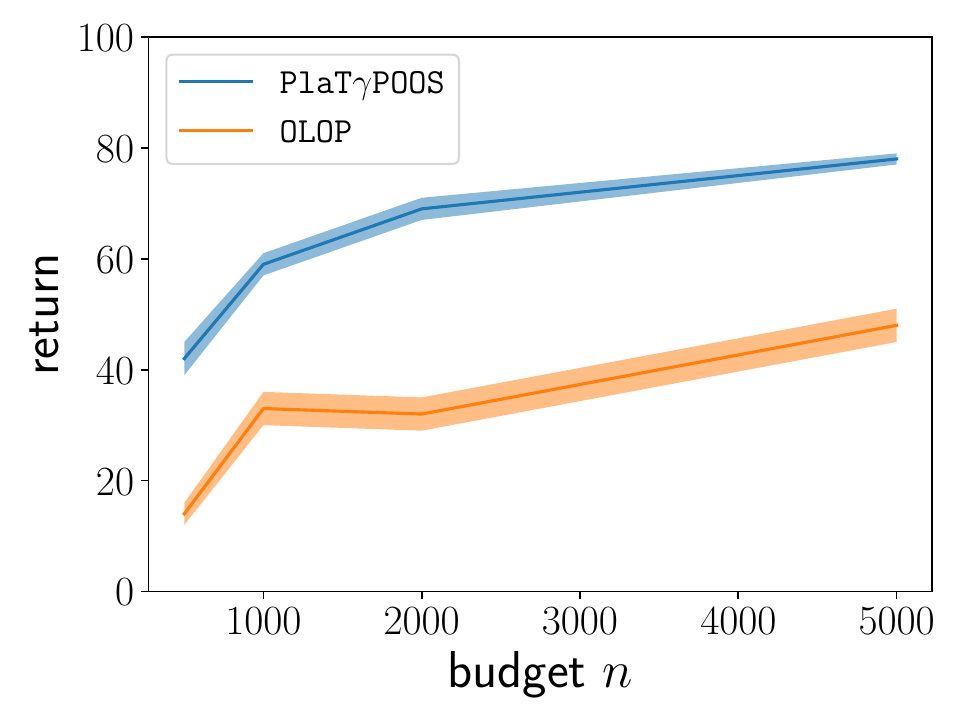}
	\\

	\includegraphics[width=0.325\textwidth]{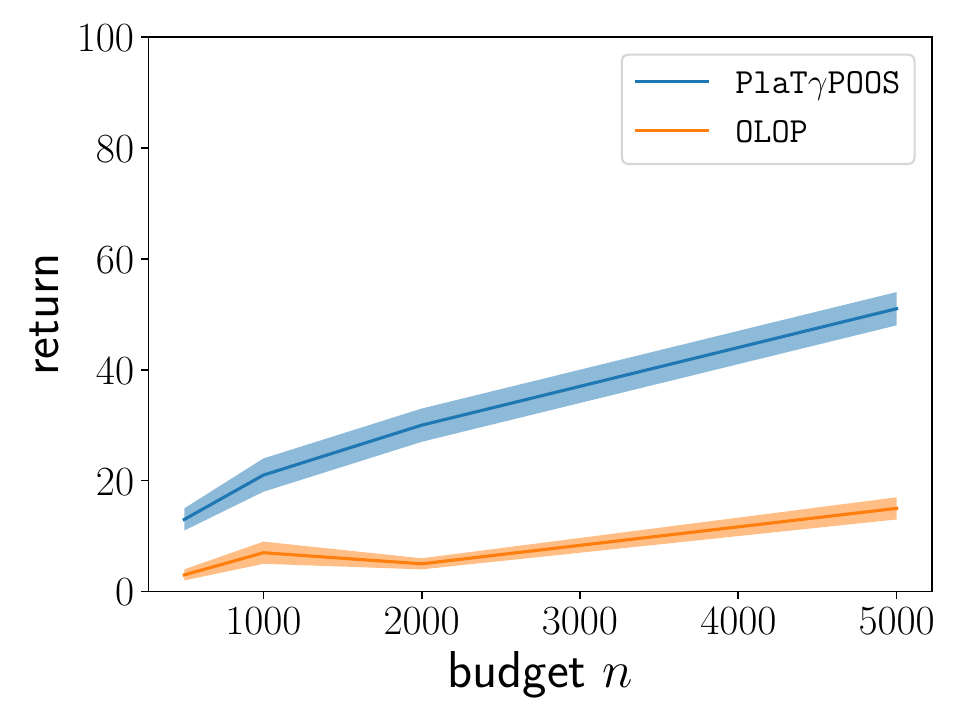}
		\includegraphics[width=0.325\textwidth]{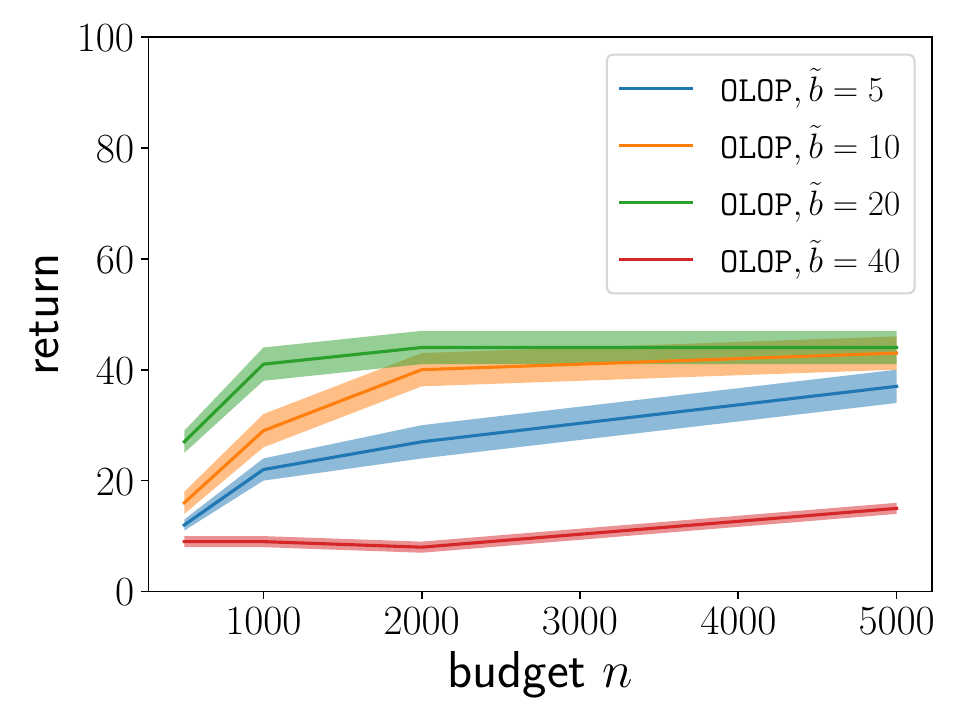}
		\includegraphics[width=0.325\textwidth]{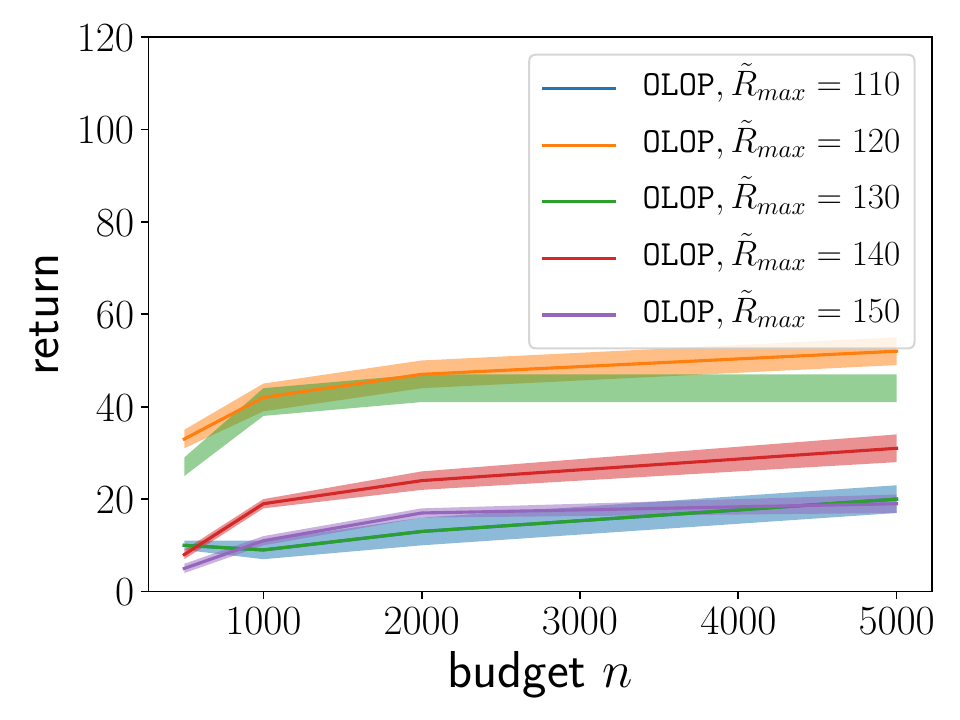}
	\vspace{-0.1in}
	\caption{ 
		\emph{Top and bottom left:} Average cumulative discounted return collected by \OLOP and \platypoos with different range of  noise, $b=1$ (top left),  $b=10$ (top center), $b=20$, (top right), and $b=50$ (bottom left).
		\emph{Bottom center:}  the sensitivity of \OLOP to different $\tilde R_{\max}$ parameters. 	\emph{Bottom right:}  the sensitivity of \OLOP to different range of the input noise $\tilde b$ as parameters while the true $b$ is set to $10$.}
		\vspace{-0.05in}
	\label{Sin}
\end{figure*}
	\vspace{-0.05in}
\paragraph{Adaptation to
the deterministic case and $\kappa = 1$}
\platypoos adapts to the branching factor $\kappa$ of the problem  that under low noise conditions, 
leads to an exponentially decreasing simple regret $r_n^{\platypoos} = \cO(\nu\rho^{\sqrt{n}}).$ This is a light-years improvement over \OLOP for these conditions, as \OLOP 's regret is at best $r_n^{\platypoos} =\cO(n^{-1/2})$. This result is possible because \platypoos  explores much deeper than $\OLOP$, as its maximal depth is of order $n$.
Actually, in most scenarios, the actual larger depth explored will be of order $\sqrt{n}$ due to sampling limitations. On the other hand, \OLOP can only go $\log n$ deep.

Moreover, $\kappa = 1$ is a common case in planning. Indeed, as discussed by~\citet{bubeck2010open}, $\kappa = 1$ is  equivalent to having near-optimal dimension $d=0$ in an optimization task~\cite{munos2014from} which is a common value as shown by~\citet{valko2013stochastic}.
Therefore, we expect the case when $\discount\kappa^2\leq 1$, that is, where we get the  most significant improvement other \OLOP, to be the most common in practice.
\paragraph{The reset condition}
As discussed in Section~\ref{sec:deterdeter}, in the stochastic reward case, the effect of the reset condition  affects,  \platypoos as follows.
First, all polynomial rates stay the same. Next, only the exponential rates change. 
A~$\rho^n$ rate without the condition becomes a $\rho^{\sqrt{n}}$ with it.
Next, a~$\rho^{\sqrt{n}}$ rate becomes a $\rho^{n^{1/3}}$ one.
\vfil
\section{Numerical experiments}
%
%

In this section, we empirically illustrate the benefits of \platypoos. We chose a simple MDP, shown 
in Figure~\ref{fig:mdp2}. In this MDP, a state $\stateMDP \triangleq ({\rm bin},d)$ is a pair of a binary variable~${\rm bin}$ and a non-negative integer $d$.
The MDP has two actions that are also binary. If ${\rm bin}\neq a$, the base reward is $2$,
in which case,  the next state is $(a,0)$.  Otherwise, if ${\rm bin}= a,$  then $r=d$ and the next state is $(a,d+1)$. The reward is then shifted by adding 100 to it so that the noises with different ranges can be added on top without making the reward negative.

\begin{figure}[H]
\centering
\begin{tikzpicture}[node distance=3.5cm,on grid,very thick]
  \node[state]   (q_0)                {0};
  \node[state] (q_2)  [right=of q_0]{1};
  \path[->] (q_0) 
                  edge [loop above, blue, align=center]   node  {$r=$\#consecutive\\visits} ()
                  edge [bend left=45, red] node [above] {$r=2$} (q_2)
            (q_2)  edge [bend left, red]   node [below] {$r=2$} (q_0)
            (q_2)  edge [loop above, blue,, align=center]   node       {$r=$\#consecutive\\ visits} ();
\end{tikzpicture}
\vspace{-0.1in}
\caption{MDP used for our experiments} 
		\label{fig:mdp2}
		\vspace{-0.1in}
\end{figure}
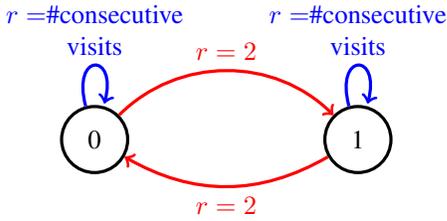

The initial state is $(0,0)$. Therefore, the agent has a choice. It can, for instance, remain in the same binary state {\rm bin}, starting with a null reward but sees its instant reward growing with time if it keeps taking the  same action in the future. Alternatively, it could greedily switch to the other binary state ${\rm bin}$ and obtain a reward of $2$ but delaying the hope of obtaining growing reward as in the first scenario. 
We set  $\discount=0.95$. Therefore, $\Rmax\approx130$.

Figure~\ref{Sin} reports the results. All the figures show the cumulative discounted return
collected by \OLOP and \platypoos after having interacted for $20$ steps with the MDP, having chosen each time an action following their planning strategy and then being transferred to the state resulting of applying the recommended action in the current state; therefore also collecting a reward that is composing the final return. 

Note that the return reported are shifted in order to not take into account the  fixed $100$ part of each the reward.
The figures in the top row, as well as the figure at the bottom left, reports the comparison between the two returns of \OLOP and \platypoos for different ranges of noise $b$. \platypoos is systematically outperforming \OLOP while in this case \OLOP is given the correct  $\tilde R_{max}$ as input,  $\tilde R_{max}= \Rmax$ and the correct range of the noise $\tilde b,$ that is $\tilde b = b$.

In  Figure~\ref{Sin}, bottom center and  right, we illustrate the sensitivity of \OLOP to misleading input parameters. Notice that  the performance of \OLOP is very vulnerable to these misspecifications while \platypoos is not using such inputs.
\paragraph{Acknowledgments}
The research presented was supported by European CHIST-ERA project DELTA, French Ministry of
Higher Education and Research, Nord-Pas-de-Calais Regional Council,
Inria and Otto-von-Guericke-Universit\"at Magdeburg associated-team north-European project Allocate, and French National Research Agency project BoB (grant n.ANR-16-CE23-0003), 
FMJH Program PGMO with the support of this program from Criteo.
\clearpage
\bibliography{perfectlibrary,references,b,library,biblio}

\begin{thebibliography}{27}
\providecommand{\natexlab}[1]{#1}
\providecommand{\url}[1]{\texttt{#1}}
\expandafter\ifx\csname urlstyle\endcsname\relax
  \providecommand{\doi}[1]{doi: #1}\else
  \providecommand{\doi}{doi: \begingroup \urlstyle{rm}\Url}\fi

\bibitem[Bartlett et~al.(2019)Bartlett, Gabillon, and
  Valko]{bartlett2019simple}
Bartlett, P.~L., Gabillon, V., and Valko, M.
\newblock
  \href{http://researchers.lille.inria.fr/~valko/hp/serve.php?what=publications/bartlett2019simple.pdf}{{A
  simple parameter-free and adaptive approach to optimization under a minimal
  local smoothness assumption}}.
\newblock In \emph{Algorithmic Learning Theory (ALT)}, 2019.

\bibitem[Bertsekas \& Tsitsiklis(1996)Bertsekas and
  Tsitsiklis]{bertsekas1996neuro-dynamic}
Bertsekas, D. and Tsitsiklis, J.
\newblock
  \href{https://books.google.co.uk/books/about/Neuro_dynamic_Programming.html?id=WxCCQgAACAAJ&source=kp_book_description&redir_esc=y}{\emph{{Neuro-dynamic
  programming}}}.
\newblock Athena Scientific, Belmont, MA, 1996.

\bibitem[Bubeck \& Munos(2010)Bubeck and Munos]{bubeck2010open}
Bubeck, S. and Munos, R.
\newblock \href{http://sbubeck.com/COLT10_BM.pdf}{{Open-loop optimistic
  planning}}.
\newblock In \emph{Conference on Learning Theory (COLT)}, 2010.

\bibitem[Bubeck et~al.(2011)Bubeck, Munos, Stoltz, and
  Szepesv{\'{a}}ri]{bubeck2011x}
Bubeck, S., Munos, R., Stoltz, G., and Szepesv{\'{a}}ri, C.
\newblock
  \href{http://www.jmlr.org/papers/volume12/bubeck11a/bubeck11a.pdf}{{$\mathcal{X}$-armed
  bandits}}.
\newblock \emph{Journal of Machine Learning Research}, 12:\penalty0 1587--1627,
  2011.

\bibitem[Bu\c{s}oniu \& Munos(2012)Bu\c{s}oniu and
  Munos]{busoniu2012optimistic}
Bu\c{s}oniu, L. and Munos, R.
\newblock
  \href{http://proceedings.mlr.press/v22/busoniu12/busoniu12.pdf}{{Optimistic
  planning for Markov decision processes}}.
\newblock In \emph{International Conference on Artificial Intelligence and
  Statistics (AISTATS)}, 2012.

\bibitem[Carpentier \& Locatelli(2016)Carpentier and Locatelli]{Carpentier16TB}
Carpentier, A. and Locatelli, A.
\newblock \href{http://proceedings.mlr.press/v49/carpentier16.pdf}{Tight
  (lower) bounds for the fixed budget best-arm identification bandit problem}.
\newblock In \emph{Conference on Learning Theory (COLT)}, 2016.

\bibitem[Coquelin \& Munos(2007)Coquelin and Munos]{coquelin2007bandit}
Coquelin, P.-A. and Munos, R.
\newblock \href{https://arxiv.org/pdf/1408.2028.pdf}{{Bandit algorithms for
  tree search}}.
\newblock In \emph{Conference on Uncertainty in Artificial Intelligence (UAI)}, 2007.

\bibitem[Coulom(2007)]{coulom2007efficient}
Coulom, R.
\newblock \href{https://hal.inria.fr/inria-00116992/document}{{Efficient
  selectivity and backup operators in Monte-Carlo tree search}}.
\newblock \emph{Computers and games}, 4630:\penalty0 72–83, 2007.

\bibitem[Feldman \& Domshlak(2014)Feldman and Domshlak]{feldman2014simple}
Feldman, Z. and Domshlak, C.
\newblock \href{http://dx.doi.org/10.1613/jair.4432}{{Simple regret
  optimization in online planning for Markov decision processes}}.
\newblock \emph{Journal of Artificial Intelligence Research}, 2014.

\bibitem[Gabillon et~al.(2012)Gabillon, Ghavamzadeh, and
  Lazaric]{gabillon2012best}
Gabillon, V., Ghavamzadeh, M., and Lazaric, A.
\newblock \href{https://hal.inria.fr/hal-00747005v1/document}{{Best-arm
  identification: A unified approach to fixed budget and fixed confidence}}.
\newblock In \emph{Neural Information Processing Systems (NeurIPS)}, 2012.

\bibitem[Gelly et~al.(2006)Gelly, Yizao, Munos, and
  Teytaud]{gelly2006modifications}
Gelly, S., Yizao, W., Munos, R., and Teytaud, O.
\newblock \href{https://hal.inria.fr/inria-00117266}{{Modification of UCT with
  patterns in Monte-Carlo Go}}.
\newblock Technical report, Inria, 2006.

\bibitem[Grill et~al.(2016)Grill, Valko, and Munos]{grill2016blazing}
Grill, J.-B., Valko, M., and Munos, R.
\newblock
  \href{https://papers.nips.cc/paper/6253-blazing-the-trails-before-beating-the-path-sample-efficient-monte-carlo-planning}{{Blazing
  the trails before beating the path: Sample-efficient Monte-Carlo planning}}.
\newblock In \emph{Neural Information Processing Systems (NeurIPS)}, 2016.

\bibitem[Hoorfar \& Hassani(2008)Hoorfar and Hassani]{Hoorfar08IO}
Hoorfar, A. and Hassani, M.
\newblock
  \href{https://www.emis.de/journals/JIPAM/images/107_07_JIPAM/107_07.pdf}{Inequalities
  on the lambert w function and hyperpower function}.
\newblock \emph{Journal of Inequalities in Pure and Applied Mathematics},
  9\penalty0 (2):\penalty0 5--9, 2008.

\bibitem[Hren \& Munos(2008)Hren and Munos]{hren2008optimistic}
Hren, J.-F. and Munos, R.
\newblock
  \href{https://hal.archives-ouvertes.fr/hal-00830182/document}{{Optimistic
  planning of deterministic systems}}.
\newblock In \emph{European Workshop on Reinforcement Learning}, 2008.

\bibitem[Kaufmann \& Koolen(2017)Kaufmann and Koolen]{kaufmann2017monte}
Kaufmann, E. and Koolen, W.~M.
\newblock \href{https://arxiv.org/pdf/1706.02986.pdf}{{Monte-carlo tree search
  by best-arm identification}}.
\newblock In \emph{Neural Information Processing Systems (NeurIPS)}, 2017.

\bibitem[Kocsis \& Szepesv{\'{a}}ri(2006)Kocsis and
  Szepesv{\'{a}}ri]{kocsis2006bandit}
Kocsis, L. and Szepesv{\'{a}}ri, C.
\newblock \href{http://ggp.stanford.edu/readings/uct.pdf}{{Bandit-based
  Monte-Carlo planning}}.
\newblock In \emph{European Conference on Machine Learning (ECML)}, 2006.

\bibitem[Leurent \& Maillard(2019)Leurent and Maillard]{leurent2019practical}
Leurent, E. and Maillard, O.-A.
\newblock \href{https://arxiv.org/pdf/1904.04700.pdf}{{Practical Open-Loop
  Optimistic Planning}}.
\newblock \emph{arXiv preprint arXiv:1904.04700}, 2019.

\bibitem[Munos(2011)]{munos2011optimistic}
Munos, R.
\newblock
  \href{https://papers.nips.cc/paper/4304-optimistic-optimization-of-a-deterministic-function-without-the-knowledge-of-its-smoothness.pdf}{{Optimistic
  optimization of deterministic functions without the knowledge of its
  smoothness}}.
\newblock In \emph{Neural Information Processing Systems (NeurIPS)}, 2011.

\bibitem[Munos(2014)]{munos2014from}
Munos, R.
\newblock \href{https://hal.archives-ouvertes.fr/hal-00747575v5/document}{{From
  bandits to Monte-Carlo tree search: The optimistic principle applied to
  optimization and planning}}.
\newblock \emph{Foundations and Trends in Machine Learning}, 7(1):\penalty0
  1--130, 2014.

\bibitem[Orabona \& P{\'{a}}l(2018)Orabona and P{\'{a}}l]{orabona2018scale}
Orabona, F. and P{\'{a}}l, D.
\newblock \href{http://dx.doi.org/10.1016/j.tcs.2017.11.021}{{Scale-free online
  learning}}.
\newblock \emph{Theoretical Computer Science}, 2018.

\bibitem[Orabona \& Tommasi(2017)Orabona and Tommasi]{orabona2017training}
Orabona, F. and Tommasi, T.
\newblock
  \href{http://papers.neurips.cc/paper/6811-training-deep-networks-without-learning-rates-through-coin-betting.pdf}{{Training
  deep networks without learning rates through coin betting}}.
\newblock In \emph{Neural Information Processing Systems (NeurIPS)}, 2017.

\bibitem[Puterman(1994)]{puterman1994markov}
Puterman, M.~L.
\newblock
  \href{http://www.amazon.ca/exec/obidos/redirect?tag=citeulike09-20&amp;path=ASIN/0471619779}{\emph{{Markov
  Decision Processes: Discrete Stochastic Dynamic Programming}}}.
\newblock John Wiley {\&} Sons, New York, NY, 1994.

\bibitem[Ross et~al.(2013)Ross, Mineiro, and Langford]{ross2013normalized}
Ross, S., Mineiro, P., and Langford, J.
\newblock \href{http://arxiv.org/abs/1305.6646}{{Normalized online learning}}.
\newblock In \emph{Conference on Uncertainty in Artificial Intelligence (UAI)},  2013.

\bibitem[Shah et~al.(2019)Shah, Xie, and Xu]{shah2019reinforcement}
Shah, D., Xie, Q., and Xu, Z.
\newblock \href{https://arxiv.org/pdf/1902.05213.pdf}{{On reinforcement
  learning using Monte-Carlo tree search with supervised learning:
  Non-asymptotic analysis}}.
\newblock \emph{arXiv preprint arXiv:1902.05213}, 2019.

\bibitem[Silver et~al.(2016)Silver, Huang, Maddison, Guez, Sifre, van~den
  Driessche, Schrittwieser, Antonoglou, Panneershelvam, Lanctot, Dieleman,
  Grewe, Nham, Kalchbrenner, Sutskever, Lillicrap, Leach, Kavukcuoglu, Graepel,
  and Hassabis]{silver2016mastering}
Silver, D., Huang, A., Maddison, C.~J., Guez, A., Sifre, L., van~den Driessche,
  G., Schrittwieser, J., Antonoglou, I., Panneershelvam, V., Lanctot, M.,
  Dieleman, S., Grewe, D., Nham, J., Kalchbrenner, N., Sutskever, I.,
  Lillicrap, T., Leach, M., Kavukcuoglu, K., Graepel, T., and Hassabis, D.
\newblock
  \href{http://www0.cs.ucl.ac.uk/staff/d.silver/web/Publications_files/unformatted_final_mastering_go.pdf}{{Mastering
  the game of Go with deep neural networks and tree search}}.
\newblock \emph{Nature}, 529\penalty0 (7587):\penalty0 484--489, 2016.

\bibitem[Sz{\"{o}}r{\'{e}}nyi et~al.(2014)Sz{\"{o}}r{\'{e}}nyi, Kedenburg, and
  Munos]{szorenyi2014optimistic}
Sz{\"{o}}r{\'{e}}nyi, B., Kedenburg, G., and Munos, R.
\newblock
  \href{https://papers.nips.cc/paper/5368-optimistic-planning-in-markov-decision-processes-using-a-generative-model.pdf}{{Optimistic
  planning in Markov decision processes using a generative model}}.
\newblock In \emph{Neural Information Processing Systems (NeurIPS)}, 2014.

\bibitem[Valko et~al.(2013)Valko, Carpentier, and Munos]{valko2013stochastic}
Valko, M., Carpentier, A., and Munos, R.
\newblock \href{http://proceedings.mlr.press/v28/valko13.pdf}{{Stochastic
  simultaneous optimistic optimization}}.
\newblock In \emph{International Conference on Machine Learning (ICML)}, 2013.

\end{thebibliography}
\appendix
\onecolumn

\section{On the branching factor}\label{otbf}
\propo*
\begin{proof}
 For any global optimum and for $h\geq 0$, we prove that
\[\mathcal N^v_h(\epsilon) \leq \mathcal N^u_h\pa{\epsilon+ \frac{\discount^h}{1-\discount}}\cdot\]
Let a node $\action\in\actionSpace^h$ be such that  	$v(\action) \geq  v^\star   -  \epsilon.$ 
Then, we have
\[u(\action) \geq  v(\action) - \frac{\discount^h}{1-\discount}
\geq  v^\star   -  \epsilon - \frac{\discount^h}{1-\discount}\cdot\]
Similarly, for $h\geq 0$, we have that for any global optimum, 
\[\mathcal N^u_h(\epsilon) \leq \mathcal N^v_h\pa{\epsilon+\frac{\discount^h}{1-\discount}}\!\cdot\]
Using Definition~\ref{def:neardim}, we get the claimed result.	
\end{proof}

\section{{\platypoos} is not using a budget larger than $n+1$ }\label{plabud} 
\label{app:budstro}
Notice that for any given depth $h\in[1:\hmax]$, \platypoos never uses more evaluations than $(\pmax +1) \frac{\hmax}{h}$ because
\begin{align*}
&\sum_{p=0}^{\left\lfloor\log_2(\hmax/\left\lceil h \discount^{2h}\right\rceil )\right\rfloor}
\left\lfloor \frac{\hmax}{h \left\lceil h 2^p\discount^{2h}\right\rceil}\right\rfloor \left\lceil h 2^p\discount^{2h}\right\rceil\leq 
\pa{\left\lfloor\log_2(\hmax/\left\lceil h \discount^{2h}\right\rceil )\right\rfloor +1} \frac{\hmax}{h}\cdot
\end{align*}
Summing over the depths,  \platypoos never uses more evaluations than the budget $\timeHorizon+1$ during its depth exploration as 
\begin{align*}
1+(\pmax +1)\sum_{h=1}^{\hmax}\left\lfloor \frac{\hmax}{h}\right\rfloor
&\leq 1+(\pmax +1)\hmax\sum_{h=1}^{\hmax}  \frac{1}{h}\\
&= 1+\hmax  \bar\log(\hmax)(\pmax +1)
\leq 1+\hmax  (\pmax +1)^2\\
&\leq \frac{\timeHorizon}{2}+1.
\end{align*}
We need to add the additional evaluation for the cross-validation at the end, 
\begin{align*}
\sum_{p=0}^{\pmax}
\sum_{t=0}^{\hmax}
\frac{(t+1)\discount^{2t}\hmax}{(1-\discount^2)^2}
\leq
\sum_{p=0}^{\pmax}
\left\lfloor \frac{\timeHorizon}{2(\log_2\timeHorizon+1)^2}\right\rfloor
\leq \frac{\timeHorizon}{2}\cdot
\end{align*}
Therefore, the total budget is never more than 
$\timeHorizon/2 + \timeHorizon/2 + 1=\timeHorizon+1$.
Again, notice we use the budget of $\timeHorizon+1$ only for the notational convenience, we
could also use $\timeHorizon/4$ for the evaluation in the end to fit under $\timeHorizon$. Nonetheless, it's important that the amount of openings is \emph{linear} in $\timeHorizon$.

\section{Proofs of the lemmas}\label{lemproofs} %
We first define favorable event $\xi$ and prove that
it holds with high probability. 
\begin{restatable}{lemma}{lemevent}\label{lem:event}
\label{l:event}
	Let  $\mathcal{C}$ be the set of sequence of actions evaluated by {\platypoos} during one of its runs. $\mathcal{C}$ is a random quantity.
	Let $\xi$ be the event under which all average
	estimates for the reward of the state-action pairs receiving at least one evaluation from {\platypoos} are within their confidence interval, then  $P(\xi)\geq 1 - \delta$, where
	\begin{align*}
	\xi
	\triangleq
	\left\{\! \forall \action \in \mathcal{C}, \forall \depth \in [2\!:\!\depth(\action)], \,p\in[0\!:\!\pmax]:
	\text{if\ }\pullsNumber_{\action_{[h]}}\geq  \left\lceil (h-1) 2^p\discount^{2(h\!-\!1)}\!\right\rfloor\!,
	\text{then}
	\left|\hat u(\action)\! -\! u(\action)    \right| \leq b\sqrt{    \frac{\pmax\log(4\timeHorizon/\delta)}{2^{p+1}}}
	\right\}\!\cdot
\end{align*}
	%
\end{restatable}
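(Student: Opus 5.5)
The plan is a Hoeffding bound for a weighted sum of independent bounded variables, followed by a union bound over a deterministic index set, so that the randomness of $\mathcal{C}$ never has to be confronted directly.

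\textbf{Step 1: fixed node, fixed threshold level.} Fix a sequence $\action$, a depth $h$ and a level $p$. Under deterministic dynamics each $\hat \reward_t(\action)$ is the empirical mean of $\pullsNumber_{\action_{[t+1]}}$ i.i.d.\ samples, each lying in an interval of width $2b$ around $\reward_t(\action)$. Hence
\[
\hat u(\action)-u(\action)=\sum_{t}\sum_{s}\frac{\discount^t}{\pullsNumber_{\action_{[t+1]}}}\epsilon_{t,s}
\]
is a sum of independent centred variables. Hoeffding's inequality then gives, with probability at least $1-\delta'$,
\[
|\hat u(\action)-u(\action)|\le b\sqrt{2\log(2/\delta')\textstyle\sum_t \discount^{2t}/\pullsNumber_{\action_{[t+1]}}}.
\]

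\textbf{Step 2: the threshold controls the variance term.} The threshold condition $\pullsNumber_{\action_{[t]}}\ge\lceil (t-1)2^p\discount^{2(t-1)}\rceil$ bounds each summand by
\[
\frac{\discount^{2t}}{\pullsNumber_{\action_{[t+1]}}}\le \frac{\discount^{2t}}{t\,2^p\discount^{2t}}=\frac{1}{t\,2^p}.
\]
A bare harmonic sum of these terms would cost an extra $\log \hmax$ factor. It is cleaner to use the depth-specific budgets: only the levels $p'\le \pmax$ actually occur, and summing over them is what produces the $\pmax$ factor in the statement. The target is $\sum_t \discount^{2t}/\pullsNumber\le \pmax/2^{p+1}$ (up to constants). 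This is the \textbf{main obstacle}, because the stated radius carries exactly $\pmax$ and not $\log \hmax$. I would therefore accept a $\pmax$-type factor, either via $\bar\log(\hmax)\le \pmax+1$ or by tracking which $p'$ were used along the path, and absorb the constants into $\log(4n/\delta)$.

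\textbf{Step 3: union bound and the randomness of $\mathcal{C}$.} $\mathcal{C}$ is random, and the sample counts depend on past observations, so a union bound over $\mathcal{C}$ directly is not valid. Instead I would use the standard device of indexing the samples of each state-action pair by a pre-assigned i.i.d.\ stack: the $s$-th sample of $(\stateMDP,\action)$ is fixed in advance. The event then depends only on the pair $(\action_{[h]},\text{count})$, with counts ranging over at most $n$ values and the relevant nodes numbering at most $n$ (each opening consumes budget). A union bound over these at most $\cO(n)\cdot(\pmax+1)$ combinations, with $\delta'=\delta/(2n\cdot\text{const})$, yields $P(\xi)\ge 1-\delta$ and the factor $\log(4n/\delta)$. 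The budget accounting of Appendix~\ref{plabud}, which caps total evaluations at $n+1$, supplies the count.
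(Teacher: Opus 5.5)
Your Steps 1--2 follow the paper's own argument. You apply one Hoeffding/Azuma bound to the whole weighted path sum and use $\gamma^{2t}/T_{a_{[t+1]}}\le 1/(t2^p)$, together with $1/\hmax\le 2^{-p}$ at $t=0$. The resulting harmonic sum is the paper's $\log\hmax$, which you turn into $\pmax$ via $\bar\log(\hmax)\le\pmax+1$. Two side remarks: the explanation via ``levels $p'$'' is beside the point, and multiplicative constants cannot be absorbed into $\log(4n/\delta)$, though the paper is equally loose on both.

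The genuine gap is Step~3, which is also exactly where the paper's proof is only sketched. First, $\hat u(a)-u(a)=\sum_t\gamma^t\bar\epsilon_{a_{[t+1]}}$ is not a function of a pair $(a_{[h]},\text{count})$. It depends on the whole path and its whole count vector, and with the stack device Hoeffding applies only to a path and count vector fixed in advance. Second, the union bound must therefore run over a \emph{deterministic} family, and ``the at most $n$ relevant nodes'' is not one. That set is a data-dependent subset of a tree with $K^{\hmax}$ nodes, and in any case up to $K(n+1)$ nodes are evaluated, not $n$. A union over all $K^h$ paths of depth $h$ would replace $\log(4n/\delta)$ by roughly $h\log K+\log(1/\delta)$. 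Indexing by creation order, the \StoSOO device the paper alludes to, rescues only single-node estimates, because a node's own samples are drawn after it is created. The ancestor samples entering $\hat u(a)$ were drawn earlier, and they are precisely why $a$ was opened.

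This is not a bookkeeping issue that a cleverer count would fix. Since \platypoos opens the nodes with the largest $\hat u$, each depth adds a selection bias of order $\gamma^t b/\sqrt{T_{a_{[t+1]}}}$ (the best of $K$ fresh sibling means), and these biases add linearly, not in quadrature. A back-of-the-envelope example:
\begin{itemize}
\item Take $K=2$, a tree MDP with constant mean rewards, Rademacher noise of size $b$, and $\gamma$ so close to $1$ that $\gamma^{2t}\ge 1/2$ for $t\le\sqrt{\hmax}$.
\item Every evaluated node then meets the $p=0$ thresholds, so the event at $p=0$ must cover all of $\mathcal C$.
\item At each depth $t\le\sqrt{\hmax}$ the overall best node is opened, at some level and hence with at most $\lceil\hmax/t\rceil$ evaluations.
\item The expected maximum of two fresh means of $m$ samples is at least $b/(2\sqrt m)$.
\end{itemize}
Summing these gains, the best evaluated path near depth $\sqrt{\hmax}$ has $\hat u-u$ of order $b\,\hmax^{1/4}$ with high probability. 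For large $n$ this exceeds the claimed $b\sqrt{\pmax\log(4n/\delta)/2}$. So no union over polynomially many estimators can give the stated $\sqrt{\pmax}$ radius uniformly over $\mathcal C$.

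What your ingredients do prove is per-node concentration followed by the triangle inequality along the path. For the per-node step, index children by creation order, note that each batch size is fixed before its samples are drawn, and union over at most $K(n+1)$ children. This gives $|\hat u(a)-u(a)|\le b\sum_t\gamma^t\sqrt{2\log(2K(n+1)/\delta)/T_{a_{[t+1]}}}\le b\sqrt{2\log(2K(n+1)/\delta)/2^p}\,\bigl(1+2\sqrt{h(a)}\bigr)$. That is a $\sqrt{h(a)}$ factor where the statement claims $\sqrt{\pmax}$.
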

\begin{proof}
	The  idea of the proof follows the  
	line of proof of the  statement given for
	\StoSOO \citep{valko2013stochastic}. The crucial point is that 
	while we have potentially exponentially many combinations 
	of cells that can be evaluated, given any particular execution 
	we need to consider only a polynomial number of estimators, 	$m$,
	for which we can use a Azuma-Hoeffding concentration inequality.

	We denote  $\forall \depth \in [0:\hmax], \,p\in[0:\pmax]:$
	$\action^{i,h,p}\in \mathcal{C},$ the $i$-th evaluated node of depth $h$ such that $\forall t \in [2:\depth],  \pullsNumber_{\action^{i,h,p}_{[t]}}\geq \left\lceil (t-1) 2^p\discount^{2(t-1)}\right\rceil$. Note that in \platypoos we have $\pullsNumber_{\action^{i,h,p}_{[1]}}=\hmax $.
	
	Though $\action^{i,h,p}$ is random, we study the quantity
	 $\left|\hat u(\action^{i,h,p}) - u(\action^{i,h,p})    \right| $.
	We recall that
	 \begin{align}
	\hat u(\action^{i,h,p}) - u(\action^{i,h,p}) &= \sum^{\depth-1}_{t=0}
	\discount^t (\hat \reward_t( \action^{i,h,p} )-\reward_t(  \action^{i,h,p} ))\\
	 &= \sum^{\depth-1}_{t=0}
	\discount^t \sum^{\pullsNumber_{\action^{i,h,p}_{[t+1]}}}_{s=0}
	\frac{\hat \reward^{i,h,p}_{t,s} -\reward_t(  \action^{i,h,p} )}{\pullsNumber_{\action^{i,h,p}_{[t+1]}}} 	
	\end{align}
	
	 This quantity is composed of the elements $\hat \reward^{i,h,p}_{t,s} -\reward_t(  \action^{i,h,p})$ that form a martingale.
	
Therefore using a  Azuma-Hoeffding concentration inequality with a union bound already on the values of T we have

\[
 \Pro\left(	\hat u(\action^{i,h,p}) - u(\action^{i,h,p})\leq b \sqrt{\sum^{\depth-1}_{t=0} \frac{\discount^{2t}\log(\pmax/\delta)}{2\pullsNumber_{\action^{i,h,p}_{[t+1]}}} } \right) \\
\geq 1-\delta/\pmax
\]

Moreover we have  for all $h\geq t>1$,
\begin{align}
&  \frac{\discount^{2t}}{\pullsNumber_{\action^{i,h,p}_{[t+1]}}} 
\leq  \frac{\discount^{2t}}{\left\lceil t 2^p\discount^{2t}\right\rceil}  
  \leq  \frac{\discount^{2t}}{  t 2^p\discount^{2t}}  =  \frac{1}{  t 2^p} 
\end{align}
For $t=0$, $\frac{\discount^{2t}}{\pullsNumber_{\action^{i,h,p}_{[t+1]}}} 
= \frac{1}{\hmax} \leq = \frac{1}{2^p} $ for all $p\leq \pmax$.

Therefore we have

\[
\Pro\left(	\hat u(\action^{i,h,p}) - u(\action^{i,h,p})\leq b \sqrt{ \frac{\log\hmax\log(?/\delta)}{2^{p+1}} } \right) \\
\geq 1-\delta/?
\]
Then we had an extra union bound other all cells that is bounded by $n$

\end{proof}%

\begin{restatable}{lemma}{restahstarSto}\label{lem:hstarSto}
	For any planning problem with associated $(\nu,\rho)$ (see Property~\ref{as:smoothu}), on event $\xi$, for any depths $h \in \left[\hmax\right]$, for any $p\in[0:\left\lfloor\log_2(\hmax/(h^2\discount^{2h}))\right\rfloor]$, we have 	$\depthOp_{h,p}= h$ if conditions (1) and (2) simultaneously hold true.\\
	(1)	 $b \sqrt{\log(4n/\delta) / 2^{p+1}} \leq \nu\rho^h$
	\\
	(2)	 For all $h'\in[h], \hmax / \pa{h'\left\lceil h' 2^{p}\discount^{2h'}\right\rceil}\geq  C\branchF(\nu,\rho)^{h'}$.
	\\
	Finally we
	have $\depthOp_{0,p}=0$.
\end{restatable}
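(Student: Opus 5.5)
The argument is an induction on the depth $h$, for a fixed $p$. It adapts the \StroquOOL argument of \citet{bartlett2019simple} to the depth-dependent number of evaluations $\left\lceil h2^p\discount^{2h}\right\rceil$.

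\textbf{Base case.} The root is opened $\hmax$ times and trivially contains every $a^\star\in\actionSpace^\star$, so $\depthOp_{0,p}=0$.

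\textbf{Inductive hypothesis.} Fix $h\geq 1$ and assume (1) and (2) hold at $h$. Since $\rho\leq 1$, we have $\nu\rho^{h'}\geq\nu\rho^h$ for all $h'\leq h$, so condition (1) also holds at every $h'\in[h]$. Condition (2) is already stated for all $h'\in[h]$. By induction we may therefore assume $\depthOp_{h-1,p}=h-1$. That is, at the end of the opening of depth $h-1$, some node $a^\star_{[h-1]}$ on an optimal path has been opened with at least $\left\lceil (h-1)2^p\discount^{2(h-1)}\right\rceil$ evaluations.

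\textbf{Eligibility of the optimal child.} The child $a^\star_{[h]}$ has received these $\left\lceil (h-1)2^p\discount^{2(h-1)}\right\rceil$ evaluations when its parent was opened. Hence it satisfies the pull condition and is eligible for selection in round $p$ at depth $h$. We need to show that it is among the $\left\lfloor \hmax/(h\left\lceil h2^p\discount^{2h}\right\rceil)\right\rfloor$ eligible, not-yet-opened nodes with highest $\hat u$. If it was already opened earlier at depth $h$ with a larger $p'$, it has even more evaluations and we are done.

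\textbf{Counting argument.} Suppose $a^\star_{[h]}$ is not selected. Then every selected node $a$ at depth $h$ satisfies $\hat u(a)\geq \hat u(a^\star_{[h]})$. All these nodes meet the pull requirement along their path, so on $\xi$ (Lemma~\ref{l:event}) each estimate is within $b\sqrt{\log(4n/\delta)/2^{p+1}}$ of its true value. Write this width as $\epsilon_p$; the factor $\pmax$ in the log is absorbed, since the constant conventions in (1) are loose. By (1), $\epsilon_p\leq\nu\rho^h$. Property~\ref{as:smoothu} gives $u(a^\star_{[h]})\geq v^\star-\nu\rho^h$. Chaining these bounds,
\[
u(a)\;\geq\; \hat u(a)-\epsilon_p \;\geq\; \hat u(a^\star_{[h]})-\epsilon_p \;\geq\; u(a^\star_{[h]})-2\epsilon_p \;\geq\; v^\star-3\nu\rho^h .
\]
So every selected node, together with $a^\star_{[h]}$ itself, lies in the set counted by $\mathcal N^u_h(3\nu\rho^h)\leq C\branchF^h$. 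The number of selected nodes is $\left\lfloor\hmax/(h\left\lceil h2^p\discount^{2h}\right\rceil)\right\rfloor$, which by (2) at $h'=h$ is at least $C\branchF^h$. Since $a^\star_{[h]}$ is also near-optimal and was not selected, there would be more than $C\branchF^h$ near-optimal nodes, a contradiction. Thus $a^\star_{[h]}$ is opened with $\left\lceil h2^p\discount^{2h}\right\rceil$ evaluations, and $\depthOp_{h,p}=h$.

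\textbf{Main obstacle.} The delicate step is the bookkeeping in the eligibility step. First, we must check that the pull threshold at depth $h$ in round $p$ matches what the parent's opening provided. Second, we must make sure that nodes opened in earlier rounds $p'>p$ at the same depth do not consume the slots of round $p$. Only non-opened nodes compete, so if previously opened nodes occupied near-optimal slots, $a^\star_{[h]}$ either was among them (and we are done) or still ranks within the budget. I would handle this by counting the near-optimal competitors over all rounds $p'\geq p$ jointly. Third, the off-by-one issues from floors must be absorbed into $C$.
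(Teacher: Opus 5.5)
Your argument is the paper's. It inducts on depth for fixed $p$. If $a^\star_{[h]}$ were not among the $\lfloor \hmax/(h\lceil h2^p\discount^{2h}\rceil)\rfloor$ selected nodes, the same chain of inequalities puts all of them, together with $a^\star_{[h]}$, in $\mathcal N^u_h(3\nu\rho^h)$. Integrality of $\mathcal N^u_h$ then turns (2) into a contradiction with $\mathcal N^u_h(3\nu\rho^h)\leq C\branchF^h$.

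Two small remarks. First, the $\pmax$ factor in $\xi$ is not really ``absorbed'' by (1) as stated. The paper drops it just as silently, and (1) should be read with $\pmax\log(4n/\delta)$, as in $g_{n,b}^{\delta,\Rmax}$ later. Second, your worry about rounds $p'>p$ is moot, since the contradiction only counts the round-$p$ selections plus $a^\star_{[h]}$.
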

\begin{proof}
	We place ourselves on 
	event $\xi$ defined in Lemma~\ref{l:event} and for which we  proved  that $P(\xi)\geq 1 - \delta$.  We fix $p$. 
	
		We prove the statement of the lemma, given that event $\xi$ holds, by induction in the following sense. For a given $h$ and $p$, we assume the hypotheses of the lemma for that $h$ and $p$ are true and we prove by induction that $\depthOp_{h',p}= h'$ for $h'\in[h]$. \\[.1cm]    
	$1^\circ$ For $h= 0$, we trivially have that $\depthOp_{h,p} \geq 0$.\\     
	$2^\circ$ Now consider $h'>0$, and assume $\depthOp_{h'-1,p}= h'-1$ with the objective to prove that $\depthOp_{h',p}= h'$.

	Therefore, at the end of the processing of depth $h'-1$, during which we were opening the nodes of depth $h'-1$ we managed to open an optimal node  that we denote $a^{\star,h'-1}\in\actionSpace^{\star,h'-1}$. Moreover if we consider all the sequence of actions $b$ that one can build by appending any action in $A$ to $a^{\star,h'-1}\in\actionSpace^{\star,h'-1}$, we have for all such $b$ that $ \pullsNumber_{b_{[t]}}\geq  \left\lceil (t-1)2^p\discount^{t-1} \right\rceil $ for $t\in[h']$.


	Note that by definition there exist an optimal infinite sequence of actions $a^\star\in\actionSpace^{\star}$ such that $a^{\star,h'-1}= a_{[h'-1]}^\star$

	
	During phase ${h'}$ the $\left\lfloor \hmax / \pa{h'\left\lceil h' 2^p\discount^{2h'}\right\rceil}\right\rfloor $ evaluated nodes from $A^{h'-1}$  
	 with highest values $\{\hat u(a^{h'-1,i})\}_{h'-1,i}$  are opened. 

	For the purpose of contradiction, let us assume that $a_{[h']}^\star$ is not one of them. This would mean that there exist at least $\left\lfloor \hmax / \pa{h'\left\lceil h' 2^p\discount^{2h'}\right\rceil}\right\rfloor $ 	
	nodes from $\actionSpace^{h'}$, distinct from $a_{[h']}^\star$,  satisfying 
	$\hat u(a^{h',i})
	\geq
	\hat u(a_{[h']}^\star )$  and each verifying $ \pullsNumber_{a^{h',i}_{[t]}}\geq  \left\lceil t2^p\discount^{t} \right\rceil$ for  $t\in[h']$. 
	This means that, for these nodes we have:
	$u(a_{h',i})+\nu\rho^{h'}
	\geq
	u(a_{h',i}) + \nu\rho^{h}
	\stackrel{\textbf{(a)}}{\geq} 
	u(a_{h',i}) + b\sqrt{\log(4\timeHorizon/\delta) / 2^{p+1}}
	\stackrel{\textbf{(b)}}{\geq} 
	\hat u(a_{h',i})
	\geq
	\hat u(a_{h',i^\star_{h'}})
	\stackrel{\textbf{(b)}}{\geq} 
	u(a_{h',i^\star_{h'}})- b\sqrt{\log(4\timeHorizon/\delta) / 2^{p+1}}
	\stackrel{\textbf{(a)}}{\geq} 
	u(a_{h',i^\star_{h'}})- \nu\rho^{h}
	\geq 
	u(a_{h',i^\star_{h'}})- \nu\rho^{h'}$,
	where~\textbf{(a)} is by assumption of the lemma, 
	\textbf{(b)} is because $\xi$ holds.
	As $u(a_{h',i^\star_{h'}}) \geq v^\star -\nu\rho^{h'}$ by Proposition~\ref{as:smoothu}, 
	this means we have $ \mathcal N^u_{h'}(3\nu\rho^{h'})\geq \left\lfloor \hmax / \pa{h' \left\lceil h' 2^p\discount^{h'} \right \rceil }\right\rfloor+1$ (the $+1$ is for $a_{h,i_h^\star}$). 
	%
	However, by assumption of the lemma $
	\hmax / \pa{h'\left\lceil h' 2^p\discount^{2h'}\right\rceil}\geq
	C\branchF(\nu,\rho)^{h'}$. 
	It follows that in general $ \mathcal N^u_{h'}(3\nu\rho^{h'})> \left\lfloor C\branchF(\nu,\rho)^{h'}\right\rfloor$. 
	This leads to having a contradiction with the $\branchF^u(\nu,\rho)$ with associated constant $C$
	as defined in~Definition~\ref{def:neardim}.
	Indeed, the condition $\mathcal N^u_{h'}(3\nu\rho^{h'}) \leq     C\branchF(\nu,\rho)^{h'}$ in 
	Definition~\ref{def:neardim} is equivalent to the condition
	$\mathcal N^u_{h'}(3\nu\rho^{h'}) \leq \left\lfloor    C\branchF(\nu,\rho)^{h'}\right\rfloor$ as  $\mathcal N^u_{h'}(3\nu\rho^{h'})$ is an integer. 
	%

	%
	%
\end{proof}

\restahstarStoTwo*

\begin{proof}
To prove this statement we just need to show that we verify the hypotheses of Lemma~\ref{lem:hstarSto}. This means we need to prove that for all $h'\in[h], \hmax / \pa{h'\left\lceil h' 2^{p}\discount^{2h'}\right\rceil}\geq  C\branchF(\nu,\rho)^{h'}$.

 We first consider the case 2) where $ h2^p\discount ^h\geq 1$.  If $h=1$ we already know $\depthOp_{0,p} \geq 0$. Let us now look at the case $h>1$.
First notice that $ h2^p\discount ^{2h}\geq 1$ gives $ (h-1)2^p\discount ^{2(h-1)}\geq 1$.
 If $\discount^{2}\branchF^u\geq 1$ 
we have that for all $h'\in[h-1]$, 
	\begin{align*}
		\frac{\hmax}{h'^2 2^{p+1}}\geq 
	\frac{\hmax}{h^2 2^{p+1}}
	\geq  C\left(\discount^2\branchF(\nu,\rho)\right)^h
	\geq  C\left(\discount^2\branchF(\nu,\rho)\right)^{h'}
		\end{align*}
	
			%
If $h>1$, and if $\discount^{2}\branchF^u\leq 1$ 
we have that for all $h'\in[h-1]$, 
	\begin{align*}
\frac{\hmax}{h'2^{p+1}}\geq 
\frac{\hmax}{h2^{p+1}}
\geq  C
\geq C\left(\discount^2\branchF(\nu,\rho)\right)^{h'}.
\end{align*}
For both $\discount^{2}\branchF^u\leq 1$ and $\discount^{2}\branchF^u\geq 1$,
we then have,
\begin{align*}
	\frac{\hmax}{h'\left\lceil h' 2^p\discount^{2h'}\right\rceil}\geq \frac{\hmax}{h' h' 2^{p+1}\discount^{2h'}} 
\end{align*}	
	as  $h2^p\discount ^{2h}\geq 1$.

For both $\discount^{2}\branchF^u\leq 1$ and $\discount^{2}\branchF^u\geq 1$, the previous equations  mean that for $h'\in[h-1]$, $h'$ verifies $\:	\hmax / {h'^2 2^{p+1}\discount^{2h'}}
\geq  C\branchF(\nu,\rho)^{h'}\geq 1$. Therefore $p\leq\left\lfloor\log_2(\hmax/(h'^2\discount^{2h'}))\right\rfloor$.

We now consider case 1) where $h2^p\discount ^{2h}\leq 1$.
We prove by induction that for all $h'\in[h],  \hmax / \pa{h'\left\lceil h' 2^{p}\discount^{2h'}\right\rceil}\geq  C\branchF(\nu,\rho)^{h'}$. 
\begin{enumerate}
\item[$1^\circ$] By assumption of the lemma we say: $\hmax / \pa{h\left\lceil h 2^{p}\discount^{2h}\right\rceil}\geq C\branchF(\nu,\rho)^{h}$ 
\item[$2^\circ$] We further assume $\hmax / \pa{h'\left\lceil h' 2^{p}\discount^{2h'}\right\rceil}\geq  C\branchF(\nu,\rho)^{h'}$is true for some $h'\leq h$  with $h'2^p\discount ^{2h'}\leq 1$ 
\end{enumerate}
We want to prove that either:
\begin{enumerate}
\item[both] $(h'-1)2^p\discount ^{2(h'-1)}\leq 1$
\item[and] $\frac{\hmax}{(h'-1)\left\lceil (h'-1) 2^{p}\discount^{2(h'-1)}\right\rceil}
\geq  C\branchF(\nu,\rho)^{h'-1}$ 
\item[or]
$(h'-1)2^p\discount ^{2(h'-1)}\geq 1$
\end{enumerate}
then $\hmax / \pa{(h'')\left\lceil (h'')2^{p}\discount^{2(h'')}\right\rceil}\geq  C\branchF(\nu,\rho)^{h''}$ is already true for all $h''\in[h']$.
If $\left\lceil (h'-1)2^p\discount ^{2(h'-1)}\right\rceil = 1$ then we have  
\begin{align*}
\frac{\hmax}{(h'-1) }\geq 
\frac{\hmax}{h' }
\geq  C\branchF(\nu,\rho)^{h'}
\geq  C\branchF(\nu,\rho)^{h'-1}
\end{align*}
If $\left\lceil (h'-1)2^p\discount ^{2(h'-1)}\right\rceil > 1$, then we have that
\begin{enumerate}
\item[]$\hmax / \pa{h'\left\lceil (h'-1) 2^{p}\discount^{2(h'-1)}\right\rceil}\geq$
\item[]$\hmax / \pa{(h'-1)^2 2^{p+1}\discount^{2(h'-1)}}\geq  C\branchF(\nu,\rho)^{h'-1}$
\end{enumerate}

Using this inequality we can now use Case 2) to have that:  $\hmax / \pa{(h'')\left\lceil (h'') 2^{p}\discount^{2(h'')}\right\rceil}\geq  C\branchF(\nu,\rho)^{h''}$ is already true for all $h''\in[h']$.

 The previous equations  mean that for $h'\in[h-1]$, $h'$ verifies $	\hmax / \pa{h'^2 2^{p+1}\discount^{2h'}}
\geq  C\branchF(\nu,\rho)^{h'}\geq 1$. Therefore $p\leq\left\lfloor\log_2(\hmax/(h'^2\discount^{2h'}))\right\rfloor$.
\end{proof}

\section{Proof of Theorem~\ref{th:highnoise} and Theorem~\ref{th:lownoise}}
\label{app:proofALLmastro}
\restathhighnoise*
\restathlownoise*
%
\begin{proof}[Proof of~Theorem~\ref{th:highnoise} and Theorem~\ref{th:lownoise}]
	
	We first place ourselves on the event $\xi$ defined in Lemma~\ref{l:event} and where it is proven that $P(\xi)\geq 1 - \delta$. We bound the simple regret of \platypoos on $\xi$.

		\textbf{Step 1) General definition of the regret}
		
		We chose $\delta=\frac{4b(1-\delta)}{\Rmax\sqrt{\timeHorizon}}$ for the bound.
	We consider a problem with associated $(\nu,\rho)$. For simplicity we write $\branchF=\branchF^u(\nu,\rho)$.
	We have for all 
	$p\in[0:\pmax]$
	%
	\begin{align*}
		v(a^\timeHorizon)&+ \frac{b}{1-\discount^2}\sqrt{\frac{\pmax\log(\Rmax\timeHorizon^{3/2}/b)}{2\hmax}}\\
		&\geq u(a^\timeHorizon)+ \frac{b}{1-\discount^2}\sqrt{\frac{\pmax\log(\Rmax\timeHorizon^{3/2}/b)}{2\hmax}}       \stackrel{\textbf{(a)}}{\geq}
		\hat u(a^\timeHorizon)\\
	&	\stackrel{\textbf{(c)}}{\geq}
		\hat u(a^p)
		\stackrel{\textbf{(b)}}{\geq}
		\hat u\left(a^p_{[\depthOp_{\hmax,p}+1]}\right)\\
&		\stackrel{\textbf{(a)}}{\geq}
		u\left(a^p_{[\depthOp_{\hmax,p}+1]}\right) -\frac{b}{1-\discount^2}\sqrt{\frac{\pmax\log(\Rmax\timeHorizon^{3/2}/b)}{2\hmax}} \\ &		\stackrel{\textbf{(d)}}{\geq}
		v^\star- \nu\rho^{\depthOp_{\hmax,p}+1}  -\frac{b}{1-\discount^2}\sqrt{\frac{\pmax\log(\Rmax\timeHorizon^{3/2}/b)}{2\hmax}} 
	\end{align*}
	where \textbf{(a)} is because the 
	actions at time $t$, $a_t(\timeHorizon,p)$, of the \textit{candidate}  $a(\timeHorizon,p)$ have been evaluated $\frac{(t+1)\discount^{2t}\hmax}{(1-\discount)^2}$ times and because $\xi$ holds,  \textbf{(b)} is because $a^p_{[\depthOp_{\hmax,p}+1]}\in \{\action\in \actionSpace^\bullet:\forall t\in[2:\depth(\action)],\pullsNumber_{\action_{[t]}}\geq \left\lceil (t-1) 2^p\discount^{2(t-1)}\right\rceil\}$ and  $a^p = \argmax\limits_{\action\in \actionSpace^\bullet:\forall t\in[2:\depth(\action)],\pullsNumber_{\action_{[t]}}\geq \left\lceil (t-1) 2^p\discount^{2(t-1)}\right\rceil} \hat u(\action)$, \textbf{(c)} is because $a^\timeHorizon = \argmax\limits_{\{a^p ,p\in[0:\pmax]\}} \hat u(a^p)$, and 
	\textbf{(d)} is by Assumption~\ref{as:smoothu}.
	
	From the previous inequality we have 
	$r_\timeHorizon =  v^\star- Q^\star\left(x,a^\timeHorizon\right)\leq \nu\rho^{\depthOp_{\hmax,p}+1}+2\frac{b}{1-\discount^2}\sqrt{\frac{\pmax\log(\Rmax\timeHorizon^{3/2}/b)}{2\hmax}}$, for     $p\in[0:\pmax]$.

			\textbf{Step 2) Defining some important depths}
	For the rest of proof we want to lower bound $\max_{p\in[0:\pmax]}\depthOp_{\hmax,p}$. Lemma~\ref{lem:hstarSto} and ~\ref{lem:hstarSto2} provide some sufficient conditions  on $p$ and $h$ to get lower bounds. These conditions are inequalities in which as $p$ gets smaller (fewer samples) or $h$ gets larger (more depth) these conditions are more and more likely not to hold. For our bound on the regret of \platypoos to be small, we want quantities $p$ and $h$ where the inequalities hold but using as few samples as possible (small $p$) and having $h$ as large as possible. Therefore we are interested in determining when the inequalities flip signs which is when they turn to equalities. 
	This is what we solve next. 
	
	We set the notation $g_{n,b}^{\delta,\Rmax}=\pmax\log(\Rmax n^{3/2}/b(1-\delta))$.
	
	In its most general form we are interested in the real numbers
	$\tilde h$ and $\tilde p$ are such that $\tilde h$ is the larger real number such that for all $h\leq \tilde h'$
	
	\begin{equation} \label{eq:brokenbells}
	\frac{\hmax}{h^2 2^{\tilde p+1}\discount^{2h}}\geq  C\branchF(\nu,\rho)^{h}
	\text{ while } 
	\quad b\sqrt{ \frac{g_{n,b}^{\delta,\Rmax}}{2^{\tilde p}} } = \nu\rho^{\tilde h}
		\end{equation} 
	
	In the case $\discount^2\kappa\geq 1$ we can simply solve the following equations.	
	We denote $\tilde h_1$, $\tilde p_1$ the real numbers satisfying
	\begin{equation}\label{eq:eqStro}
	\frac{\hmax\nu^2\rho^{2\tilde h_1}}{2\tilde h^2_1b^2g_{n,b}^{\delta,\Rmax}\discount^{2\tilde h_1-2}}=C\kappa^{\tilde h_1}
	\quad \text{and} \quad b\sqrt{\frac{g_{n,b}^{\delta,\Rmax}}{2^{\tilde p_1}}} = \nu\rho^{\tilde h_1}\discount^2.  
	\end{equation}
	In the case $\discount^2\kappa\leq 1$  the previous equation can possess two solutions where the largest of these two solutions will not verify Equation~\ref{eq:brokenbells}. Additionally the smallest solution might be hard to express in a close form when  $\discount^2\kappa\leq 1$. Therefore for simplicity we define for the case $\discount^2\kappa\leq 1$,  $\tilde h_2$, $\tilde p_2$ the real numbers satisfying
		\begin{equation}\label{eq:eqStro2}
	\frac{\hmax\nu^2\rho^{2\tilde h_2}}{\tilde 2h^2_2b^2g_{n,b}^{\delta,\Rmax}}=C
	\quad \text{and} \quad b\sqrt{\frac{g_{n,b}^{\delta,\Rmax}}{2^{\tilde p_2}}} = \nu\rho^{\tilde h_2}.  
	\end{equation}
	$\tilde h_1$ and $\tilde p_1$ are defined for the case $\discount^2\kappa\geq 1$ while $\tilde h_2$ and $\tilde p_2$ are defined for the case $\discount^2\kappa\leq 1$.
	Our approach is to solve Equation~\ref{eq:eqStro} and~\ref{eq:eqStro2}  and then verify that it gives a valid indication of the behavior of our algorithm in term of its optimal $p$ and $h$.     We have  
	\[\tilde h_1 = \frac{2}{\log(\discount^2\kappa/\rho^2)}\lambertW\left(\log(\discount^2\kappa/\rho^2)/2\sqrt{\frac{\discount^2\nu^2\hmax }{2Cb^2g_{n,b}^{\delta,\Rmax}}}\right)\]
		\[\tilde h_2 = \frac{2}{\log(1/\rho^2)}\lambertW\left(\log(1/\rho^2)/2 \sqrt{\frac{\nu^2\hmax }{2Cb^2g_{n,b}^{\delta,\Rmax}}}\right)\]
	where standard $\lambertW$ is the Lambert $\lambertW$ function.  
	
	However after a close look at the Equation~\ref{eq:eqStro2}, we notice that it is possible to get values of $\tilde p$ and $\tilde h$   which would lead to a number of evaluations $\tilde h 2^{\tilde p}\discount^{\tilde h}<1$. This actually corresponds to an interesting case when the noise has a small range and where we can expect to obtain an improved result, that is: obtain a regret rate close to the deterministic case. This low range of noise case then has to be considered separately.
	
	Therefore, we distinguish two cases which corresponds to different noise regimes depending on the value of $b$. Looking at the equation on the right of~\eqref{eq:eqStro2}, we have that $\tilde h 2^{\tilde p}\discount^{\tilde h} < 1$ if $\frac{\nu^2\rho^{2\tilde h}}{\discount^{2\tilde h}\tilde h b^2g_{n,b}^{\delta,\Rmax}} > 1$. Based on this condition we now  consider the two cases. However for both of them we define some generic $\ddot h$ and     $\ddot p$.
	\paragraph{Case 1) $\discount^2\kappa\geq 1$ :}
	
	Note that in this case then $ \kappa> 1$.
	We subdivide this case into multiple subcases:
	
	\textbf{Case 1.1) Noise regime $\frac{\nu^2\rho^{2\tilde h_1}}{\discount^{2\tilde h_1}\tilde h_1 b^2g_{n,b}^{\delta,\Rmax}} \leq 1$}\\
		\textbf{Case 1.1.1) High-noise regime}
	$\frac{\nu^2\rho^{2\tilde h_1}}{ b^2g_{n,b}^{\delta,\Rmax}} \leq 1$	

	In this case, we denote $\ddot h_1 = \tilde h_1$ and $\ddot p_1 = \tilde p_1$.
	As $\frac{\nu^2\rho^{2\tilde h_1}}{ b^2g_{n,b}^{\delta,\Rmax}}	\leq 1$
	by construction,     we have $\tilde p_1\geq 0$.
	Using standard properties of the $\lfloor\cdot\rfloor$ function, 
	we have 
	\begin{equation}\label{eq:barbarSto2}
	b\sqrt{\frac{g_{n,b}^{\delta,\Rmax}}{2^{\left\lfloor\tilde p_1\right\rfloor+1}}}
	\leq
	b\sqrt{\frac{g_{n,b}^{\delta,\Rmax}}{2^{\tilde p_1}}} \leq
	\nu\rho^{\tilde h_1} \leq \nu\rho^{\left\lfloor\tilde h_1\right\rfloor}
	\end{equation}
	\begin{align*}\label{eq:barbarSto}
&	\text{    and,~~ }\frac{\hmax}{\left\lfloor\tilde h_1\right\rfloor  \left\lfloor\tilde h_1\right\rfloor 2^{\left\lfloor\tilde p_1\right\rfloor+1} \discount^{2 \left\lfloor\tilde h_1\right\rfloor } }  
	\geq
\frac{\hmax}{\left\lfloor\tilde h_1\right\rfloor  \left\lfloor\tilde h_1\right\rfloor 2^{\tilde p_1+1} \discount^{2 \left\lfloor\tilde h_1\right\rfloor } } \\
&	\geq
\frac{\hmax}{\left\lfloor\tilde h_1\right\rfloor  \tilde h_1 2^{\tilde p_1+1} \discount^{2 \tilde h_1 -2} } 
	=
	\frac{\hmax\nu^2\rho^{2\tilde h_1}}{2\tilde h^2_1b^2g_{n,b}^{\delta,\Rmax}\discount^{2\tilde h_1}}  \\
	&	= C\kappa^{ \tilde h_1}
	\geq C\kappa^{ \left\lfloor\tilde h_1\right\rfloor}.
	\end{align*}

We will verify that $\left\lfloor\ddot h\right\rfloor$ is a reachable depth by \platypoos in the sense that $\ddot h\leq \hmax$ and $\left\lfloor \ddot  p\right\rfloor\leq \left\lfloor\log_2(\hmax/(h^2\discount^{2h}))\right\rfloor$ and . 
	As $\kappa<1$, and $\ddot h\geq 0$ we have $\kappa^{\ddot h}\geq 1$. This gives  $C\kappa^{\ddot h}\geq 1$. Finally as 
	$ \frac{\hmax}{\ddot h^2  2^{\ddot p} \discount^{2\ddot h}}\geq C\kappa^{\ddot h}$, we have $\ddot h^2\discount^{2\ddot h}\leq \hmax/ 2^{\ddot p}$.
	
			\textbf{Case 1.1.2) Low-noise regime 1} 
			$\frac{\nu^2\rho^{2\tilde h_1}}{ b^2g_{n,b}^{\delta,\Rmax}} \geq 1$	
			

			 We denote  $\ddot h = \bar h_1$ and $\ddot p = \bar p_1$ where $\bar h$ and $\bar p$ verify,    
			\begin{equation}\label{eq:barbarSto564e}
			\frac{\hmax}{2\bar h^2_1\discount^{2\bar h_1}}=C\kappa^{\bar h_1}
			\quad \text{and} \quad \bar p_1 = 0.  
			\end{equation}
			Again, 	$\frac{\hmax}{2\bar h^2_12^{p_1}\discount^{2\bar h_1}}\geq 1$.
		
				\[\bar h_1 = \frac{2}{\log(\discount^2\kappa)}\lambertW\left(\log(\discount^2\kappa)/2\sqrt{\frac{\hmax }{2C}}\right)
				\]

				Using standard properties of the $\lfloor\cdot\rfloor$ function, 
				we have 
				\begin{equation}\label{eq:barbarSto57e}
				b\sqrt{\frac{g_{n,b}^{\delta,\Rmax}}{2^{\left\lfloor\ddot p_1\right\rfloor+1}}}
				\leq
				b\sqrt{g_{n,b}^{\delta,\Rmax}} <
				\nu\rho^{\tilde h_1} \stackrel{\textbf{(a)}}{\leq} \nu\rho^{\bar h_1} \leq \nu\rho^{\left\lfloor\bar h_1\right\rfloor}
				\end{equation}
				where \textbf{(a)} is because of the following reasoning.    As we have $	\frac{\hmax\nu^2\rho^{2\tilde h_1}}{2\tilde h^2_1b^2g_{n,b}^{\delta,\Rmax}\discount^{2\tilde h_1}}=C\kappa^{\tilde h_1}$ and $\frac{\nu^2\rho^{2\tilde h_1}}{ b^2g_{n,b}^{\delta,\Rmax}} \geq 1$, then, 
				$	\frac{\hmax}{2\tilde h^2_1\discount^{2\tilde h_1}}\leq C\kappa^{\tilde h_1}$. From the inequality $	\frac{\hmax}{2\tilde h^2_1}\leq C\kappa^{\tilde h_1}\discount^{2\tilde h_1}$ and the fact that $\bar h_1$ corresponds to the case of equality  $	\frac{\hmax}{2\bar h^2_1}=C\kappa^{\bar h_1}\discount^{2\bar h_1}$, we deduce that $\bar h_1\leq \tilde h_1$, since the left term of the inequality decreases with $h$ while the right term increases (as $\discount^2\kappa\geq 1$). Having $\bar h_1\leq \tilde h_1$ gives $\rho^{\bar h_1}\geq \rho^{\tilde h_1}$.
				
				Moreover,  the term $\log(\discount^2\kappa)$ of $\bar h_1$ could  lead to think that we could potentially obtain a better rate that in the deterministic case where the term is $\log(\discount^2\kappa)$. However this is not true because as $\bar h_1$ is the solution of 	$\bar h_1 = 	\frac{\hmax}{2\bar h^2_1\discount^{2\bar h_1}}=C\kappa^{\bar h_1}$ and we have by assumption in this case $\bar h_1 \discount{2\bar h_1 }\geq 1$ then $\bar h_1\leq h_3$  where $h_3$ is defined as the solution of $h_3 = 	\frac{\hmax}{2\bar h_3\discount^{2 h_3}}=C\kappa^{ h_3}$. We have  $h_3 =\frac{1}{\log(\kappa)}\lambertW\left(\log(\kappa)\frac{\hmax }{2C}\right)$. Therefore one can see that this rate is not better that the deterministic rates.
				
		\textbf{Case 1.2) Low noise regime 2 $\frac{\nu^2\rho^{2\tilde h_1}}{\discount^{2\tilde h_1}\tilde h_1 b^2g_{n,b}^{\delta,\Rmax}} \geq 1$}\\
	
	We denote  $\ddot h = \hat h_1$ and $\ddot p = \hat p_1$ where $\hat h$ and $\hat p$ verify,    
	\begin{equation}\label{eq:barbarSto564d}
	\frac{\hmax}{2\hat h_1}=C\kappa^{\hat h_1}
	\quad \text{and} \quad \hat p_1 = \max(0,\tilde p_1)).  
	\end{equation}
	
	\[\hat h_1 = \frac{1}{\log(\kappa)}\lambertW\left(\frac{\hmax \log(\kappa)}{2C}\right)\]
	
	Using standard properties of the $\lfloor\cdot\rfloor$ function, 
	we have 
	\begin{equation}\label{eq:barbarSto57d}
	b\sqrt{\frac{g_{n,b}^{\delta,\Rmax}}{2^{\left\lfloor\ddot p_1\right\rfloor+1}}}
	\leq
	b\sqrt{\frac{g_{n,b}^{\delta,\Rmax}}{2^{\tilde p_1}}} <
	\nu\rho^{\tilde h_1} \stackrel{\textbf{(a)}}{\leq} \nu\rho^{\hat h_1} \leq \nu\rho^{\left\lfloor\hat h_1\right\rfloor}
	\end{equation}
	where \textbf{(a)} is because of the following reasoning.    As we have $	\frac{\hmax\nu^2\rho^{2\tilde h_1}}{2\tilde h^2_1b^2g_{n,b}^{\delta,\Rmax}\discount^{2\tilde h_1}}=C\kappa^{\tilde h_1}$ and $\frac{\nu^2\rho^{2\tilde h_1}}{\discount^{2\tilde h_1}\tilde h_1 b^2g_{n,b}^{\delta,\Rmax}} \geq 1$, then, 
	$	\frac{\hmax}{2\tilde h_1} \leq C\kappa^{\tilde h_1}$. From the inequality 	$	\frac{\hmax}{2\tilde h_1} \leq C\kappa^{\tilde h_1}$ and the fact that $\hat h_1$ corresponds to the case of equality  $		\frac{\hmax}{2\hat h_1}=C\kappa^{\hat h_1}$, we deduce that $\hat h_1\leq \tilde h_1$, since the left term of the inequality decreases with $h$ while the right term increases . Having $\hat h_1\leq \tilde h_1$ gives $\rho^{\hat h_1}\geq \rho^{\tilde h_1}$.

	\textbf{Case 2) $\discount^2\kappa\leq 1$}\\
	\textbf{Case 2.1) Noise regime $\frac{\nu^2\rho^{2\tilde h}}{\discount^{2\tilde h}\tilde h b^2g_{n,b}^{\delta,\Rmax}} \leq 1$}\\
\textbf{Case 2.1.1) High-noise regime}
$\frac{\nu^2\rho^{2\tilde h_2}}{ b^2g_{n,b}^{\delta,\Rmax}} \leq 1$	

In this case, we denote $\ddot h = \tilde h_2$ and $\ddot p = \tilde p_2$.
As $\frac{\nu^2\rho^{2\tilde h_2}}{ b^2g_{n,b}^{\delta,\Rmax}}	\leq 1$
by construction,     we have $\tilde p_2\geq 0$.
Using standard properties of the $\lfloor\cdot\rfloor$ function, 
we have 
\begin{equation}\label{eq:barbarSto2z}
b\sqrt{\frac{g_{n,b}^{\delta,\Rmax}}{2^{\left\lfloor\tilde p_2\right\rfloor+1}}}
\leq
b\sqrt{\frac{g_{n,b}^{\delta,\Rmax}}{2^{\tilde p_2}}} =
\nu\rho^{\tilde h_2} \leq \nu\rho^{\left\lfloor\tilde h_2\right\rfloor}
\end{equation}
\begin{align*}\label{eq:barbarSto}
&	\text{    and,~~ }\frac{\hmax}{\left\lfloor\tilde h_2\right\rfloor  \left\lfloor\tilde h_2\right\rfloor 2^{\left\lfloor\tilde p_2\right\rfloor+1}  }  
\geq
\frac{\hmax}{\left\lfloor\tilde h_2\right\rfloor  \left\lfloor\tilde h_2\right\rfloor 2^{\tilde p_2+1}  } \\
&	\geq
\frac{\hmax}{\tilde h_2^2 2^{\tilde p_2+1}  } 
=
\frac{\hmax\nu^2\rho^{2\tilde h_2}}{2\tilde h^2_2b^2g_{n,b}^{\delta,\Rmax}}  \\
&	= C.
\end{align*}
We will verify that $\left\lfloor\ddot h\right\rfloor$ is a reachable depth by \platypoos in the sense that $\ddot h\leq \hmax$ and $\left\lfloor \ddot  p\right\rfloor\leq \left\lfloor\log_2(\hmax/(h^2\discount^{2h}))\right\rfloor$. 
As $\kappa<1$, and $\ddot h\geq 0$ we have $\kappa^{\ddot h}\geq 1$. This gives  $C\kappa^{\ddot h}\geq 1$. Finally as 
$ \frac{\hmax}{\ddot h^2  2^{\ddot p} \discount^{2\ddot h}}\geq C\kappa^{\ddot h}$, we have $\ddot h^2\discount^{2\ddot h}\leq \hmax/ 2^{\ddot p}$.

\textbf{Case 2.1.2) Low-noise regime 1} 		$\frac{\nu^2\rho^{2\tilde h_2}}{ b^2g_{n,b}^{\delta,\Rmax}} \geq 1$

We denote  $\ddot h = \bar h_2$ and $\ddot p = \bar p_2$ where $\bar h$ and $\bar p$ verify,    
\begin{equation}\label{eq:barbarSto564}
\frac{\hmax}{2\bar h^2_2}=C
\quad \text{and} \quad \bar p_2 = 0.  
\end{equation}
	Again, 	$\frac{\hmax}{2\bar h_2^2 2^{p_2}\discount^{2\bar h_2}}\geq 1$.

\[\bar h_2 = \sqrt{\frac{\hmax }{2C}}\]

Using standard properties of the $\lfloor\cdot\rfloor$ function, 
we have 
\begin{equation}\label{eq:barbarSto57}
b\sqrt{\frac{g_{n,b}^{\delta,\Rmax}}{2^{\left\lfloor\ddot p_2\right\rfloor+1}}}
\leq
b\sqrt{g_{n,b}^{\delta,\Rmax}} <
\nu\rho^{\tilde h_1} \stackrel{\textbf{(a)}}{\leq} \nu\rho^{\bar h_2} \leq \nu\rho^{\left\lfloor\bar h_2\right\rfloor}
\end{equation}
where \textbf{(a)} is because of the following reasoning.    As we have $\frac{\hmax\nu^2\rho^{2\tilde h_2}}{2\tilde h^2_2b^2g_{n,b}^{\delta,\Rmax}}  	= C$ and $\frac{\nu^2\rho^{2\tilde h_2}}{ b^2g_{n,b}^{\delta,\Rmax}} \geq 1$, then, 
$	\frac{\hmax}{2\tilde h^2_2}\leq C$. From the inequality $	\frac{\hmax}{2\tilde h^2_2}\leq C$ and the fact that $\bar h_2$ corresponds to the case of equality  $	\frac{\hmax}{2\bar h^2_2}=C$, we deduce that $\bar h_2\leq \tilde h_2$, since the left term of the inequality decreases with $h$ while the right term stays constant. Having $\bar h_2\leq \tilde h_2$ gives $\rho^{\bar h_2}\geq \rho^{\tilde h_2}$.

\textbf{Case 2.2) Low noise regime 2 $\frac{\nu^2\rho^{2\tilde h}}{\discount^{2\tilde h}\tilde h b^2g_{n,b}^{\delta,\Rmax}} \geq 1$}\\

We denote  $\ddot h = \hat h_2$ and $\ddot p = \hat p_2$ where $\hat h$ and $\hat p$ verify,    
\begin{equation}\label{eq:barbarSto5614}
\frac{\hmax}{\hat h_2}=C\kappa^{\hat h_2}
\end{equation}

\[\hat h_2 = \frac{1}{\log(\kappa)}\lambertW\left(\frac{\hmax \log(\kappa)}{C}\right)\]

By construction, we have $\tilde h_2\leq \tilde h$. 
We set
\begin{equation}\label{eq:barbarSto564cd}
 \hat p_2 = \max(0,\tilde p)).  
\end{equation}

Using standard properties of the $\lfloor\cdot\rfloor$ function, 
we have 
\begin{equation}\label{eq:barbarSto57j}
b\sqrt{\frac{g_{n,b}^{\delta,\Rmax}}{2^{\left\lfloor\ddot p_2\right\rfloor+1}}}
\leq
b\sqrt{\frac{g_{n,b}^{\delta,\Rmax}}{2^{\tilde p}}} =
\nu\rho^{\tilde h} \stackrel{\textbf{(a)}}{\leq} \nu\rho^{\hat h_2} \leq \nu\rho^{\left\lfloor\hat h_2\right\rfloor}
\end{equation}
where \textbf{(a)} is because of the following reasoning.    As we have $	\frac{\hmax\nu^2\rho^{2\tilde h}}{\tilde h^2b^2g_{n,b}^{\delta,\Rmax}\discount^{2\tilde h}}=C\kappa^{\tilde h}$
and $\frac{\nu^2\rho^{2\tilde h}}{\discount^{2\tilde h}\tilde h b^2g_{n,b}^{\delta,\Rmax}} \geq 1$, then, 
$	\frac{\hmax}{\tilde h} \leq \kappa^{\tilde h}$. From the inequality 	$	\frac{\hmax}{\tilde h} \leq C\kappa^{\tilde h}$ and the fact that $\hat h_2$ corresponds to the case of equality  $		\frac{\hmax}{2\hat h_2}=C\kappa^{\hat h_2}$, we deduce that $\hat h_2\leq \tilde h$, since the left term of the inequality decreases with $h$ while the right term increases. Having $\hat h_2\leq \tilde h$ gives $\rho^{\hat h_2}\geq \rho^{\tilde h}$.

\textbf{Step 3}
	Given these particular definitions of $\ddot h$ and $\ddot p$ in two distinct cases we now bound the regret.

	
	%


	We always have $\depthOp_{\hmax,\left\lfloor\ddot p\right\rfloor}\geq 0$. If $\ddot h \geq 1$, as discussed above  $\left\lfloor\ddot h \right\rfloor \in \left[\hmax\right]$,
	therefore $\depthOp_{\hmax,\left\lfloor\ddot p\right\rfloor}\geq \depthOp_{\left\lfloor\ddot h\right\rfloor,\left\lfloor\ddot p\right\rfloor},
	$ as $\depthOp_{\cdot,\left\lfloor p\right\rfloor}$ is increasing for all $p\in[0,\pmax ]$.
	Moreover on event~$\xi$, and for the cases 1.1.1, 1.1.2, 2.1.1 and 2.1.2 described above, 
	$ \depthOp_{\left\lfloor\ddot h\right\rfloor,\left\lfloor\ddot p\right\rfloor} =  \left\lfloor\ddot h\right\rfloor$ because of Lemma~\ref{lem:hstarSto2} (Case 2))  which assumptions on $\left\lfloor\ddot h\right\rfloor$ and $\left\lfloor\ddot p\right\rfloor$ are verified in each cases as detailed above and, in general,  $\left\lfloor\ddot h \right\rfloor \in \left[\left\lfloor\hmax/2^{\ddot p}\right\rfloor\right]$ and  $\left\lfloor\ddot p\right\rfloor\in [0: \pmax]$. So, for the aforementioned cases, we have $\depthOp_{\left\lfloor\hmax/2^{\ddot p}\right\rfloor,\left\lfloor\ddot p\right\rfloor}  \geq  \left\lfloor\ddot h \right\rfloor$.
	Very similarly cases 1.2 and 2.2. lead to $\depthOp_{\left\lfloor\hmax/2^{\ddot p}\right\rfloor,\left\lfloor\ddot p\right\rfloor}  \geq  \left\lfloor\ddot h \right\rfloor$ by using Lemma~\ref{lem:hstarSto2} (Case 1)).

	We bound the regret now discriminating on whether or not the event $\xi$ holds. We have 
	\begin{align*}
		r_\timeHorizon &\leq (1-\delta) \left(\nu\rho^{\depthOp_{\hmax,\ddot p}+1}+2\frac{b}{1-\discount^2}\sqrt{\frac{g_{n,b}^{\delta,\Rmax}}{2\hmax}}\right) 
		+\delta\times \frac{\Rmax}{1-\discount}\\
		&\leq
		\nu\rho^{\depthOp_{\hmax,\ddot p}+1}+2\frac{b}{1-\discount^2}\sqrt{\frac{g_{n,b}^{\delta,\Rmax}}{2\hmax}} +\frac{4b}{\sqrt{\timeHorizon}}\\
		&\leq
		\nu\rho^{\depthOp_{\hmax,\ddot p}+1}+6\frac{b}{1-\discount^2}\sqrt{\frac{g_{n,b}^{\delta,\Rmax}}{\hmax}}\cdot
	\end{align*}
	We can now bound the regret in the two regimes.
	\paragraph{Case 1) $\discount^2\kappa\geq 1$ :}

Note that in this case then $ \kappa> 1$.
We subdivide this case into multiple subcases:

\textbf{Case 1.1) Noise regime $\frac{\nu^2\rho^{2\tilde h_1}}{\discount^{2\tilde h_1}\tilde h_1 b^2g_{n,b}^{\delta,\Rmax}} \leq 1$}\\
\textbf{Case 1.1.1) High-noise regime}
	In general,we have
	\begin{align*}
		r_\timeHorizon
		&~\leq~ \nu\rho^{ 
			\frac{2}{\log(\discount^2\kappa/\rho^2)}\lambertW\left(\log(\discount^2\kappa/\rho^2)/2\sqrt{\frac{\discount^2\nu^2\hmax }{2Cb^2g_{n,b}^{\delta,\Rmax}}}\right)}+6\frac{b}{1-\discount}\sqrt{\frac{g_{n,b}^{\delta,\Rmax}}{\hmax}}\cdot
	\end{align*}
	
	Moreover, as proved by~\citet{Hoorfar08IO}, the Lambert $W(x)$ function verifies for $x\geq e$,
	$\lambertW(x)\geq \log\left(\frac{x}{\log x}\right)$.
	Therefore, if $\log(\discount^2\kappa/\rho^2)/2\sqrt{\frac{\discount^2\nu^2\hmax }{2Cb^2g_{n,b}^{\delta,\Rmax}}}>e$ we have,
	%
	\begin{align*}
	&	r_\timeHorizon-6\frac{b}{1-\discount}\sqrt{\frac{g_{n,b}^{\delta,\Rmax}}{\hmax}}\\
		&\leq 
		\nu \rho^{ \frac{2}{\log(\discount^2\kappa/\rho^2)}
			\left(
			\log\left(\frac{\log(\discount^2\kappa/\rho^2)/2\sqrt{\frac{\discount^2\nu^2\hmax }{2Cb^2g_{n,b}^{\delta,\Rmax}}}}{
				\log\left(\log(\discount^2\kappa/\rho^2)/2\sqrt{\frac{\discount^2\nu^2\hmax }{2Cb^2g_{n,b}^{\delta,\Rmax}}}    \right)
			}\right)
			\right)
		} \\
		&= \nu e^{ \frac{1}{\log(\discount^2\kappa/\rho^2)}
			\left(
	\log\left(\frac{\log^2(\discount^2\kappa/\rho^2)/2\frac{\discount^2\nu^2\hmax }{2Cb^2g_{n,b}^{\delta,\Rmax}}}{
	\log^2\left(\log(\discount^2\kappa/\rho^2)/2\sqrt{\frac{\discount^2\nu^2\hmax }{2Cb^2g_{n,b}^{\delta,\Rmax}}}    \right)
}\right)
			\right)
			\log(\rho)}\\
&		= \nu \left(\frac{\log^2(\discount^2\kappa/\rho^2)/2\frac{\discount^2\nu^2\hmax }{2Cb^2g_{n,b}^{\delta,\Rmax}}}{
	\log^2\left(\log(\discount^2\kappa/\rho^2)/2\sqrt{\frac{\discount^2\nu^2\hmax }{2Cb^2g_{n,b}^{\delta,\Rmax}}}    \right)
}\right)^{\frac{\log(\rho)}{\log(\discount^2\kappa/\rho^2)}}.
	\end{align*}
	%
	%

		\textbf{Case 1.1.2) Low-noise regime 1} 
$\frac{\nu^2\rho^{2\tilde h_1}}{ b^2g_{n,b}^{\delta,\Rmax}} \geq 1$	

	\begin{align*}
r_\timeHorizon
&~\leq~ \nu\rho^{\frac{2}{\log(\discount^2\kappa)}\lambertW\left(\log(\discount^2\kappa)/2\sqrt{\frac{\hmax }{2C}}\right)}+6\frac{b}{1-\discount}\sqrt{\frac{g_{n,b}^{\delta,\Rmax}}{\hmax}}\cdot
\end{align*}

Moreover, as proved by~\citet{Hoorfar08IO}, the Lambert $W(x)$ function verifies for $x\geq e$,
$\lambertW(x)\geq \log\left(\frac{x}{\log x}\right)$.
Therefore, if $\log(\discount^2\kappa)/2\sqrt{\frac{\hmax }{2C}}>e$ we have,
%
\begin{align*}
&	r_\timeHorizon-6\frac{b}{1-\discount}\sqrt{\frac{g_{n,b}^{\delta,\Rmax}}{\hmax}}\\
&\leq 
\nu \rho^{ \frac{2}{\log(\discount^2\kappa)}
	\left(
	\log\left(\frac{\log(\discount^2\kappa)/2\sqrt{\frac{\hmax }{2C}}}{
		\log\left(\log(\discount^2\kappa)/2\sqrt{\frac{\hmax }{2C}}    \right)
	}\right)
	\right)
} \\
&= \nu e^{ \frac{1}{\log(\discount^2\kappa)}
	\left(
	\log\left(\frac{\log^2(\discount^2\kappa)/2\frac{\hmax }{2C}}{
		\log^2\left(\log(\discount^2\kappa)/2\sqrt{\frac{\hmax }{2C}}   \right)
	}\right)
	\right)
	\log(\rho)}\\
&		= \nu \left(\frac{\log^2(\discount^2\kappa)/2\frac{\hmax }{2C}}{
	\log^2\left(\log(\discount^2\kappa)/2\sqrt{\frac{\hmax }{2C}}   \right)
}\right)^{\frac{\log(\rho)}{\log(\discount^2\kappa)}}.
\end{align*}
	We have $6b\sqrt{\frac{g_{n,b}^{\delta,\Rmax}}{\hmax}}\leq 
6\frac{\nu\rho^{\tilde h_1}}{\sqrt{g_{n,b}^{\delta,\Rmax}}} \sqrt{\frac{g_{n,b}^{\delta,\Rmax}}{\hmax}}\leq
6\nu\rho^{\tilde h_1}\leq
6\nu\rho^{\bar h_1}$.

Therefore
$r_\timeHorizon
\leq
\nu\rho^{\depthOp_{\hmax,\bar p}+1} +6b\sqrt{\frac{g_{n,b}^{\delta,\Rmax}}{\hmax}}
\leq
7\nu\rho^{\bar h_1} $.

	\textbf{Case 1.2) Low noise regime 2 $\frac{\nu^2\rho^{2\tilde h_1}}{\discount^{2\tilde h_1}\tilde h_1 b^2g_{n,b}^{\delta,\Rmax}} \geq 1$}\\

\begin{align*}
	r_\timeHorizon-6\frac{b}{1-\discount}\sqrt{\frac{g_{n,b}^{\delta,\Rmax}}{\hmax}}
\leq  \nu \left(\frac{\frac{\hmax \log(\kappa)}{2C}}{
	\log\left(\frac{\hmax \log(\kappa)}{2C}   \right)
}\right)^{\frac{\log(\rho)}{\log(\kappa)}}.
\end{align*}

Moreover if  $\frac{\nu^2\rho^{2\tilde h_1}}{ b^2g_{n,b}^{\delta,\Rmax}} \geq 1$, we have $6b\sqrt{\frac{g_{n,b}^{\delta,\Rmax}}{\hmax}}\leq 
6\frac{\nu\rho^{\tilde h_1}}{\sqrt{g_{n,b}^{\delta,\Rmax}}} \sqrt{\frac{g_{n,b}^{\delta,\Rmax}}{\hmax}}\leq
6\nu\rho^{\tilde h_1}\leq
6\nu\rho^{\hat h_1}$.

Therefore
$r_\timeHorizon
\leq
\nu\rho^{\depthOp_{\hmax,\bar p}+1} +6b\sqrt{\frac{g_{n,b}^{\delta,\Rmax}}{\hmax}}
\leq
7\nu\rho^{\hat h_1} $.

	\textbf{Case 2) $\discount^2\kappa\leq 1$}\\
	\textbf{Case 2.1) Noise regime $\frac{\nu^2\rho^{2\tilde h}}{\discount^{2\tilde h}\tilde h b^2g_{n,b}^{\delta,\Rmax}} \leq 1$}\\
	\textbf{Case 2.1.1) High-noise regime}
	$\frac{\nu^2\rho^{2\tilde h_2}}{ b^2g_{n,b}^{\delta,\Rmax}} \leq 1$

	\begin{align*}
		r_\timeHorizon-6\frac{b}{1-\discount}\sqrt{\frac{g_{n,b}^{\delta,\Rmax}}{\hmax}}
	\leq  \nu \left(\frac{\log^2(1/\rho^2)/2 \frac{\nu^2\hmax }{2Cb^2g_{n,b}^{\delta,\Rmax}}}{
		\log^2\left(\log(1/\rho^2)/2 \sqrt{\frac{\nu^2\hmax }{2Cb^2g_{n,b}^{\delta,\Rmax}}}  \right)
	}\right)^{-\frac{1}{2}}.
	\end{align*}

	\textbf{Case 2.1.2) Low-noise regime 1} 		$\frac{\nu^2\rho^{2\tilde h_2}}{ b^2g_{n,b}^{\delta,\Rmax}} \geq 1$
	
	Here with a similar reasoning as in the case 1.1.2) we have
	$r_\timeHorizon
	\leq
	7\nu\rho^{\bar h_1} \leq 7\nu \rho^{\sqrt{\frac{\hmax }{2C}}}$.

	\textbf{Case 2.2) Low noise regime 2 $\frac{\nu^2\rho^{2\tilde h}}{\discount^{2\tilde h}\tilde h b^2g_{n,b}^{\delta,\Rmax}} \geq 1$}\\

	\begin{align*}
		r_\timeHorizon-6\frac{b}{1-\discount}\sqrt{\frac{g_{n,b}^{\delta,\Rmax}}{\hmax}}
	\leq  \nu \left(\frac{\frac{\hmax \log(\kappa)}{C}}{
		\log\left(\frac{\hmax \log(\kappa)}{C}   \right)
	}\right)^{\frac{\log(\rho)}{\log(\kappa)}}.
	\end{align*}

	Moreover if  $\frac{\nu^2\rho^{2\tilde h}}{ b^2g_{n,b}^{\delta,\Rmax}} \geq 1$, we have $6b\sqrt{\frac{g_{n,b}^{\delta,\Rmax}}{\hmax}}\leq 
	6\frac{\nu\rho^{\tilde h}}{\sqrt{g_{n,b}^{\delta,\Rmax}}} \sqrt{\frac{g_{n,b}^{\delta,\Rmax}}{\hmax}}\leq
	6\nu\rho^{\tilde h}\leq
	6\nu\rho^{\hat h_2}$.
	
	Therefore
	$r_\timeHorizon
	\leq
	\nu\rho^{\depthOp_{\hmax,\bar p}+1} +6b\sqrt{\frac{g_{n,b}^{\delta,\Rmax}}{\hmax}}
	\leq
	7\nu\rho^{\hat h_1} $.

	Moreover if $\kappa=1$ then 	$r_\timeHorizon
	\leq  7\nu \rho^{\frac{\hmax }{C}}$
\end{proof}

\section{Use of the budget}
\begin{remark}
	The algorithm can be made anytime and agnostic to  $\timeHorizon$ using the standard doubling trick.
\end{remark}
\begin{remark}[More efficient use of the budget]\label{rem:effibud}
	Because of the use of the floor functions $\left\lfloor \cdot\right\rfloor$, the budget used in practice can be significantly smaller than $\timeHorizon$. While this only affects numerical constants in the bounds, in practice, it can noticeably influence the performance. Therefore one should consider, for instance, having $\hmax$ replaced by $c\times\hmax$ with $c$ been the largest number such that the budget is still smaller than $n$. Additionally,
	the use of the budget $\timeHorizon$ could be slightly optimized by taking into account that the necessary number of pulls at depth $h$ cannot be larger than $K^h$.
\end{remark}
\end{document}